\relax
\documentclass[letterpaper]{article} 
\usepackage{aaai20}  
\usepackage{times}  
\usepackage{helvet} 
\usepackage{courier}  
\usepackage[hyphens]{url}  
\usepackage{graphicx} 
\urlstyle{rm} 
\usepackage{graphicx}  

\usepackage{algorithm}
\usepackage{algorithmicx}
\usepackage{algpseudocode}
\usepackage{amsfonts}
\usepackage{amsmath}
\usepackage{amsthm}
\usepackage{booktabs}
\usepackage{subfigure}
\usepackage{float}
\usepackage{color}
\newtheorem{Def}{Definition}
\newtheorem{The}{Theorem}
\newtheorem{Rem}{Remark}
\newtheorem*{DP}{Differential Privacy}

\frenchspacing  
\setlength{\pdfpagewidth}{8.5in}  
\setlength{\pdfpageheight}{11in}  

\pdfinfo{
/Title (AAAI Press Formatting Instructions for Authors Using LaTeX -- A Guide)
/Author (AAAI Press Staff, Pater Patel Schneider, Sunil Issar, J. Scott Penberthy, George Ferguson, Hans Guesgen)
} 

\setcounter{secnumdepth}{0} 

%
\setlength\titlebox{2.5in} 
\title{Weighted Distributed Differential Privacy ERM: Convex and Non-convex}


\author{\Large \textbf{Yilin Kang, Yong Liu, Weiping Wang}\\ 
}

 \begin{document}

\maketitle

\begin{abstract}
Distributed machine learning is an approach allowing different parties to learn a model over all data sets without disclosing their own data.
In this paper, we propose a weighted distributed differential privacy (WD-DP) empirical risk minimization (ERM) method to train a model in distributed setting, considering different weights of different clients.
We guarantee differential privacy by gradient perturbation, adding Gaussian noise, and advance the state-of-the-art on gradient perturbation method in distributed setting.
By detailed theoretical analysis, we show that in distributed setting, the noise bound and the excess empirical risk bound can be improved by considering different weights held by multiple parties.
Moreover, considering that the constraint of convex loss function in ERM is not easy to achieve in some situations, we generalize our method to non-convex loss functions which satisfy Polyak-Lojasiewicz condition.
Experiments on real data sets show that our method is more reliable and we improve the performance of distributed differential privacy ERM, especially in the case that data scale on different clients is uneven.
\end{abstract}

\section{Introduction}
\noindent In recent years, machine learning has been widely used in many fields such as data mining and pattern recognition \cite{1,24,25,23}.
Because of the need of data for training machine learning algorithms, tremendous data is collected by individuals and companies.
As a result, sensitive information disclosure is becoming a huge problem.
In addition to data itself, model parameters trained by data can reveal sensitive information in an undirect way as well.

To solve the problems mentioned above, differential privacy \cite{3} is proposed to preserve privacy in machine learning and has been applied to principal component analysis (PCA) \cite{19,28,35}, regression \cite{4,20,8}, boosting \cite{18,29}, deep learning \cite{5,6,27} and other fields.

There are mainly three methods to achieve differential privacy: output perturbation \cite{9,7,16,13}, objective perturbation \cite{4,14} and gradient perturbation \cite{16,6,15}.
Among them, gradient perturbation is the most popular method because it can be applied to any gradient descent method, which makes it general, and it not only protects the results but also the gradients, which makes it more reliable.

With the development of organizations' corporation, multiple parties' desire to train models by combining data is becoming stronger, such as biomedicine and financial fraud detection.
In these situations, different parties want to use all the data without disclosing their own data, which brings more press on privacy preserving.
Moreover, the number of data instances owned by different parties always varies greatly, making the performance degrade significantly.

\begin{table*}[t]
\centering
\caption{Comparison between our method and other methods on noise bound and excess empirical risk bound}\smallskip
\resizebox{0.95\textwidth}{!}{
\begin{tabular}{c|c|c|c|c}
\toprule
 & Gaussian Noise Bound & Excess Empirical Risk Bound & Distributed & Non-convex\\
\midrule
\citeauthor{7} \shortcite{7} & None & $O\left(\frac{(m-1)^2(\lambda+1)}{n_{(1)}^2\lambda^2}+\frac{p^2(\lambda+1)}{n_{(1)}^2\epsilon^2\lambda^2}\log^2(\frac{p}{\delta})+
\frac{p(m-1)(\lambda+1)}{n_{(1)}^2\epsilon\lambda^2}\log(\frac{p}{\delta})\right)$ & Yes & No \\
\citeauthor{8} \shortcite{8} & $O\left(\frac{G^2T\log(1/\delta)}{m^2n_{(1)}^2\epsilon^2}\right)$ & $O\left(\frac{pG^2L\log^2(mn_{(1)})\log(1/\delta)}{m^2n_{(1)}^2\lambda^2\epsilon^2}\right)$ & Yes & No \\
\citeauthor{13} \shortcite{13} & $O\left(\frac{pG^2\log(n/\delta)\log(1/\delta)+G^2\epsilon^2}{\epsilon^2}\right)$ & $O\left(\frac{G\sqrt{p\log(n/\delta)\log(1/\delta)}D}{n\epsilon}\right)$ & No & Yes \\
\citeauthor{11} \shortcite{11} & $O\left(\frac{G^2T\ln(1/\delta)}{n^2\epsilon^2}\right)$ & $O\left(\frac{pG^2\log^2(n)\ln(1/\delta)}{n^2\epsilon^2}\right)$ & No & Yes \\
Our Method WD-DP & $O\left(\frac{G^2T\ln(1/\delta)}{{\color{blue}\boldsymbol{n^2}}\epsilon^2}\right)$ & $O\left(\frac{pG^2{\color{blue}\boldsymbol{\log(n)}}\ln(1/\delta)}{{\color{blue}\boldsymbol{n^2}}\epsilon^2}\right)$ & Yes & Yes \\
\bottomrule
\end{tabular}
}
\label{table1}
\end{table*}

Distributed machine learning is an approach to solve multiple-party learning problem.
Among many distributed learning strategies, \textit{divide and conquer} is simple and effective.
It preserves privacy by minimizing information communications, which has caused widespread concern of researchers.
\citeauthor{7} \shortcite{7} proposed the first distributed differential privacy machine learning method, whose privacy is preserved by output perturbation.
\citeauthor{8} \shortcite{8} introduced differential privacy distributed methods using output perturbation and gradient perturbation, achieving better performance.
\citeauthor{36} \shortcite{36} proposed federated learning method to address distributed machine learning problem, without privacy preserving.
Based on \cite{36}, \citeauthor{15} \shortcite{15} proposed a method to guarantee that whether a client participants in federated learning cannot be inferred, preserving privacy on federated learning to some extent.
\citeauthor{37} \shortcite{37} proposed a user-level differential privacy method on training LSTM, applied on language models.
However, among the work mentioned above, only federated learning based methods consider about weights of parties when aggregating parameters.
Considering that in real scenarios, data scale on different clients is always uneven, simply averaging without weights leads worse performance.
Moreover, most work assumes that loss function is convex in theoretical analysis, but not considers about the non-convex condition.

To address the problems above, in this paper, we propose Weighted Distributed Differential Privacy (WD-DP) ERM method based on divide and conquer distributed method, applying gradient perturbation by adding Gaussian noise to guarantee $(\epsilon,\delta)$-differential privacy.
We consider about different weights owned by different parties instead of simply averaging when aggregating models' parameters to reduce the negative impact caused by uneven data scale, which leads a better noise bound and excess empirical risk bound theoretically.
Experiments on real data sets show that the performance of our method is much better than the method proposed in \cite{8}, the best method in distributed differential privacy ERM we know.
Moreover, considering the fact that most previous theoretical analysis on differential privacy ERM is based on convex functions and this constraint is not easy to guarantee in some situations, first, we improve the proof process of the excess empirical risk bound proposed by \citeauthor{11} \shortcite{11} in centralized setting and then generalize our method to non-convex functions which satisfy the Polyak-Lojasiewicz condition in distributed setting.

The rest of the paper is organized as follows.
We introduce some related work on distributed differential privacy ERM methods and centralized differential privacy ERM under non-convex condition in Section 2.
We propose our method WD-DP in detail and then analyze the $(\epsilon,\delta)$-differential privacy of our method in Section 3.
We give the theoretical analysis of the excess empirical risk bound of our method on both convex and non-convex conditions in Section 4.
We present the experimental results in Section 5.
Finally, we conclude the paper in Section 6.

\section{Related Work}
\noindent In this section, we first introduce some related work over distributed differential privacy machine learning.
Then, we introduce some work on centralized differential privacy ERM under non-convex condition.

\subsection{Distributed Setting}
\citeauthor{7} \shortcite{7} proposed a distributed privacy preserving protocol whose objective function has a regularization term $\lambda N(\theta)$, where $\theta$ is the model with $p$ parameters.
Different parties train models locally and interact with curator to construct additive shares of a perturbed aggregated model.
This work guarantees differential privacy by output perturbation, adding Laplace noise.
In this work, parameters' delivery relies on homomorphic encryption \cite{2}, which is expensive on computation.

\citeauthor{8} \shortcite{8} introduced a distributed learning method, combining privacy with secure multi-party computation (SMC) \cite{17}.
This work guarantees differential privacy by output perturbation and gradient perturbation, adding noise within a SMC.
The noise bound and excess empirical risk bound are better than in \cite{7} by assuming the loss function $\ell(\cdot)$ is $G$-Lipschitz and $L$-smooth.
Particularly, in this method, parties aggregate parameters by simply averaging.
As a result, if the number of data instances on parties is not even, the performance will decrease rapidly.
Unfortunately, in real scenarios, data scale on clients is always uneven.

\citeauthor{36} \shortcite{36} proposed a decentralized method to solve the problem of distributed machine learning, Federated Learning, and applied it on deep networks.
This method leaves training data distributed on different parties and learns a shared model by aggregating local models.
This work considers about different weights of different parties when aggregating models, but does not consider much about privacy preserving, without theoretical analysis on privacy or utility.

In the method WD-DP, proposed by this paper, by considering about different weights held by different parties when aggregating parameters, we achieve better performance both theoretically and practically, no matter data scale is even or not.
So, our method is more general and adapt to most scenarios.
The comparison between our method and other methods mentioned above on noise bound and excess empirical risk bound is given in Table 1.

It can be observed in Table 1 that without considering about regularization term, noise bound and excess empirical risk bound of our method are better than the best distributed method we have known, proposed by \citeauthor{8} \shortcite{8}, by a factor of $\frac{(mn_{(1)})^2}{n^2}$ and $\frac{(mn_{(1)})^2\log(n)}{(\log(mn_{(1)})n)^2}$, respectively, where $m$ is the number of parties, $n_{(1)}$ denotes the smallest size of data set owned by parties and $n$ represents the total number of data instances over all data sets.
Particularly, our method is much better than the method proposed by \citeauthor{8} \shortcite{8} when data scale is uneven on clients and remains the same performance under even data scale.
In other words, the method mentioned above is a special case of our method WD-DP under average setting.
And obviously, the excess empirical risk bound of our method is much tighter than which in \cite{7}.
It is worth emphasizing that although our method is proposed under distributed setting, it achieves almost the same theoretical performance as centralized methods.

\subsection{Non-convex ERM}
\citeauthor{13} \shortcite{13} proposed Random Round Private SGD, guaranteeing $(\epsilon,\delta)$-differential privacy over non-convex function.
It is the first theoretical result on centralized non-convex differentially private ERM problem.
In this method, the excess empirical risk bound is proportional to $D$, the upper bound of the $\ell_2$ norm of the model's parameters.

\citeauthor{11} \shortcite{11} gave theoretical analyses on noise bound and excess empirical risk bound of gradient perturbation under non-convex condition in centralized setting, assuming the iteration number is $T$.
However, the proof process on the excess empirical risk bound of this method can be better, leading a tighter excess empirical risk bound.

\citeauthor{21} \shortcite{21} studied the centralized differential privacy ERM problem with non-convex loss functions and gave upper bounds for the utility.
This work considers the problem in both low and high dimensional space and shows that for some special non-convex loss functions, the utility can be improved to a level similar to convex ones.

In this paper, first, we improve the proof process on excess empirical risk bound in \cite{11}.
Then, considering there is no previous theoretical analysis over distributed non-convex differential private ERM, we extend this method to distributed setting in which loss function is not constrained convex.
The comparison between our method and these centralized methods under non-convex condition on noise bound and excess empirical risk bound is given in Table 1.

It can be observed in Table 1 that by improving the proof process proposed by \citeauthor{11} \shortcite{11}, our excess empirical risk bound is tighter than before by a factor of $\log(n)$.
Meanwhile, considering the parameter $D$ is hard to control, our method is more reliable than which proposed by \citeauthor{13} \shortcite{13}, with a tighter noise bound.

\section{WD-DP: Weighted Distributed Differential Privacy Empirical Risk Minimization}
In this section, we first introduce some basic definitions and the Empirical Risk Minimization in distributed setting.
Then, we propose our method WD-DP in detail and give theoretical analysis of $(\epsilon,\delta)$-DP over our algorithm.

Given $d$-dimensional vector $\mathbf{x}$=$[x_1,x_2,...,x_d]^\top$, denote its $\ell_2$-norm as $\left\| \mathbf{x}  \right\|$=$(\sum_{i=1}^{d} \vert x_i \vert^2)^{\frac{1}{2}}$.
$\tilde{O}(\cdot)$ is similar to $O(\cdot)$, but hiding factors polynomial in $\log n$ and $\log(1/\delta)$.
Denote the probability distribution of data as $\mathcal{D}^n$, for two databases $D,D' \in \mathcal{D}^n$ differing by one single element, they are denoted as $D \sim D'$, called \textit{adjacent databases}.

\begin{Def}
\cite{9} A randomized function $\mathcal{A} : \mathcal{D}^n \rightarrow \mathbb{R}^p$ is $(\epsilon,\delta)$-differential privacy ($(\epsilon,\delta)$-DP) if
\begin{equation*}
\mathbb{P}\left[\mathcal{A}(D) \in S\right] \leq e^\epsilon \mathbb{P}[\mathcal{A}(D') \in S] + \delta,
\end{equation*}
where $S \in$ range($\mathcal{A}$) and $p$ is the number of parameters.
\end{Def}
According to the definition, differential privacy requires that data sets $D,D'$ lead to similar distributions on the output of a randomized algorithm $\mathcal{A}$.
This implies that an adversary will draw essentially the same conclusions about an individual whether or not that individual’s data was used even if many records are known a priori to the adversary.

The centralized ERM objective function is defined as:
\begin{equation*}
L_D(\theta)=\frac{1}{n}\sum_{i=1}^{n} \ell(\theta,x_i,y_i),
\end{equation*}
where $(x_i,y_i)$ denotes data instance, $\ell$ is the loss function.

\subsection{Distributed Differential Privacy}
Suppose there are $m$ parties $P_1,P_2,...,P_m$, owning data sets $D_1,D_2,...,D_m$ with size $n_1,n_2,...,n_m$ respectively.
Parties train their own model $\theta^{(1)},\theta^{(2)},...,\theta^{(m)}$ locally to prevent data disclosing (in this paper, we denote model by parameters), and then their models are aggregated by a trusted third party (called server).

So, in distributed setting, considering all the parties, the objective function is:
\begin{equation}
L_D(\theta)=\frac{1}{m}\sum_{j=1}^{m}\frac{1}{n_j}\sum_{i=1}^{n_j}\ell(\theta,x_i^{(j)},y_i^{(j)}),
\end{equation}
where party $j$'s data instances are denoted as $(x_i^{(j)},y_i^{(j)})$.

By equation (1), when it comes to gradient perturbation, considering about round $t$, with learning rate $\eta$, we have the updating criteria on party $j$:
\begin{equation*}
\theta_{t+1}^{(j)}=\theta_t^{(j)}-\eta(\nabla L_{D_j}(\theta_t^{(j)})+z_t),
\end{equation*}
and the updating criteria on server after $T$ local iterations is:
\begin{equation}
\theta^{(c)}=\frac{1}{m}\sum_{j=1}^{m}\theta_{T}^{(j)},
\end{equation}
where $L_{D_j}(\theta)$ represents the objective function over party $j$, $z_t\sim\mathcal{N}(0,\sigma^2I_p)$ is Gaussian noise guaranteeing differential privacy and $\theta^{(c)}$ denotes the aggregated model on server.

\subsection{Weighted Distributed Differential Privacy}
Traditional methods use equation (2) to aggregate parameters by simply averaging.
However, this method pays more attention on data instances in small data sets, which leads worse noise bound and excess empirical risk bound.
Considering that data scale on clients in real scenarios is always uneven, simply averaging leads worse performance.

So, to solve the problem mentioned above, instead of simply averaging the parameters when aggregating models, we consider the weights of different parties related to their data sets' size, which leads updating criteria on the server to:
\begin{equation*}
\theta^{(c)}=\sum_{j=1}^{m}\frac{n_j}{n}\theta_{T}^{(j)}.
\end{equation*}

When considering about weights of different parties, data instances in different parties are paid same attention, which reduces the negative impact caused by a single bad data instance, rare but special high noise generated for guaranteeing differential privacy or uneven data scale.

Our method is detailed in Algorithm 1.
Note that in Algorithm 1, we assume that the size of data sets $n_j$ are public knowledge, like in \cite{36}.

\begin{algorithm}
    \caption{Weighted Distributed Differential Privacy ERM Method: WD-DP}
    \begin{algorithmic}[1]
        \Require $m$ parties indexed by $j$, number of local iteration rounds $T$, learning rate $\eta$
        \Function {DistributedLearning}{$m,T,\eta$}
        \State First, $m$ parties download random $\theta^{(c)}$ from the server as initialization.
        \State \textbf{Party $j$ ($j \in \{1,2,...,m\}$) executes at round $t$}:
        \State \quad $\theta_{t+1}^{(j)}=\theta_t^{(j)}-\eta(\nabla L_{D_j}(\theta_t^{(j)})+z_t)$,
        \State \quad where $z_t \sim \mathcal{N}(0,\sigma^2I_p)$.
        \State \textbf{Server executes after $T$ local rounds}:
        \State \quad $\theta^{(c)}=\sum_{j=1}^{m}\frac{n_j}{n}\theta_{T}^{(j)}$,
        \State \quad where $n=\sum_{i=1}^{m}n_i$ is the total number of data.
        \State return $\theta^{(c)}$.
        \EndFunction
    \end{algorithmic}
\end{algorithm}

\begin{DP}
\rm{
In this paper, we guarantee $(\epsilon,\delta)$-DP using Gaussian Mechanism proposed by \citeauthor{9} \shortcite{9} and moments accountant introduced by \citeauthor{6} \shortcite{6}.
}
\end{DP}

\begin{The}
In \textbf{Algorithm 1}, for $\epsilon,\delta \ge 0$, if $\ell(\theta,x,y)$ is G-Lipschitz over $\theta$ and
\begin{equation}
\sigma^2=c\frac{G^2T\ln(1/\delta)}{n^2\epsilon^2},
\end{equation}
it is $(\epsilon, \delta)$-DP for some constant $c$.
\end{The}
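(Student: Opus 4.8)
\emph{Proof idea.} The plan is to reduce the analysis to a single party and then apply the Gaussian mechanism together with the moments accountant over the $T$ local rounds. Fix adjacent databases $D \sim D'$ differing in exactly one record; that record lies in exactly one party's data set, say $D_{j_0} \sim D'_{j_0}$, while $D_j = D'_j$ for all $j \neq j_0$. Since the other parties' local trajectories are then identical under $D$ and $D'$, the released model $\theta^{(c)}$ depends on the differing record only through party $P_{j_0}$'s trajectory $\theta_0^{(j_0)},\dots,\theta_T^{(j_0)}$ and the aggregation weight $n_{j_0}/n$. By the post-processing invariance of differential privacy, it suffices to show that this weighted trajectory is $(\epsilon,\delta)$-DP.

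First I would identify each local round as a Gaussian mechanism and bound its $\ell_2$-sensitivity. Because $L_{D_{j_0}}(\theta)=\frac{1}{n_{j_0}}\sum_{i=1}^{n_{j_0}}\ell(\theta,x_i^{(j_0)},y_i^{(j_0)})$ and $\ell$ is $G$-Lipschitz in $\theta$, replacing one record perturbs $\nabla L_{D_{j_0}}(\theta)$ by at most $2G/n_{j_0}$ in $\ell_2$ norm, uniformly in $\theta$. The key step is to carry the aggregation weight through the update: party $P_{j_0}$ influences the output only via $\frac{n_{j_0}}{n}\theta_T^{(j_0)}$, so folding the factor $n_{j_0}/n$ into the recursion turns each round into a perturbed query whose sensitivity is $\frac{n_{j_0}}{n}\cdot\frac{2G}{n_{j_0}}=\frac{2G}{n}$ --- it is the total sample size $n$, not the local size $n_{j_0}$, that controls the sensitivity, which is exactly where the advantage over unweighted averaging comes from. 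Thus each round is a $p$-dimensional Gaussian mechanism with $\ell_2$-sensitivity of order $G/n$ and noise $\mathcal{N}(0,\sigma^2 I_p)$, i.e. relative noise of order $n\sigma/G$.

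Next I would compose the $T$ rounds. Viewing the local updates as a $T$-fold adaptive composition of these Gaussian mechanisms, I would bound the log-moments of the privacy loss exactly as in the moments-accountant argument of \citeauthor{6} \shortcite{6}: the $\lambda$-th log-moment of one round is $O(\lambda^2 G^2/(n^2\sigma^2))$, so after $T$ rounds it is $O(\lambda^2 G^2 T/(n^2\sigma^2))$. Converting this moment bound to an $(\epsilon,\delta)$ statement via the standard tail inequality (choosing $\lambda$ optimally), the composed mechanism is $(\epsilon,\delta)$-DP whenever $G^2T/(n^2\sigma^2)\le c'\,\epsilon^2/\ln(1/\delta)$ for an absolute constant $c'$; rearranging yields precisely $\sigma^2=c\,\frac{G^2T\ln(1/\delta)}{n^2\epsilon^2}$. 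Finally, since $P_{j_0}$ was arbitrary the same $\sigma$ works for every adjacent pair, and the server's weighted averaging step in Algorithm 1 is post-processing and hence preserves the guarantee.

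The step I expect to be the main obstacle is the sensitivity bookkeeping in the weighted setting: one must argue carefully that the weight $n_{j_0}/n$ can be propagated through the $T$ coupled updates so that the effective per-round query sensitivity is $2G/n$ rather than $2G/n_{j_0}$, while the injected noise remains at scale $\sigma$, and that the independent noise contributed by the other parties only helps. Once the per-round Gaussian mechanism is pinned down with the correct sensitivity, the moments-accountant composition and the final algebra are routine.
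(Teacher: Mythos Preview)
Your reduction to a single party via post-processing is where the argument breaks. Differential privacy is invariant under deterministic post-processing, and in particular under scaling: if you multiply the output of a Gaussian mechanism by $n_{j_0}/n$, both the $\ell_2$-sensitivity and the noise standard deviation scale by the same factor, so the privacy guarantee is unchanged. Concretely, when you ``fold the factor $n_{j_0}/n$ into the recursion,'' the per-round sensitivity does become $2G/n$, but the per-round noise becomes $\frac{n_{j_0}}{n}z_t\sim\mathcal{N}\bigl(0,(n_{j_0}/n)^2\sigma^2 I_p\bigr)$, not $\mathcal{N}(0,\sigma^2 I_p)$ as you claim. The resulting noise-to-sensitivity ratio is $n_{j_0}\sigma/(2G)$, exactly what it was before weighting, and the moments-accountant calculation then forces $\sigma^2$ to scale with $1/n_{j_0}^2$ rather than $1/n^2$. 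The obstacle you flag at the end is thus not a bookkeeping nuisance; it is insurmountable along this route, because once you invoke post-processing and look only at party $j_0$'s (scaled) trajectory, you have discarded the randomness contributed by the other $m-1$ parties, and that randomness is precisely what is needed to get the $1/n^2$ scaling. Your remark that ``the independent noise contributed by the other parties only helps'' is true, but your proof outline never actually uses it.

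The paper avoids this by never isolating a single party. It analyzes the aggregated per-round query
\[
M_t \;=\; \sum_{j=1}^{m}\frac{n_j}{n}\Bigl(\nabla L_{D_j}(\theta_t^{(c)})+z_t^{(j)}\Bigr)
\;=\;\frac{1}{n}\sum_{i=1}^{n}\nabla\ell(\theta_t^{(c)},x_i,y_i)\;+\;\text{(aggregate noise)},
\]
the point being that the weighted combination of the local gradients collapses to the global empirical average, whose sensitivity under replacement of any single record is $2G/n$. With the aggregate noise taken at scale $\sigma$, it then computes the R\'enyi divergence between the two Gaussians explicitly, bounds the $\lambda$-th log-moment of one round by $2G^2\lambda(\lambda+1)/(\sigma^2 n^2)$, sums over the $T$ rounds via the composability of the moments accountant, and converts to $(\epsilon,\delta)$-DP. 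The essential difference from your plan is that the paper keeps all parties' noise inside the mechanism being analyzed, whereas your post-processing step throws it away before the privacy accounting begins.
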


\begin{proof}
Consider the $t^{th}$ query which may disclose privacy:
\begin{equation}
\begin{aligned}
M_t&=\sum_{j=1}^{m}\frac{n_j}{n}\left[\frac{1}{n_j}\sum_{i=1}^{n_j}\nabla \ell(\theta_t^{(c)},x_i^{(j)},y_i^{(j)})+\mathcal{N}(0,\sigma^2I_p)\right] \\
&=\frac{1}{n}\sum_{i=1}^{n}\nabla \ell(\theta_t^{(c)},x_i,y_i)+\mathcal{N}(0,\sigma^2I_p),
\end{aligned}
\end{equation}
where $\theta_t^{(c)}$ represents $\theta^{(c)}$ after $t$ local rounds.

In moments accountant method proposed by \citeauthor{6} \shortcite{6}, the $\lambda^{th}$ ($\lambda \geq 1$) moment $\alpha_M(\lambda;D,D')$ on mechanism $M$ is defined as:
\begin{equation}
\alpha_M(\lambda;D,D')=\log\mathbb{E}_{o\sim M(D)}\left[\exp(\lambda c(o;M,D,D'))\right],
\end{equation}
where $c(o;M,D,D')$ is privacy loss at output $o$, defined as:
\begin{equation}
c(o;M,D,D')=\log\frac{\mathbb{P}\left[M(D)=o\right]}{\mathbb{P}\left[M(D')=o\right]}.
\end{equation}

In order to preserve privacy, it is necessary to bound all possible $\alpha_M(\lambda;D,D')$. So, $\alpha_M(\lambda)$ is defined as:
\begin{equation*}
\alpha_M(\lambda)=\max_{D,D'}\alpha_M(\lambda;D,D').
\end{equation*}

Denote probability distributions on adjacent databases $D$ and $D'$ over mechanism $M_t$ as $P$ and $Q$:
\begin{gather*}
P=\nabla L_D(\theta_t^{(c)})+\mathcal{N}(0,\sigma^2I_p)=\mathcal{N}(\nabla L_D(\theta_t^{(c)}),\sigma^2I_p), \\
Q=\nabla L_{D'}(\theta_t^{(c)})+\mathcal{N}(0,\sigma^2I_p)=\mathcal{N}(\nabla L_{D'}(\theta_t^{(c)}),\sigma^2I_p).
\end{gather*}

By Definition 2.1 in \cite{12}, define $D_\alpha$ as:
\begin{equation}
D_{\alpha}(P\Vert Q)=\frac{1}{\alpha-1}\log\left(\mathbb{E}_{x\sim P}\left[\left(\frac{P(x)}{Q
(x)}\right)^{\alpha-1}\right]\right).
\end{equation}

By equations (5), (6), (7) and definitions $P$, $Q$, we have equations below over mechanism $M_t$:
\begin{equation*}
\begin{aligned}
\alpha_{M_t}(\lambda)&=\log\mathbb{E}_{o\sim P}\left[\exp\left(\lambda\log(\frac{P}{Q})\right)\right] \\
&=\log\mathbb{E}_{o\sim P}\left[\left(\frac{P}{Q}\right)^\lambda\right] \\
&=\lambda D_{\lambda+1}(P\Vert Q).
\end{aligned}
\end{equation*}

By Lemma 2.5 in \cite{12}, we have:
\begin{equation*}
\lambda D_{\lambda+1}(P \Vert Q)=\frac{\lambda(\lambda+1)\left\|\nabla L_D(\theta_t^{(c)})-\nabla L_{D'}(\theta_t^{(c)}) \right\|^2}{2\sigma^2}.
\end{equation*}

Note that $\ell$ is $G$-Lipschitz (denoted as $G$ below), and there is only one single element different between $D$ and $D'$, suppose it is the $n^{th}$ one, we have:
\begin{equation*}
\begin{aligned}
&\nabla L_D(\theta_t^{(c)})-\nabla L_{D'}(\theta_t^{(c)}) \\
&=\frac{1}{n}\left(\sum_{i=1}^{n-1}\nabla\ell(\theta_t^{(c)},x_i,y_i)+\nabla\ell(\theta_t^{(c)},x_n,y_n)\right) \\
&\quad-\frac{1}{n}\left(\sum_{i=1}^{n-1}\nabla\ell(\theta_t^{(c)},x_i,y_i)+\nabla\ell(\theta_t^{(c)},x_n',y_n')\right) \\
&=\frac{1}{n}\left(\nabla\ell(\theta_t^{(c)},x_n,y_n)-\nabla\ell(\theta_t^{(c)},x_n',y_n')\right) \\
&\overset{(G)}{\leq}\frac{2G}{n}.
\end{aligned}
\end{equation*}

Thus,
\begin{equation*}
\alpha_{M_t}(\lambda)=\lambda D_{\lambda+1}(P \Vert Q) \leq \frac{2G^2\lambda(\lambda+1)}{\sigma^2n^2}.
\end{equation*}

By Theorem 2.1 in \cite{6}, we have:
\begin{equation*}
\alpha_M(\lambda) \leq \sum_{t=1}^{T}\alpha_{M_t}(\lambda).
\end{equation*}

Then, note that $\lambda \geq 1$, we have:
\begin{equation*}
\alpha_M(\lambda) \leq \sum_{t=1}^{T}\alpha_{M_t}(\lambda)=2\lambda(\lambda+1)\frac{G^2T}{\sigma^2n^2} \leq 4\lambda^2\frac{G^2T}{\sigma^2n^2}.
\end{equation*}

Taking $\sigma^2=c\frac{G^2T\ln(1/\delta)}{n^2\epsilon^2}$ for some constant $c$, we can guarantee that:
\begin{equation*}
\alpha_M(\lambda) \leq 4\lambda^2\frac{G^2T}{\sigma^2n^2} \leq \frac{\lambda\epsilon}{2},
\end{equation*}
and as a result, we have:
\begin{equation*}
\frac{4\lambda^2G^2T\epsilon^2}{cG^2T\ln(1/\delta)} \leq \frac{\lambda\epsilon}{2},
\end{equation*}
which means:
\begin{equation*}
\delta \leq \exp(\frac{-\lambda\epsilon}{2}),
\end{equation*}
means $(\epsilon,\delta)$-DP due to Theorem 2.2 in \cite{6}.
\end{proof}

In Theorem 1, $\ell$ is $G$-Lipschitz, but not constrained \textit{convex}.
Thus, Theorem 1 is general in both convex and non-convex conditions.

Although Algorithm 1 considers distributed setting, equation (4) is not related to the number of parties $m$.
As a result, Gaussian noise guaranteeing $(\epsilon,\delta)$-DP is not related to multiple parties, but has the same form as in centralized setting.

Moreover, we consider \textit{moments accountant} method and different weights held by parties when aggregating models, so the bound is tighter than which introduced by \citeauthor{8} \shortcite{8} by a factor of $\frac{(mn_{(1)})^2}{n^2}$, where $n_{(1)}$ is the smallest size of data sets owned by all the parties. When data scale on clients is not even, our method is much better.

\section{Theoretical Analysis over Convex and Non-convex Conditions}
In this section, first we give the analysis of excess empirical risk of our method WD-DP under convex condition and then generalize it to non-convex functions which satisfy the Polyak-Lojasiewicz condition.
To our knowledge, this is the first theoretical analysis of excess empirical risk bound on non-convex distributed differential privacy ERM.

\begin{figure*}[htb]
\centering
\subfigure[KDDCup99]{\includegraphics[width=0.3\textwidth]{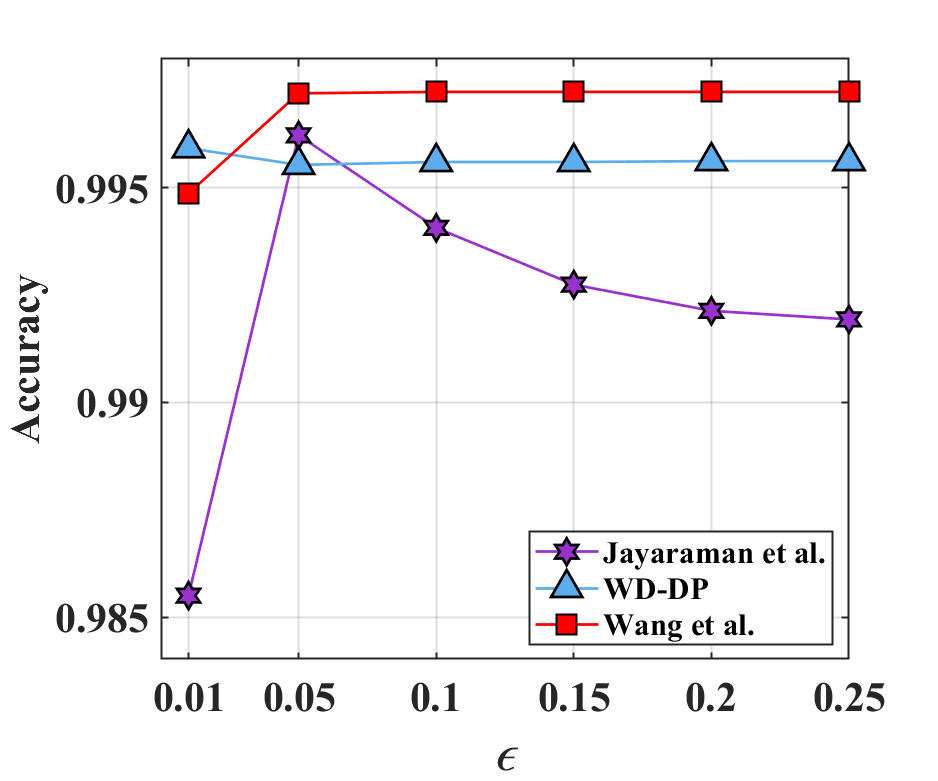}}
\subfigure[Adult]{\includegraphics[width=0.3\textwidth]{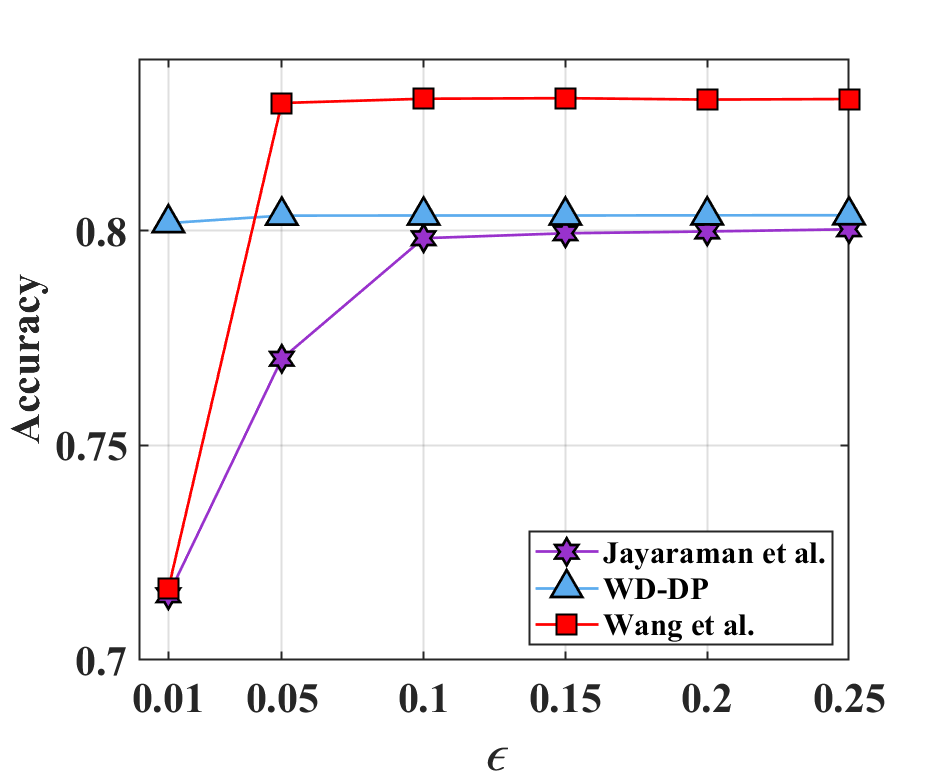}}
\subfigure[Bank]{\includegraphics[width=0.3\textwidth]{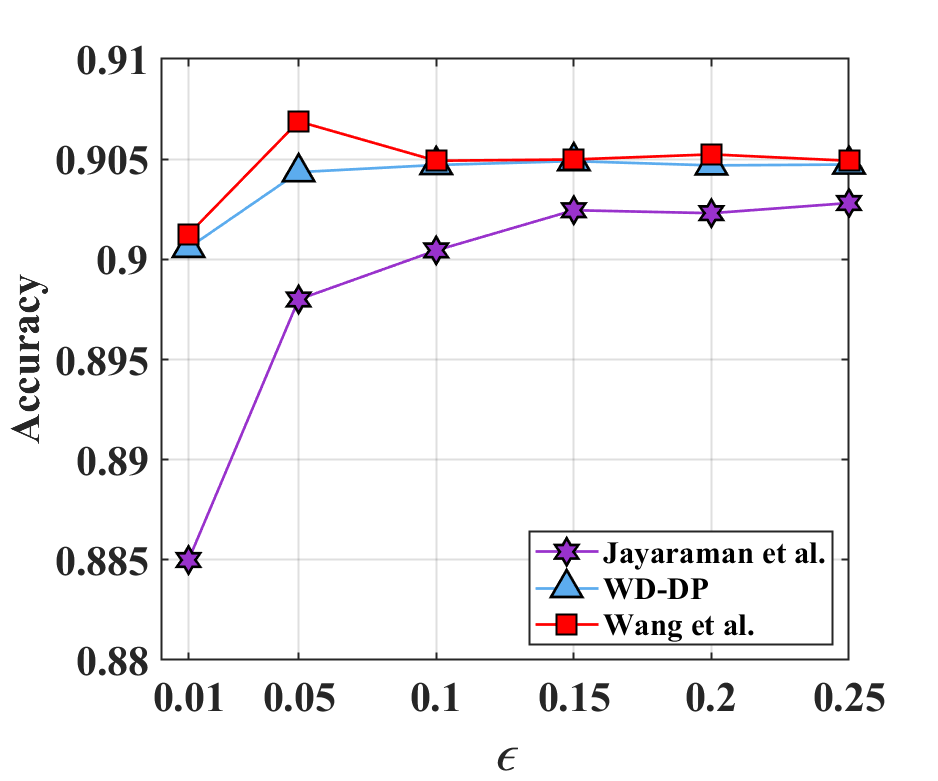}}
\subfigure[Breast Cancer]{\includegraphics[width=0.3\textwidth]{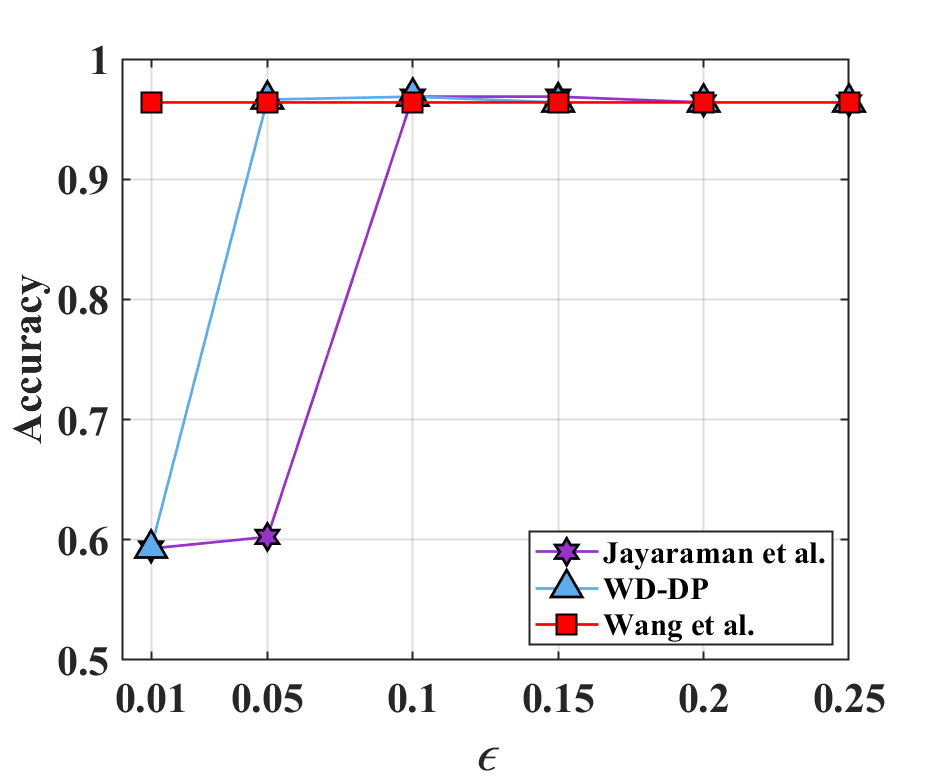}}
\subfigure[Credit Card Fraud]{\includegraphics[width=0.3\textwidth]{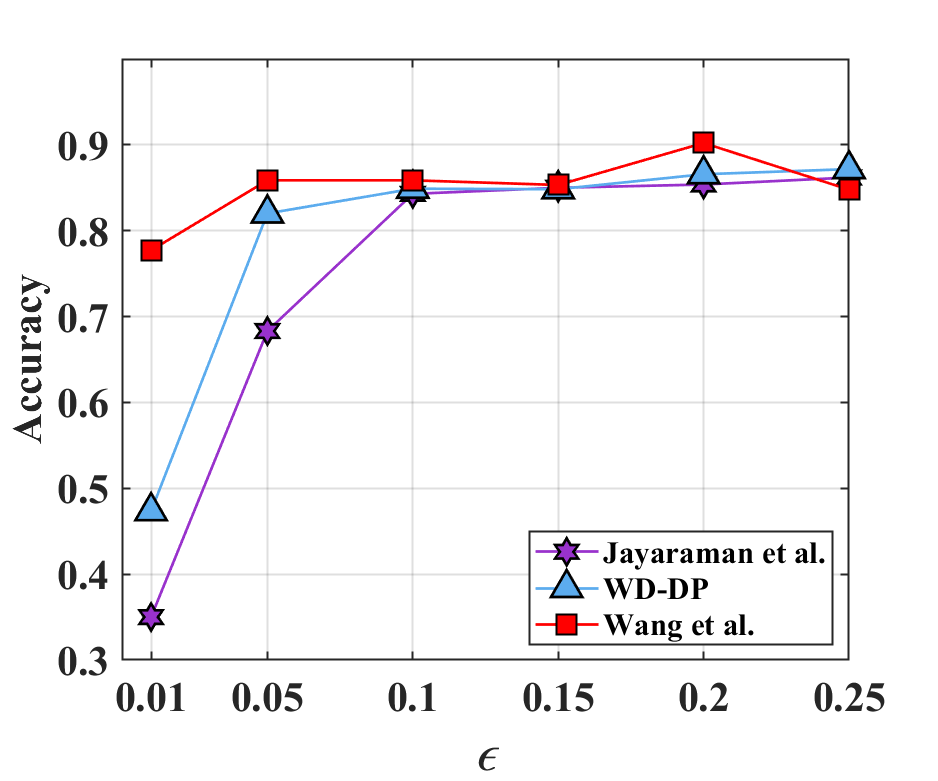}}
\caption{Accuracy on data sets over privacy budget $\epsilon$. $m=32$, data instances owned by each client is not the same.}
\end{figure*}

\subsection{Convex}
In this part, we give the theoretical analysis of excess empirical risk when objective function $L(\cdot)$ is $\lambda$-strongly convex.

\begin{The}
Suppose that $\ell(\theta,x,y)$ is G-Lipschitz and L-smooth over $\theta$, $L_D(\theta)$ is $\lambda$-strongly convex and differentiable, with $\sigma$ is the same as in (3) and learning rate $\eta=\frac{1}{L}$. We have:
\begin{equation*}
\mathbb{E}\left[L_D(\theta_T)\right]-L_D^* \leq O\left(\frac{pG^2\ln(1/\delta)\log(n)}{n^2\epsilon^2}\right),
\end{equation*}
where $T=\tilde{O}\left(\log(\frac{n^2\epsilon^2}{pG^2\ln(1/\delta)})\right)$, $L_D^*=\min_{\theta \in \mathbb{R}^p}L_D(\theta)$ and $p$ is the number of parameters.
\end{The}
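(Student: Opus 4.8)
The plan is to collapse the distributed dynamics to a single perturbed gradient‑descent recursion on the aggregated model and then run the textbook strongly‑convex and smooth convergence argument. The only place that needs extra care is how the accumulated noise is summed across the $T$ rounds, and this is exactly where I would tighten the argument relative to \citeauthor{11} \shortcite{11}. Throughout I write $\theta_t^{(c)}$ for the model the server holds after $t$ outer rounds and $\theta_T:=\theta_T^{(c)}$ for the returned iterate.

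\textbf{Reduce to a centralized noisy recursion.} Since within each outer round every party starts from the common model $\theta_t^{(c)}$ and the server recombines their local steps with weights $n_j/n$, the aggregated iterate evolves as
\[\theta_{t+1}^{(c)}=\theta_t^{(c)}-\eta\Big(\sum_{j=1}^{m}\tfrac{n_j}{n}\nabla L_{D_j}(\theta_t^{(c)})+\sum_{j=1}^{m}\tfrac{n_j}{n}z_t^{(j)}\Big)=\theta_t^{(c)}-\eta\big(\nabla L_D(\theta_t^{(c)})+z_t\big),\]
where, exactly as in equation (4), the weighted sum of per‑party gradients telescopes to the full‑sample gradient $\nabla L_D(\theta_t^{(c)})=\tfrac1n\sum_{i=1}^{n}\nabla\ell(\theta_t^{(c)},x_i,y_i)$, and the aggregated noise $z_t=\sum_{j}\tfrac{n_j}{n}z_t^{(j)}$ is centered with covariance $\big(\sum_j(n_j/n)^2\big)\sigma^2 I_p$, hence $\mathbb{E}\|z_t\|^2\le p\sigma^2$. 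This is where the weighting pays off: the drift is precisely the centralized gradient and the noise level is no worse than in the centralized case, independent of $m$ and of how uneven the $n_j$ are, so I may treat $\theta_T$ as the output of centralized noisy gradient descent on the $\lambda$‑strongly convex, $L$‑smooth objective $L_D$.

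\textbf{One‑step contraction.} Applying the descent lemma for $L$‑smooth functions with $\eta=1/L$ and taking expectation over $z_t$ conditioned on $\theta_t^{(c)}$ gives
\[\mathbb{E}\big[L_D(\theta_{t+1}^{(c)})\mid\theta_t^{(c)}\big]\le L_D(\theta_t^{(c)})-\tfrac{1}{2L}\|\nabla L_D(\theta_t^{(c)})\|^2+\tfrac{p\sigma^2}{2L}.\]
Because $\lambda$‑strong convexity implies the Polyak--Lojasiewicz inequality $\|\nabla L_D(\theta)\|^2\ge 2\lambda\big(L_D(\theta)-L_D^*\big)$ (minimize the strong‑convexity lower bound over the second argument), this becomes $\mathbb{E}[L_D(\theta_{t+1}^{(c)})\mid\theta_t^{(c)}]-L_D^*\le(1-\lambda/L)\big(L_D(\theta_t^{(c)})-L_D^*\big)+\tfrac{p\sigma^2}{2L}$. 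Taking total expectation, writing $a_t=\mathbb{E}[L_D(\theta_t)]-L_D^*$ and $\rho=1-\lambda/L\in(0,1)$, unrolling, and --- this is the step tightened relative to \citeauthor{11} \shortcite{11} --- bounding the geometric series $\sum_{k=0}^{T-1}\rho^k\le\frac{1}{1-\rho}=\frac{L}{\lambda}$ rather than crudely by $T$, I get
\[\mathbb{E}\big[L_D(\theta_T)\big]-L_D^*\le\rho^{T}\big(L_D(\theta_0)-L_D^*\big)+\frac{p\sigma^2}{2\lambda}.\]

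\textbf{Choose $T$ and substitute $\sigma^2$.} Plugging in $\sigma^2=c\,G^2T\ln(1/\delta)/(n^2\epsilon^2)$ from (3), using $\rho^{T}\le e^{-T\lambda/L}$, and taking $T=\Theta\!\big(\tfrac{L}{\lambda}\log\tfrac{n^2\epsilon^2}{pG^2\ln(1/\delta)}\big)=\tilde{O}\!\big(\log\tfrac{n^2\epsilon^2}{pG^2\ln(1/\delta)}\big)$ makes the first term $O\!\big(pG^2\ln(1/\delta)/(n^2\epsilon^2)\big)$, while the second becomes $\tfrac{cpG^2T\ln(1/\delta)}{2\lambda n^2\epsilon^2}=O\!\big(\tfrac{pG^2\ln(1/\delta)}{n^2\epsilon^2}\log\tfrac{n^2\epsilon^2}{pG^2\ln(1/\delta)}\big)=O\!\big(\tfrac{pG^2\ln(1/\delta)\log n}{n^2\epsilon^2}\big)$, treating $L$, $\lambda$ and $L_D(\theta_0)-L_D^*$ as constants and using $\log\tfrac{n^2\epsilon^2}{pG^2\ln(1/\delta)}=O(\log n)$; the two terms are of the same order, which is the claimed bound. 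The one genuine subtlety is the mild circularity that $\sigma^2$ carries a factor of $T$ while the optimal $T$ is itself set by the target accuracy; this is resolved by fixing $T$ first as a function of $n,\epsilon,\delta,p$ (and $L/\lambda$), which then determines $\sigma^2$, and checking the two displayed terms afterwards --- the logarithm is slowly varying, so no fixed point needs to be solved. The main obstacle is really bookkeeping: making sure that the single $\log$ factor the moments‑accountant composition injects into $\sigma^2$ is counted exactly once, which is precisely what the geometric‑sum bound in the previous step secures; counting it a second time is what produced the extra $\log(n)$ in the earlier bound.
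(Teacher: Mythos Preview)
Your proposal is correct and follows essentially the same route as the paper: the descent lemma from $L$-smoothness, the PL inequality from $\lambda$-strong convexity, the one-step contraction, and the geometric-series bound $\sum_{k=0}^{T-1}(1-\lambda/L)^k\le L/\lambda$ (which is precisely the tightening over \citeauthor{11} \shortcite{11} that the paper highlights in Remark~1), followed by the same choice of $T$. Your explicit reduction of the weighted distributed dynamics to the centralized recursion and your remark on the covariance of the aggregated noise are more careful than the paper's proof, which simply starts from the centralized update $\theta_{t+1}-\theta_t=-\tfrac{1}{L}(\nabla L_D(\theta_t)+z_t)$ without spelling out that step.
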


\begin{proof}
According to updating criteria of gradient descent:
\begin{equation*}
\theta_{t+1}-\theta_{t}=-\eta(\nabla L_D(\theta_t)+z_t)=-\frac{1}{L}(\nabla L_D(\theta_t)+z_t).
\end{equation*}

Function $\ell$ is $L$-smooth (denoted as $L$ below) and $L_D(\theta)$ is differentiable (denoted as $d$ below), we have:
\begin{align}
\begin{split}
&\mathbb{E}_{z_t}\left[L_D(\theta_{t+1})-L_D(\theta_t)\right] \\
&\overset{(L,d)}{\leq}\mathbb{E}_{z_t}\left[\left< \nabla L_D(\theta_t), \theta_{t+1}-\theta_t \right>+\frac{L}{2}\left\|\theta_{t+1}-\theta_t \right\|^2\right] \\
&\leq -\frac{1}{L}\left\|\nabla L_D(\theta_t) \right\|^2-\frac{1}{L}\left<\nabla L_D(\theta_t),z_t\right> \\
&\quad+\frac{1}{2L}\left\|\nabla L_D(\theta_t) \right\|^2+\frac{1}{2L}\mathbb{E}_{z_t}\left\|z_t \right\|^2+\frac{1}{L}\left<\nabla L_D(\theta_t),z_t\right> \\
&= -\frac{1}{2L}\left\|\nabla L_D(\theta_t) \right\|^2+\frac{1}{2L}\mathbb{E}_{z_t}\left\|z_t \right\|^2.
\end{split}
\end{align}

$L_D(\theta)$ is $\lambda$-strongly convex and differentiable, from \cite{26}, we have:
\begin{align}
\begin{split}
\left\|\nabla L_D(\theta_t) \right\|^2 \geq 2\lambda(L_D(\theta_t)-L_D^*).
\end{split}
\end{align}

For random variable $X$, we have:
\begin{align}
\begin{split}
\mathbb{E}(X^2)=\mathbb{E}^2(X)+v(X),
\end{split}
\end{align}
where $v(X)$ denotes variance of $X$.

So, by (9) and (10), inequality (8) can be transferred to:
\begin{align*}
\begin{split}
&\mathbb{E}_{z_t}\left[L_D(\theta_{t+1})-L_D(\theta_t)\right] \\
&\leq -\frac{\lambda}{L}(L_D(\theta_t)-L_D^*)+\frac{1}{2L}\left(\mathbb{E}_{z_t}^2\left\|z_t\right\|+v(\left\|z_t\right\|)\right) \\
&= -\frac{\lambda}{L}(L_D(\theta_t)-L_D^*)+\frac{p\sigma^2}{2L}.
\end{split}
\end{align*}

Then, summing over $T$ iterations, we have:
\begin{equation}
\begin{aligned}
&\mathbb{E}\left[L_D(\theta_T)\right]-L_D^* \\
&\leq (1-\frac{\lambda}{L})^T(L_D(\theta_0)-L_D^*) \\
&\quad+\frac{p\sigma^2}{2L}\left((1-\frac{\lambda}{L})^0+
(1-\frac{\lambda}{L})^1+...+(1-\frac{\lambda}{L})^{T-1}\right) \\
&=(1-\frac{\lambda}{L})^T(L_D(\theta_0)-L_D^*)+\frac{p\sigma^2}{2L}\frac{L}{\lambda}\left(1-(1-\frac{\lambda}{L})^T\right) \\
&\leq(1-\frac{\lambda}{L})^T(L_D(\theta_0)-L_D^*)+\frac{p\sigma^2}{2\lambda}.
\end{aligned}
\end{equation}

Taking $T=\tilde{O}\left(\log(\frac{n^2\epsilon^2}{pG^2\ln(1/\delta)})\right)$, we have:
\begin{equation*}
\mathbb{E}\left[L_D(\theta_T)\right]-L_D^* \leq O\left(\frac{pG^2\ln(1/\delta)\log(n)}{n^2\epsilon^2}\right).
\end{equation*}
\end{proof}

\begin{Rem}
In \cite{11}, inequality (11) is simply scaling to:
\begin{equation}
\mathbb{E}\left[L_D(\theta_T)\right]-L_D^* \leq (1-\frac{\lambda}{L})^T(L_D(\theta_0)-L_D^*)+\frac{Tp\sigma^2}{2L}.
\end{equation}
Obviously, equation (11) is tighter than equation (12).
In this way, we improve the proof process, leading a better excess empirical risk bound by a factor of log(n).
\end{Rem}

It can be observed that our method is better than which in distributed setting \cite{8} and centralized setting \cite{11} by a factor of $\frac{(mn_{(1)})^2\log(n)}{(\log(mn_{(1)})n)^2}$ and $\log(n)$, respectively.
Intuitively, giving weights to parties means data instances owned by all parties are of the same importance, similar to centralized setting.
Conversely, simply averaging gives more weight to data instances in smaller data sets, making it more \textit{distributed}.

\begin{figure*}[htb]
\centering
\subfigure[KDDCup99]{\includegraphics[width=0.3\textwidth]{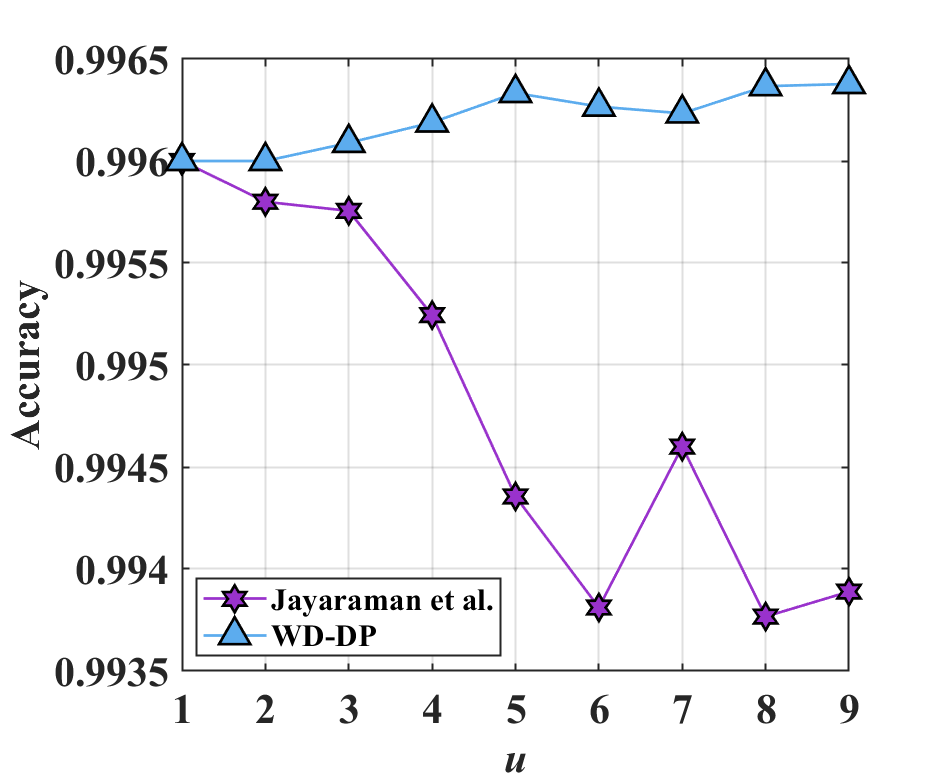}}
\subfigure[Adult]{\includegraphics[width=0.3\textwidth]{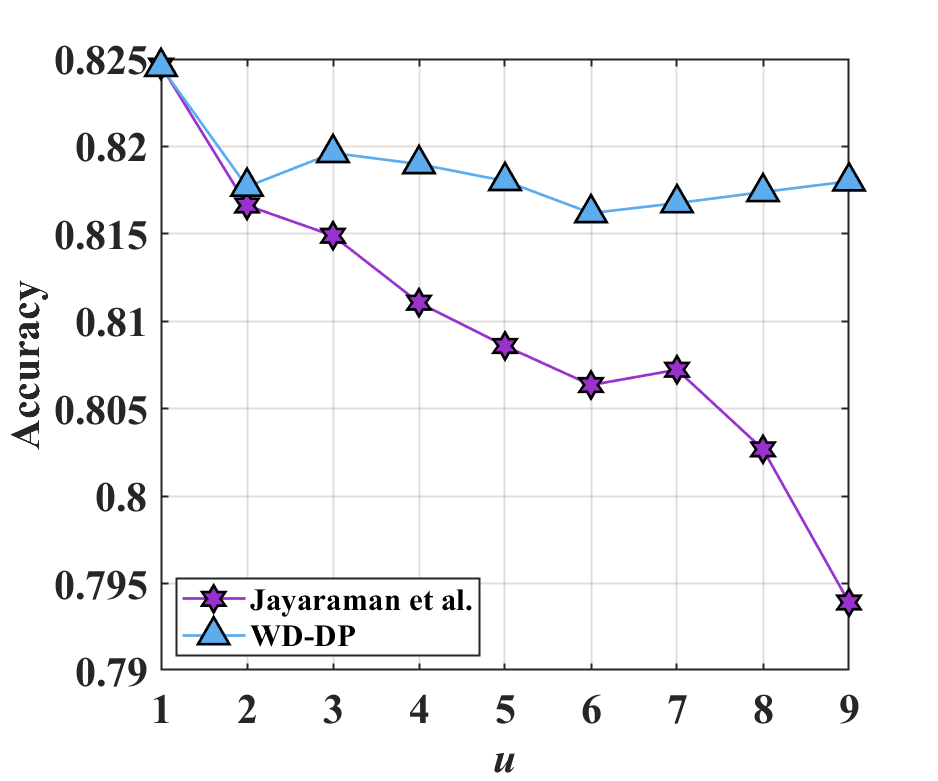}}
\subfigure[Bank]{\includegraphics[width=0.3\textwidth]{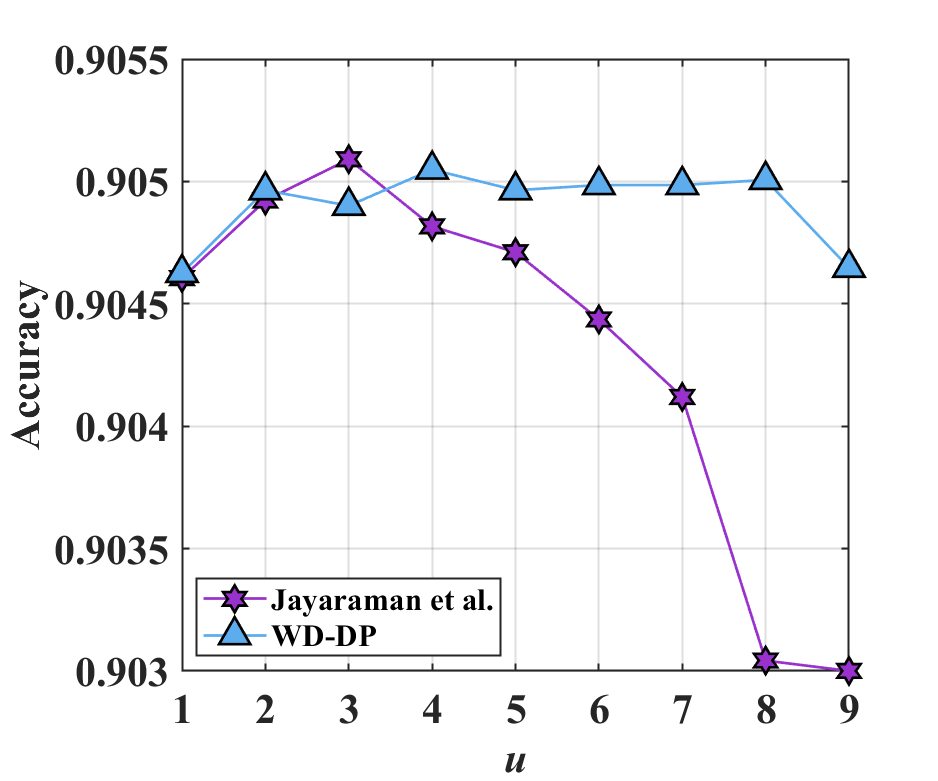}}
\subfigure[Breast Cancer]{\includegraphics[width=0.3\textwidth]{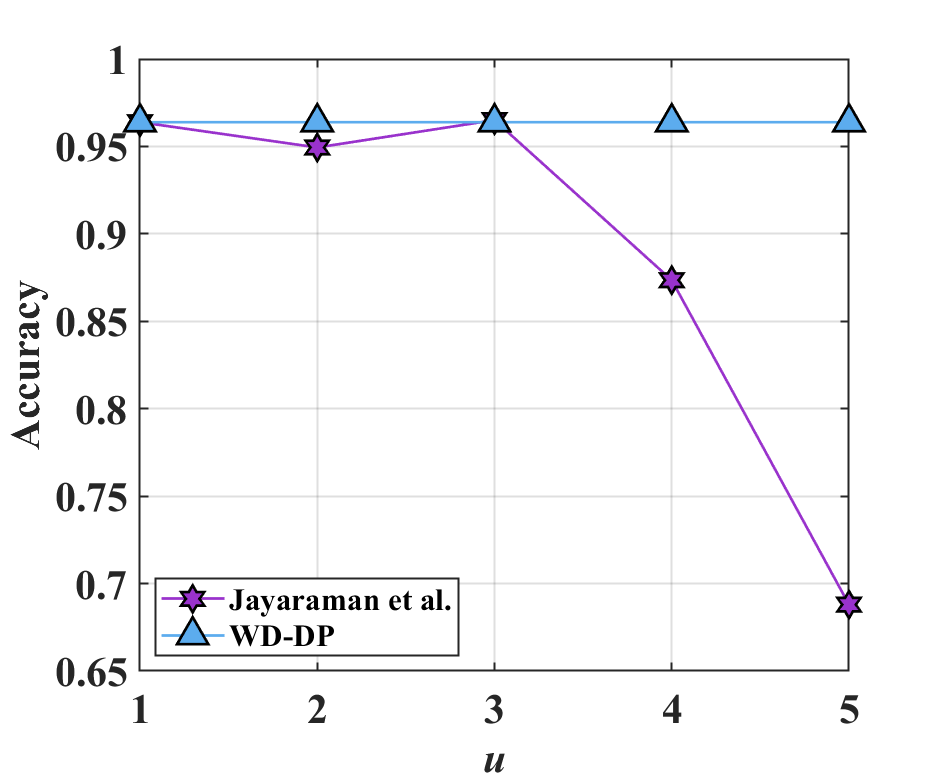}}
\subfigure[Credit Card Fraud]{\includegraphics[width=0.3\textwidth]{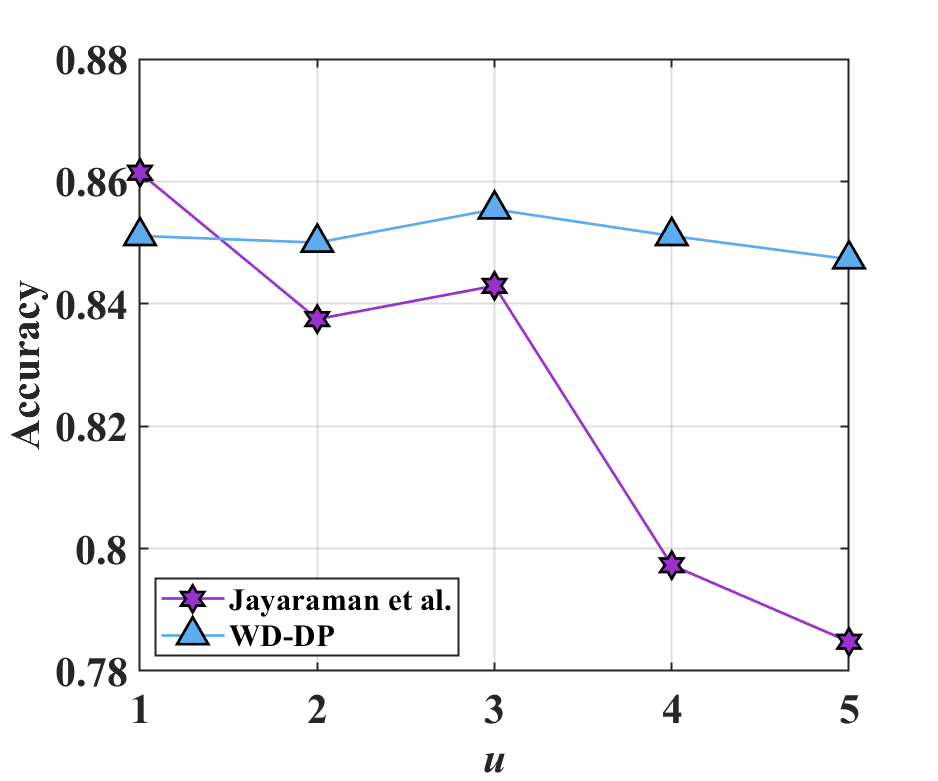}}
\caption{Accuracy on data sets over the level of non-average $u$. $m=16$, $\epsilon=0.05$, clients are divided into 2 groups, the number of data instances owned by each client in the same group is the same.}
\end{figure*}

\subsection{Non-convex}
In this part, we generalize the analysis above to non-convex $L(\cdot)$ which satisfies the Polyak-Lojasiewicz condition.

\begin{Def}
For function $L$, denotes $L^*=\min _{\theta \in \mathbb{R}^p}L(\theta)$, if there exists $\mu > 0$ and for every $\theta$,
\begin{equation}
\left\| \nabla L(\theta) \right\|^2 \geq 2\mu(L(\theta)-L^*),
\end{equation}
then function $L$ satisfies the Polyak-Lojasiewicz condition.
\end{Def}

Obviously, convex functions also satisfy equation (13).
In fact, \textit{Polyak-Lojasiewicz condition} is much more general than \textit{convex}.
\citeauthor{10} \shortcite{10} showed that when function $F$ is differentiable and $L$-smooth under $\ell_2$ norm, we have: \\
Strong Convex $\Rightarrow$ Essential Strong Convexity $\Rightarrow$ Weak Strongly Convexity $\Rightarrow$ Restricted Secant Inequality $\Rightarrow$ Polyak-Lojasiewicz Inequality $\Leftrightarrow$ Error Bound

\begin{The}
Suppose that $\ell(\theta,x,y)$ is G-Lipschitz and L-smooth over $\theta$, $L_D(\theta)$ satisfies the Polyak-Lojasiewicz condition and differentiable, $\sigma$ is the same as (3) and $\eta=\frac{1}{L}$. We have:
\begin{equation*}
\mathbb{E}\left[L_D(\theta_T)\right]-L_D^* \leq O\left(\frac{pG^2\ln(1/\delta)\log(n)}{n^2\epsilon^2}\right),
\end{equation*}
where $T=\tilde{O}\left(\log(\frac{n^2\epsilon^2}{pG^2\ln(1/\delta)})\right)$, $L_D^*=\min_{\theta \in \mathbb{R}^p}L_D(\theta)$ and $p$ is the number of parameters.
\end{The}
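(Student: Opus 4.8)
The plan is to replay the proof of Theorem 3 almost verbatim, since strong convexity was used there in exactly one place: inequality (9), $\left\| \nabla L_D(\theta_t)\right\|^2 \ge 2\lambda(L_D(\theta_t)-L_D^*)$, which is precisely the Polyak-Lojasiewicz inequality (13) with $\mu$ in place of $\lambda$. First I would record the reduction noted after Theorem 1: by the weighted aggregation step in Algorithm 1 together with equation (4), the per-round gradient query seen by the analysis is effectively the centralized update $\theta_{t+1}=\theta_t-\eta(\nabla L_D(\theta_t)+z_t)$ with $z_t\sim\mathcal{N}(0,\sigma^2 I_p)$. This is what lets a centralized-style argument go through in the distributed setting.

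Next, using only $L$-smoothness and differentiability of $L_D$ (no convexity), I would derive the descent inequality (8), namely $\mathbb{E}_{z_t}\left[L_D(\theta_{t+1})-L_D(\theta_t)\right]\le -\frac{1}{2L}\left\|\nabla L_D(\theta_t)\right\|^2+\frac{1}{2L}\mathbb{E}_{z_t}\left\|z_t\right\|^2$, where the cross terms $\left<\nabla L_D(\theta_t),z_t\right>$ vanish in expectation because $z_t$ is mean zero, and $\mathbb{E}_{z_t}\left\|z_t\right\|^2=p\sigma^2$. Then I would apply the PL condition (13) to replace $\left\|\nabla L_D(\theta_t)\right\|^2$ by $2\mu(L_D(\theta_t)-L_D^*)$, giving the one-step contraction $\mathbb{E}_{z_t}\left[L_D(\theta_{t+1})-L_D^*\right]\le(1-\frac{\mu}{L})(L_D(\theta_t)-L_D^*)+\frac{p\sigma^2}{2L}$. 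Unrolling this over $T$ rounds and summing the geometric series exactly, as in the improved bound (11) rather than the loose bound (12), yields $\mathbb{E}\left[L_D(\theta_T)\right]-L_D^*\le(1-\frac{\mu}{L})^T(L_D(\theta_0)-L_D^*)+\frac{p\sigma^2}{2\mu}$. Finally, substituting $\sigma^2=c\,G^2T\ln(1/\delta)/(n^2\epsilon^2)$ from (3) and choosing $T=\tilde{O}\!\left(\log(n^2\epsilon^2/(pG^2\ln(1/\delta)))\right)$ drives the geometrically decaying term below the noise term while contributing only the advertised $\log(n)$ factor, producing $\mathbb{E}\left[L_D(\theta_T)\right]-L_D^*\le O\!\left(pG^2\ln(1/\delta)\log(n)/(n^2\epsilon^2)\right)$.

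Everything above is routine once the observation about (9) versus (13) is made; the only point that genuinely needs care is that the contraction factor $1-\mu/L$ lies in $[0,1)$, i.e. $\mu\le L$. This is not an extra hypothesis: any $L$-smooth function satisfying the PL inequality automatically has $\mu\le L$ (compare the smoothness upper bound on $L_D(\theta)-L_D^*$ along the ray toward a minimizer with (13)). I would also note, exactly as is implicitly assumed in Theorem 3 and in \cite{11}, that the shared random initialization $\theta_0$ downloaded from the server has finite expected suboptimality $L_D(\theta_0)-L_D^*=O(1)$, so that the first term is genuinely $O(1)\cdot(1-\mu/L)^T$ and is killed by the chosen $T$.
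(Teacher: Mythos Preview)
Your proposal is correct and follows essentially the same route as the paper's own proof in Appendix~A.1: derive the smoothness-based descent inequality, replace the strong-convexity gradient lower bound by the PL inequality (13), unroll the resulting contraction as a geometric series to get the analogue of (11), and then substitute $\sigma^2$ from (3) with the stated choice of $T$. (Minor slip: you write ``replay the proof of Theorem 3'' where you mean Theorem 2, the convex case.) Your added remarks that $\mu\le L$ holds automatically for any $L$-smooth PL function and that $L_D(\theta_0)-L_D^*=O(1)$ is being tacitly assumed are correct and in fact make explicit points the paper leaves implicit.
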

The proof of Theorem 3 is shown in Appendix A.1.

It can be observed that our excess empirical risk bound over both convex function and non-convex function which satisfies the Polyak-Lojasiewicz condition are tighter than which over convex function in \cite{8} by a factor of $\frac{(mn_{(1)})^2\log(n)}{(\log(mn_{(1)})n)^2}$, where $m$ is the number of parties and $n_{(1)}$ denotes the smallest data set's size.
Under the situation of uneven data scale in real scenarios, the gap between $mn_{(1)}$ and $n$ is huge, our method is extremely superior.

Moreover, by Remark 1, we proof that the excess empirical risk bound can be tighter than which in \cite{11} by a factor of $\log(n)$.

\section{Experiments}
\noindent Experiments are performed on classification task.
We compare our method to the gradient perturbation method proposed by \citeauthor{8} \shortcite{8} and the centralized privacy method proposed by \citeauthor{11} \shortcite{11}.
The comparison between our method and others is represented by accuracy and optimal gap.
Optimal gap is defined as $L(\theta)-L(\theta^*)$, where $\theta^*$ is centralized optimal non-privacy model.
Accuracy represents the performance on test data and optimal gap denotes excess empirical risk on training data.

We use logistic regression method on the data set KDDCup99 \cite{22}, Adult \cite{31}, Bank \cite{32}, Breast Cancer \cite{33} and Credit Card Fraud \cite{34}, the number of total data instances are 70000, 45222, 41188, 699 and 984, respectively.
In all the experiments, $G=1$, the Lipschitz constant of the loss function (the proof is shown in Appendix A.2).
Total local iteration rounds $T$ is set 1000 and learning rate $\eta$ is chosen by cross-validation from 0.01, 0.05, 0.1, 0.5 and 1.

First, we evaluate the influence over differential privacy budget $\epsilon$, $\epsilon$ is set from 0.01 to 0.25 and $\delta=0.001$.
In this setting, we set the number of clients $m=2,4,8,16,32$ and the number of data instances owned by each client is set randomly.
Moreover, considering about the real scenario, we set a threshold of the smallest data set's size, ensuring effective models are trained by clients.
Then, we evaluate the influence caused by the difference on data sets' size owned by different clients.
We define the level of non-average $u$ as $\frac{n_{max}}{n_{min}}$, where $n_{max}$ and $n_{min}$ denote the maximum and minimum data set's size, respectively.
In the experiments, considering the number of total data instances, $u$ is set from 1 to 9 on data set KDDCup99, Adult and Bank, while from 1 to 5 on the rest data sets.
Particularly, $u=1$ means average setting.
For the sake of simplicity, we divide all the clients into 2 groups, clients in the same group have the same data sets's size.

Figure 1 shows the accuracy over privacy budget $\epsilon$.
It can be observed that by considering different weights of different clients, our method WD-DP is better than the method proposed in \cite{8} and is similar to the centralized method proposed in \cite{11}.
Performance is becoming better when $\epsilon$ increases, which is the same as intuition.
Corresponding optimal gap and more experiment results over $\epsilon$ with different number of clients $m$ are shown in Appendix B, leading similar results.

Figure 2 shows the accuracy over the level of non-average $u$.
It can be observed that in average setting, when $u=1$, the accuracy of our proposed method WD-DP is similar to the method proposed in \cite{8}.
However, when $u$ increases, which means data scale is more and more uneven, the accuracy of our method is steady, but the accuracy of which proposed by \citeauthor{8} \shortcite{8} decreases rapidly or fluctuates sharply.
Thus, our method is more reliable, especially in the case of uneven data scale, which is the same as in theoretical analysis.
Corresponding optimal gap and more experiment results over $u$ with different number of clients $m$ and privacy budget $\epsilon$ are also shown in Appendix B, which lead similar results.

\section{Conclusion and Discussion}
\noindent We propose a distributed differential privacy ERM method WD-DP, providing $(\epsilon, \delta)$-differential privacy by gradient perturbation, adding Gaussian noise.
Our work shows that by considering about different weights of different clients, noise bound and excess empirical risk bound can be improved in distributed setting.
Moreover, considering most previous work on differential privacy ERM assumes loss functions are convex and this constraint is not easy to achieve in some situations, we generalize our method to non-convex conditions.
Theoretical analysis and experiment results on real data sets show that we improve the best previous noise bound and excess empirical risk bound for distributed differential privacy ERM, especially under the condition that data scale on clients is uneven, which is common in real scenarios.
In future work, we will focus on non-convex optimization under distributed differential privacy ERM setting (e.g. deep learning), and reducing time complexity of the model, considering most models are synchronous.

\bibliographystyle{aaai}
\bibliography{ref}

\onecolumn
\begin{appendix}

\section{A. More Details and Proofs}
\subsection{A.1 Proof of Theorem 3}
\begin{proof}
The proof is similar to Theorem 2.

According to updating criteria of gradient descent:
\begin{equation*}
\theta_{t+1}-\theta_{t}=-\eta(\nabla L_D(\theta_t)+z_t)=-\frac{1}{L}(\nabla L_D(\theta_t)+z_t).
\end{equation*}

Function $\ell$ is $L$-smooth (denoted as $L$ below) and $L_D(\theta)$ is differentiable (denoted as $d$ below), we have:
\begin{align}
\begin{split}
&\mathbb{E}_{z_t}\left[L_D(\theta_{t+1})-L_D(\theta_t)\right] \\
&\overset{(L,d)}{\leq}\mathbb{E}_{z_t}\left[\left< \nabla L_D(\theta_t), \theta_{t+1}-\theta_t \right>+\frac{L}{2}\left\|\theta_{t+1}-\theta_t \right\|^2\right] \\
&\leq -\frac{1}{L}\left\|\nabla L_D(\theta_t) \right\|^2-\frac{1}{L}\left<\nabla L_D(\theta_t),z_t\right>+\frac{1}{2L}\left\|\nabla L_D(\theta_t) \right\|^2+\frac{1}{2L}\mathbb{E}_{z_t}\left\|z_t \right\|^2+\frac{1}{L}\left<\nabla L_D(\theta_t),z_t\right> \\
&= -\frac{1}{2L}\left\|\nabla L_D(\theta_t) \right\|^2+\frac{1}{2L}\mathbb{E}_{z_t}\left\|z_t \right\|^2.
\end{split}
\end{align}

Note that $L_D(\theta)$ satisfies the Polyak-Lojasiewicz condition, then we have:
\begin{equation}
\left\|\nabla L_D(\theta_t) \right\|^2 \geq 2\mu(L_D(\theta_t)-L_D^*).
\end{equation}

For random variable $X$, we have:
\begin{align}
\begin{split}
\mathbb{E}(X^2)=\mathbb{E}^2(X)+v(X),
\end{split}
\end{align}
where $v(X)$ denotes variance of $X$.

So, by (15) and (16), inequality (14) can be transferred to:
\begin{align*}
\begin{split}
&\mathbb{E}_{z_t}\left[L_D(\theta_{t+1})-L_D(\theta_t)\right] \\
&\leq -\frac{\mu}{L}(L_D(\theta_t)-L_D^*)+\frac{1}{2L}\left(\mathbb{E}_{z_t}^2\left\|z_t\right\|+v(\left\|z_t\right\|)\right) \\
&= -\frac{\mu}{L}(L_D(\theta_t)-L_D^*)+\frac{p\sigma^2}{2L}.
\end{split}
\end{align*}

Then, summing over $T$ iterations, we have:
\begin{equation*}
\begin{aligned}
&\mathbb{E}\left[L_D(\theta_T)\right]-L_D^* \\
&\leq (1-\frac{\mu}{L})^T(L_D(\theta_0)-L_D^*)+\frac{p\sigma^2}{2L}\left((1-\frac{\mu}{L})^0+
(1-\frac{\mu}{L})^1+...+(1-\frac{\mu}{L})^{T-1}\right) \\
&=(1-\frac{\mu}{L})^T(L_D(\theta_0)-L_D^*)+\frac{p\sigma^2}{2L}\frac{L}{\mu}\left(1-(1-\frac{\mu}{L})^T\right) \\
&\leq(1-\frac{\mu}{L})^T(L_D(\theta_0)-L_D^*)+\frac{p\sigma^2}{2\mu}.
\end{aligned}
\end{equation*}

Taking $T=\tilde{O}\left(\log(\frac{n^2\epsilon^2}{pG^2\ln(1/\delta)})\right)$, we have:
\begin{equation*}
\mathbb{E}\left[L_D(\theta_T)\right]-L_D^* \leq O\left(\frac{pG^2\ln(1/\delta)\log(n)}{n^2\epsilon^2}\right).
\end{equation*}

\end{proof}

\subsection{A.2 The Lipschitz Constant of Logistic Regression when Using Cross-Entropy}
When using cross-entropy, the loss function of logistic regression is:
\begin{equation}
J(\theta)=-\frac{1}{n}\sum_{i=1}^{n}\left[y_i\log(h(z_i))+(1-y_i)\log(1-h(z_i))\right],
\end{equation}
where $h(z_i)=\frac{1}{1+e^{-z_i}}$ and $z_i=x_i\theta$.
\begin{proof}
From (17), we have:
\begin{equation*}
\ell=y\log(h(z))+(1-y)\log(1-h(z)).
\end{equation*}
Then, note that $\nabla h(z)=h(z)(1-h(z))$, we have:
\begin{equation*}
\begin{split}
\nabla \ell&=y\frac{1}{h(z)}\nabla h(\theta)+(1-y)\frac{-1}{1-h(z)}\nabla h(\theta) \\
&=xy\frac{1}{h(z)}h(z)(1-h(z))+x(y-1)\frac{1}{1-h(z)}h(z)(1-h(z)) \\
&=x(y-h(z)).
\end{split}
\end{equation*}
Note that $0 < h(z) < 1$, $\left\|y\right\| \leq 1$ and $x$ is normalized, we have $\left\|\nabla \ell\right\| \leq 1$, which means the Lipschitz constant of $\ell$ is $1$.
\end{proof}

\section{B. More Experimental Results}
We give the accuracy and optimal gap (defined as $L(\theta)-L(\theta^*)$, $\theta^*$ is centralized optimal non-privacy model) over privacy budget $\epsilon$ from figure 3 to figure 11.
Figure 3, 5, 7, 9 and 11 show that by considering different weights held by clients, the optimal gap of our method is better than the distributed method proposed in \cite{8} and is similar to the centralized method proposed in \cite{11}, which means the excess empirical risk of our method is similar to centralized methods.
With the increasing of $\epsilon$, the optimal gap decreases, which is the same as intuition.
Figure 4, 6, 8 and 10 show that the accuracy of our method is better than the method proposed by \citeauthor{8} \shortcite{8} by considering weights of parties.
Experimental results show that the performance of our method is straight up to centralized methods, which is similar to the theoretical analysis in Section 5.

Figure 12 to figure 16 show the accuracy and optimal gap over the level of non-average $u$.
In this setting, the value $m$ and $\epsilon$ are chosen randomly.
Figure 12, 14 and 16 show that with the increasing of $u$, which means data scale on different clients is more and more uneven, the optimal gap of our method is steady.
However, the optimal gap of the method proposed by \citeauthor{8} \shortcite{8} increases rapidly or fluctuates sharply.
Figure 13 and 15 show that the accuracy of our method is steady with the increasing of $u$, while the accuracy of the method proposed in \cite{8} decreases rapidly or fluctuates sharply.
Thus, our method is more reliable than the method proposed by \citeauthor{8} \shortcite{8}, especially in the case that data scale is uneven, similar to the theoretical analysis in Section 5.

\begin{figure*}[htb]
\centering
\subfigure[KDDCup99]{\includegraphics[width=0.3\textwidth]{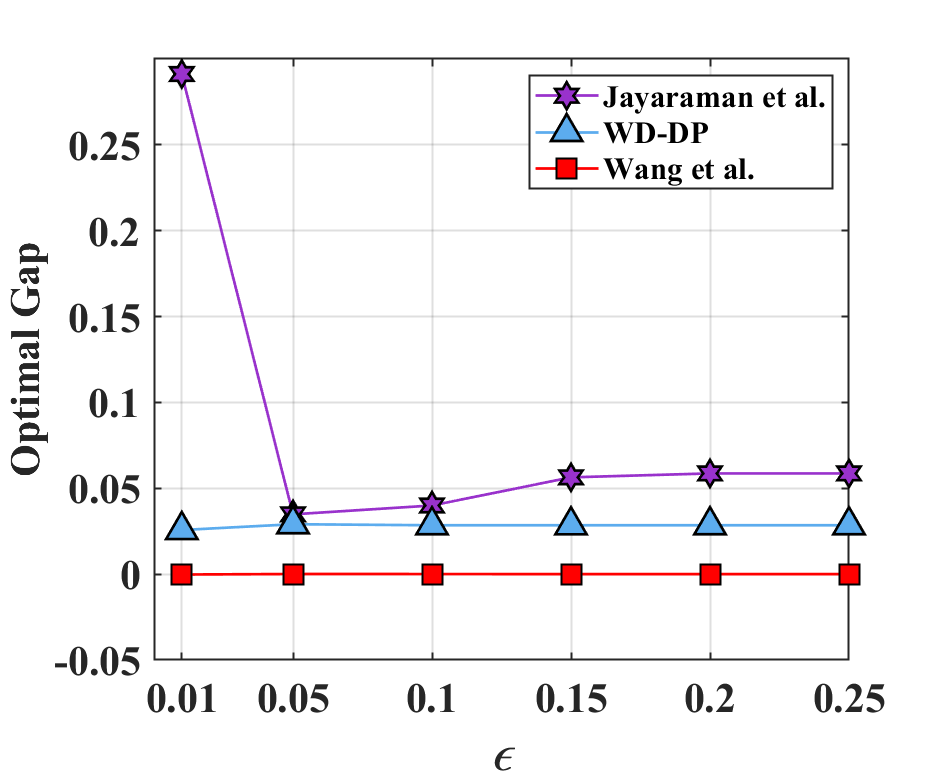}}
\subfigure[Adult]{\includegraphics[width=0.3\textwidth]{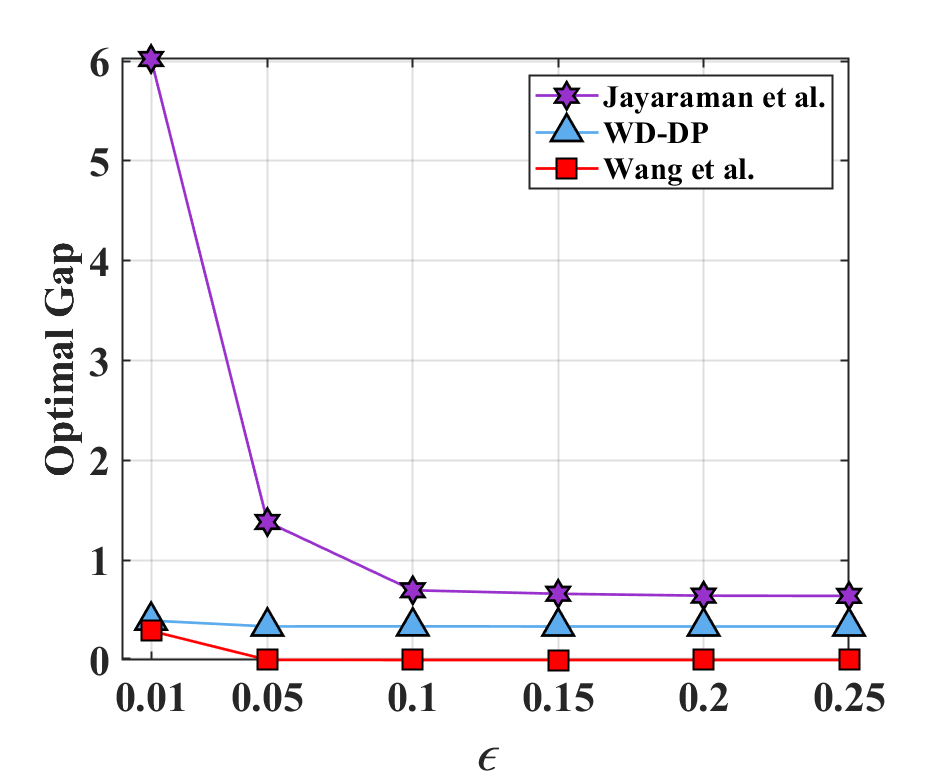}}
\subfigure[Bank]{\includegraphics[width=0.3\textwidth]{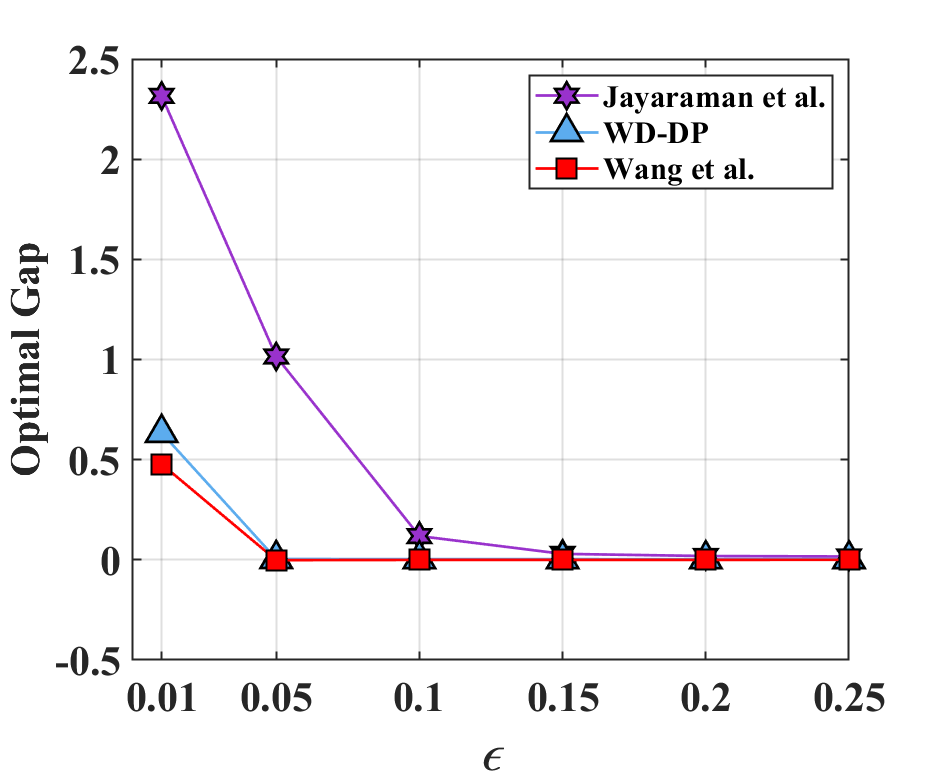}}
\subfigure[Breast Cancer]{\includegraphics[width=0.3\textwidth]{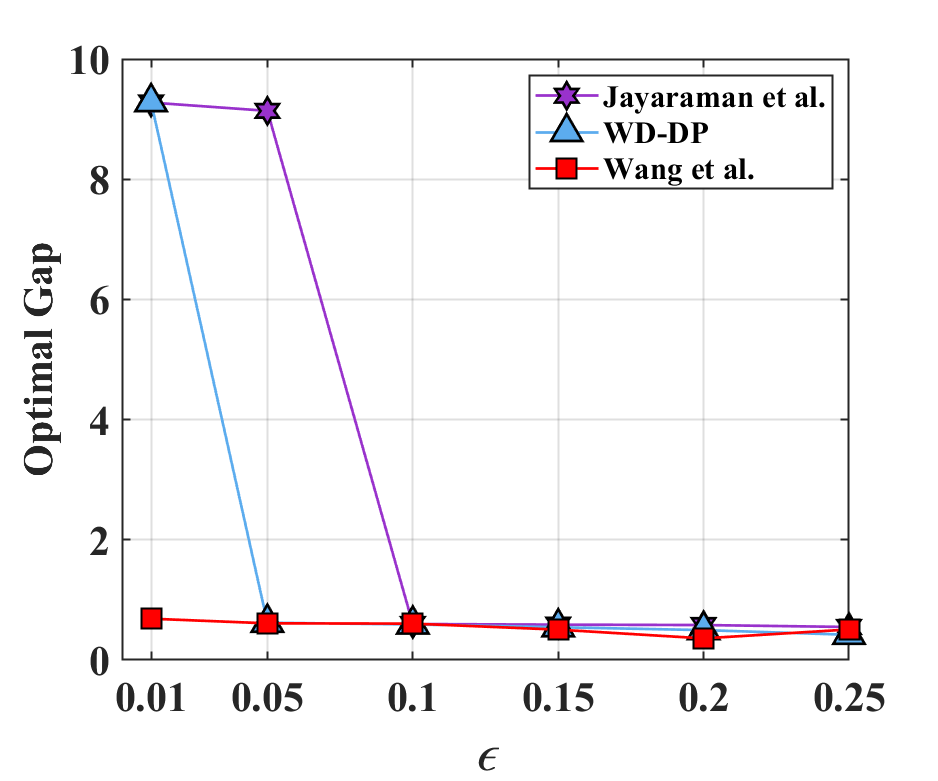}}
\subfigure[Credit Card Fraud]{\includegraphics[width=0.3\textwidth]{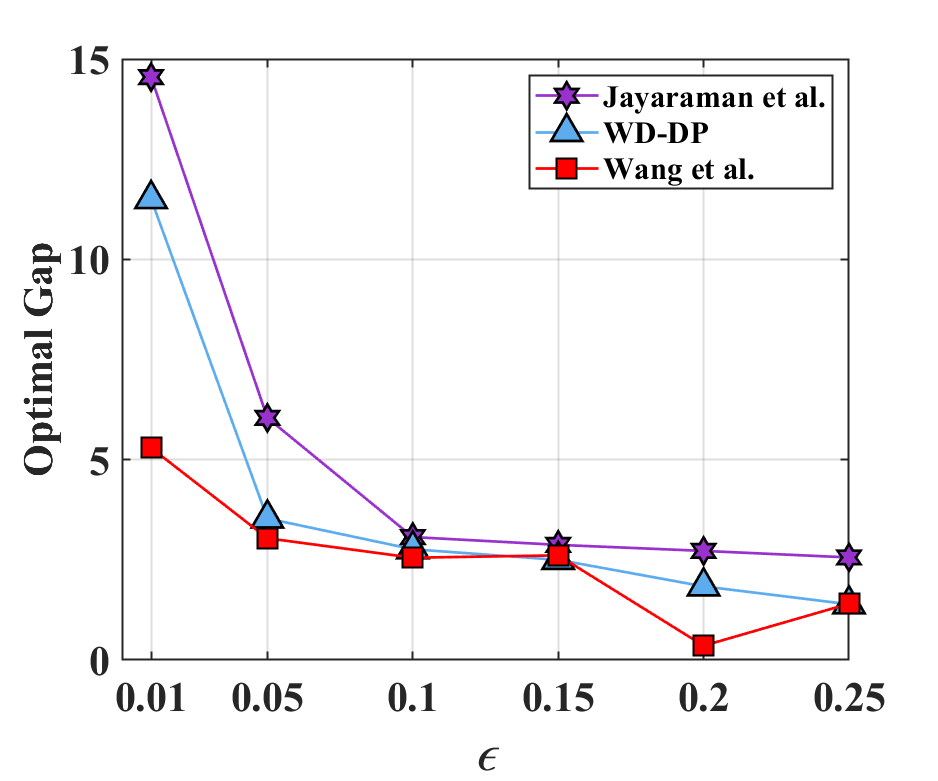}}
\caption{Optimal gap on data sets over privacy budget $\epsilon$. $m=32$, data instances owned by each client is not the same.}
\end{figure*}

\begin{figure*}[htb]
\centering
\subfigure[KDDCup99]{\includegraphics[width=0.3\textwidth]{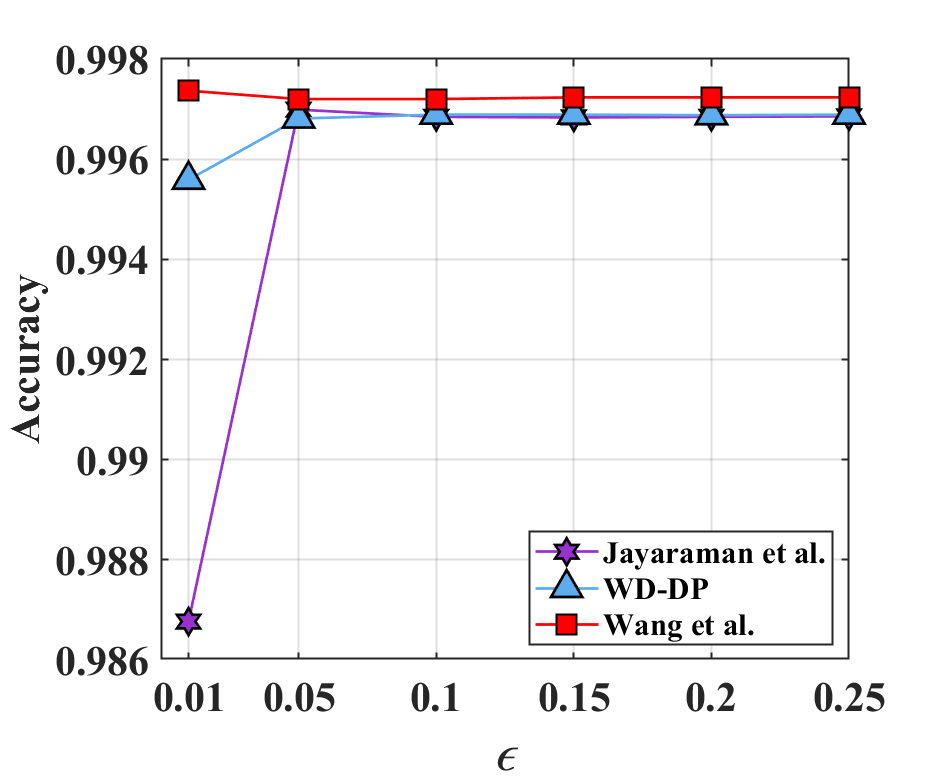}}
\subfigure[Adult]{\includegraphics[width=0.3\textwidth]{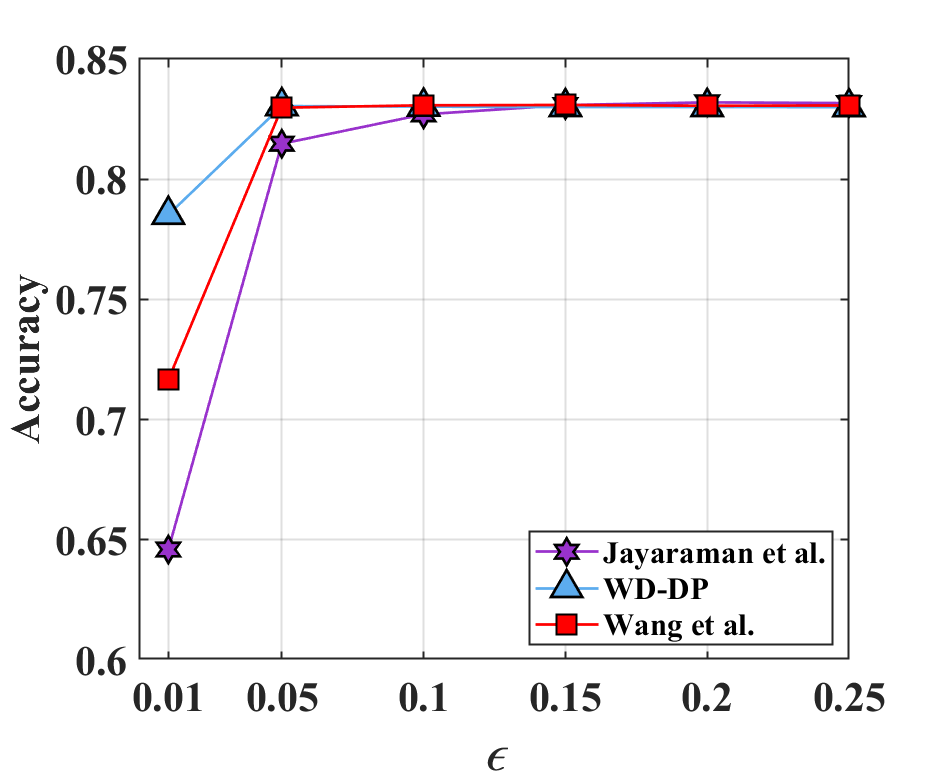}}
\subfigure[Bank]{\includegraphics[width=0.3\textwidth]{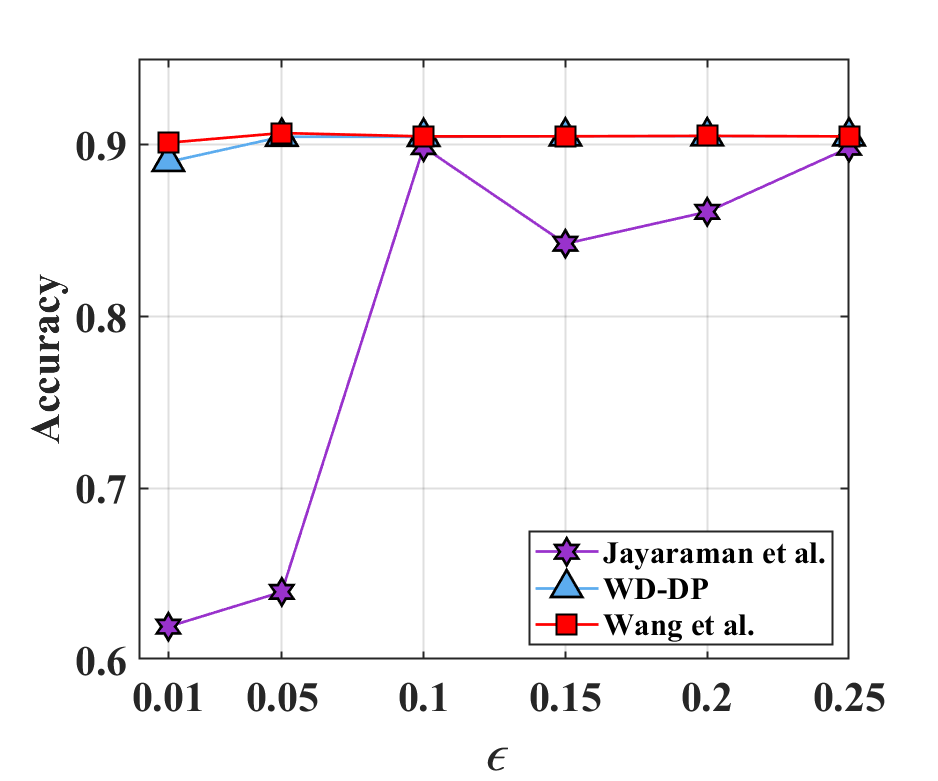}}
\subfigure[Breast Cancer]{\includegraphics[width=0.3\textwidth]{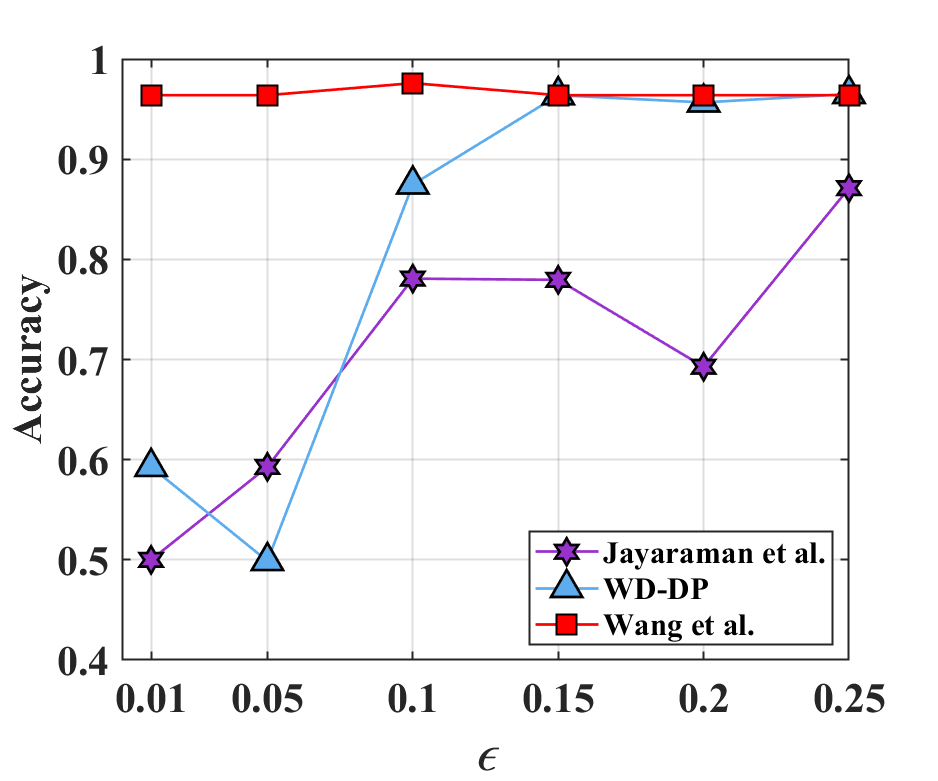}}
\subfigure[Credit Card Fraud]{\includegraphics[width=0.3\textwidth]{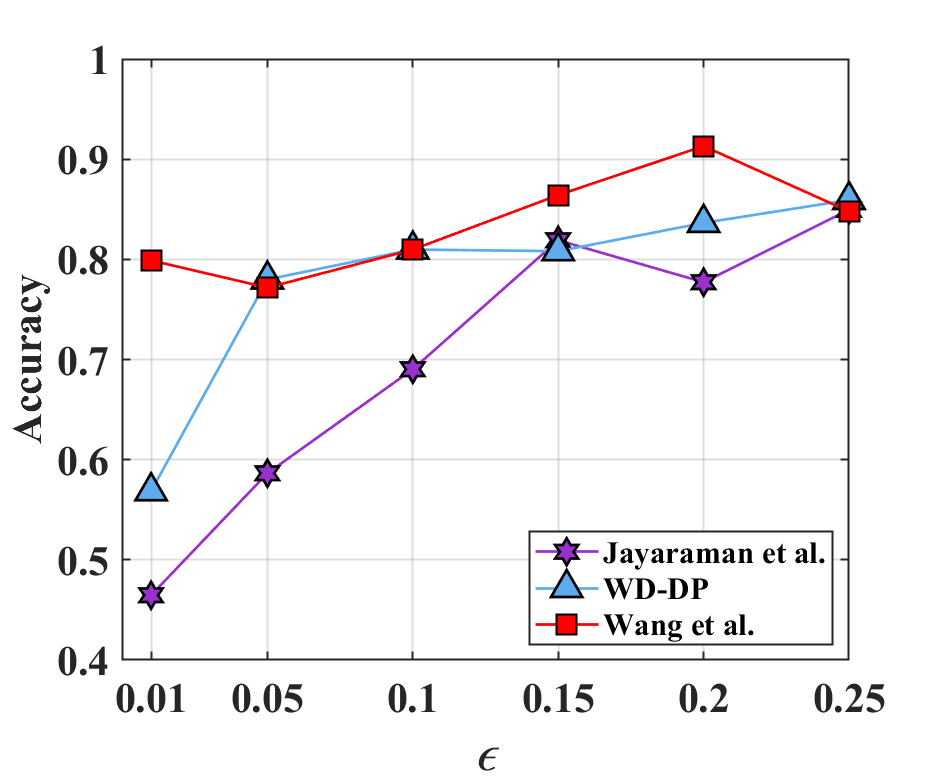}}
\caption{Accuracy on data sets over privacy budget $\epsilon$. $m=2$, data instances owned by each client is not the same.}
\end{figure*}

\begin{figure*}[htb]
\centering
\subfigure[KDDCup99]{\includegraphics[width=0.3\textwidth]{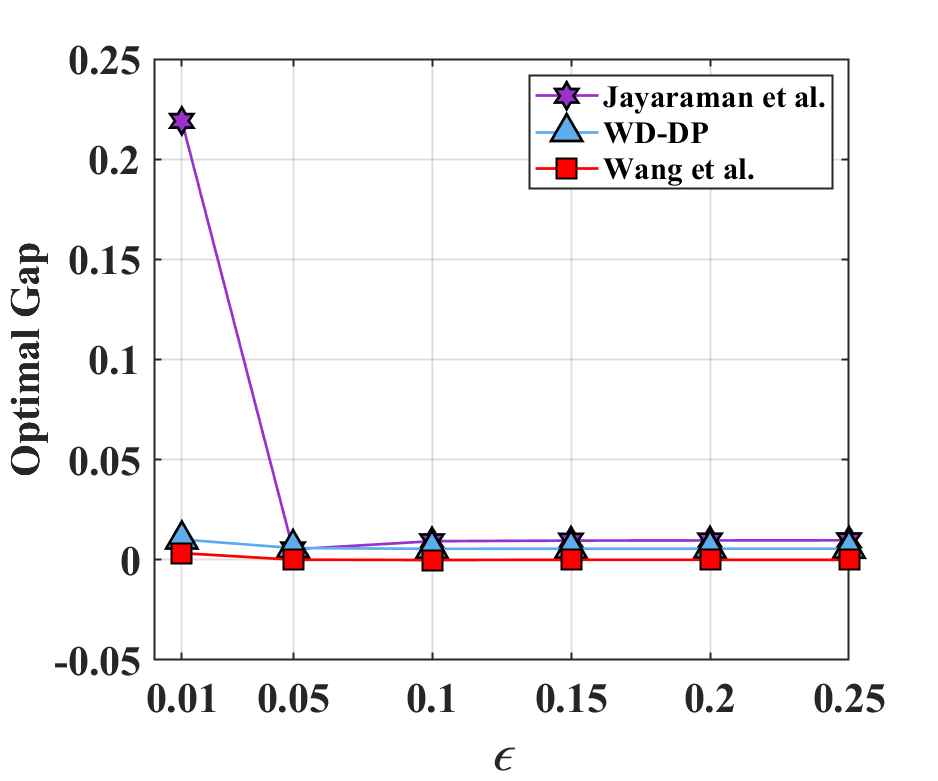}}
\subfigure[Adult]{\includegraphics[width=0.3\textwidth]{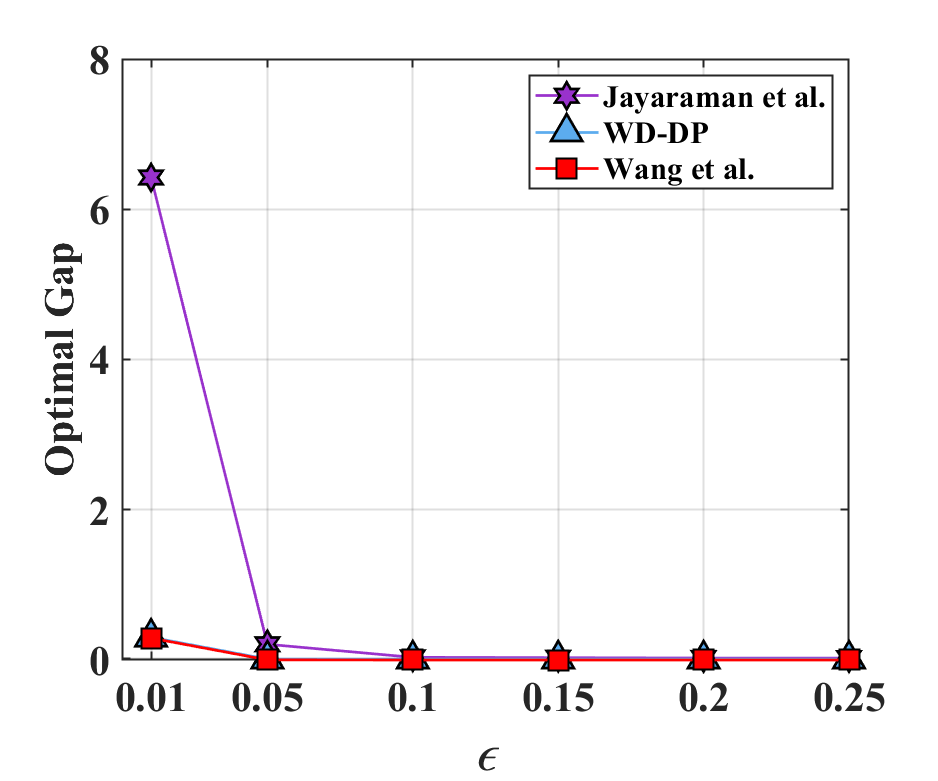}}
\subfigure[Bank]{\includegraphics[width=0.3\textwidth]{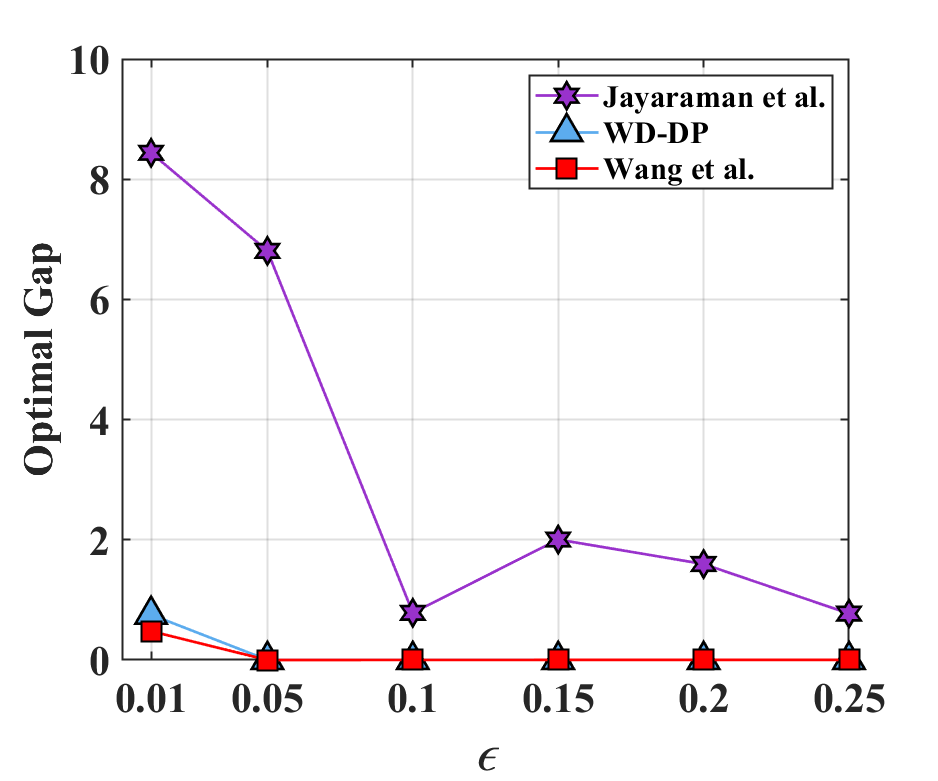}}
\subfigure[Breast Cancer]{\includegraphics[width=0.3\textwidth]{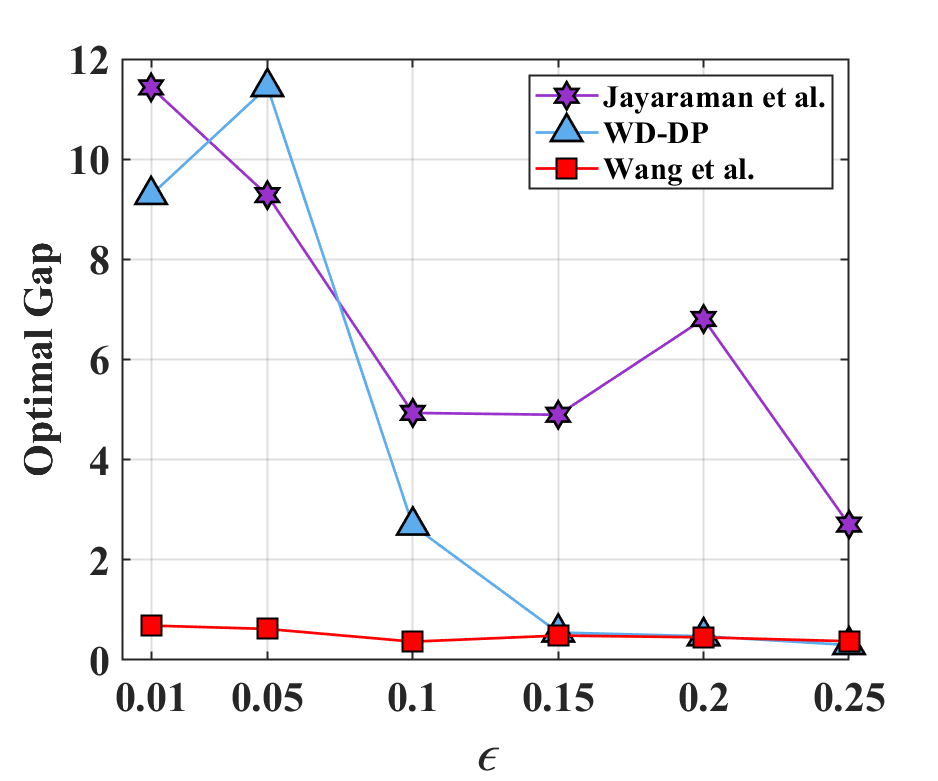}}
\subfigure[Credit Card Fraud]{\includegraphics[width=0.3\textwidth]{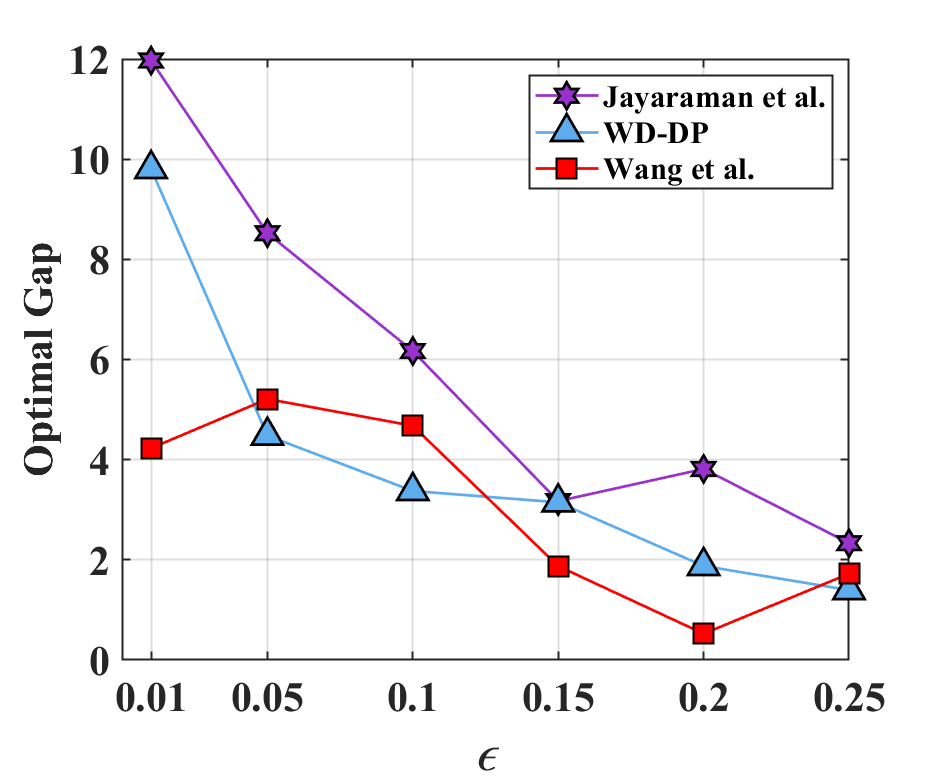}}
\caption{Optimal gap on data sets over privacy budget $\epsilon$. $m=2$, data instances owned by each client is not the same.}
\end{figure*}

\begin{figure*}[htb]
\centering
\subfigure[KDDCup99]{\includegraphics[width=0.3\textwidth]{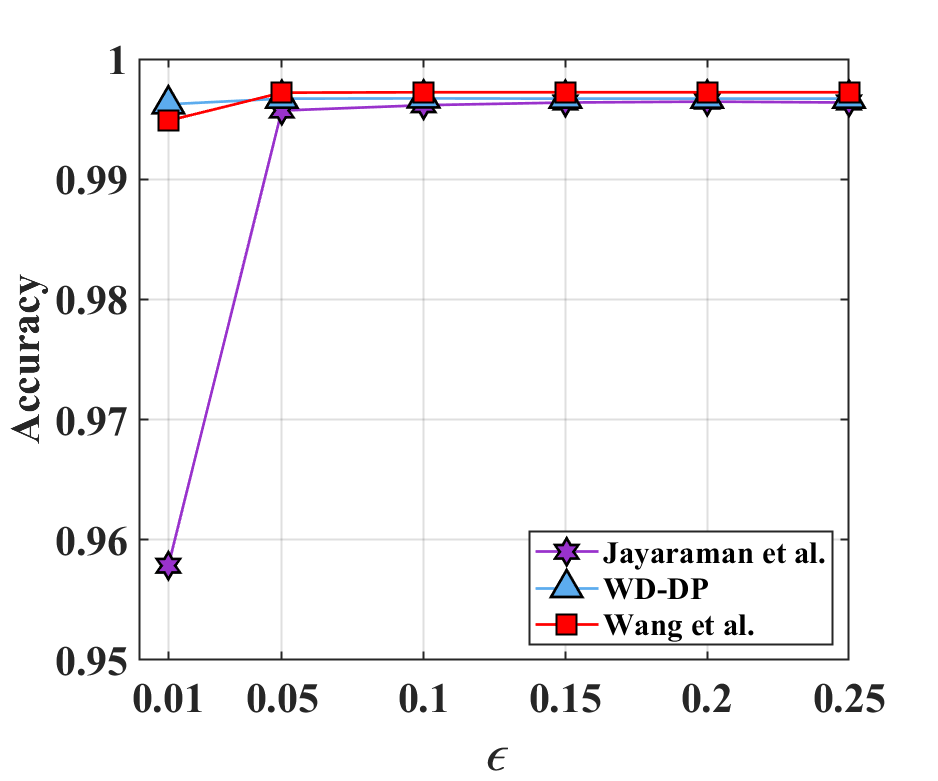}}
\subfigure[Adult]{\includegraphics[width=0.3\textwidth]{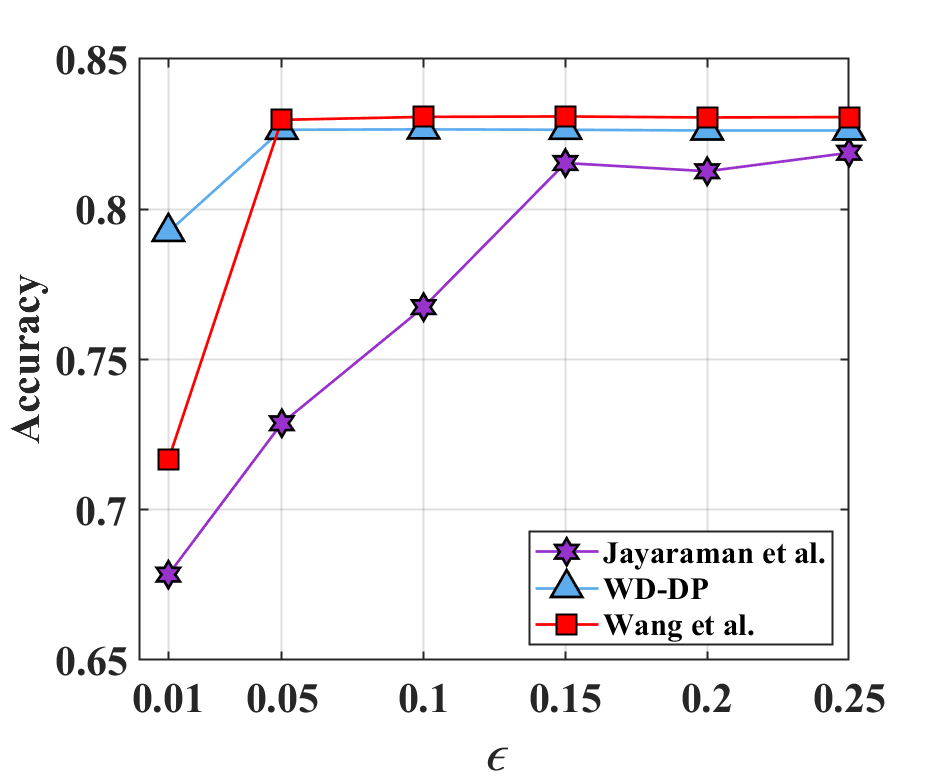}}
\subfigure[Bank]{\includegraphics[width=0.3\textwidth]{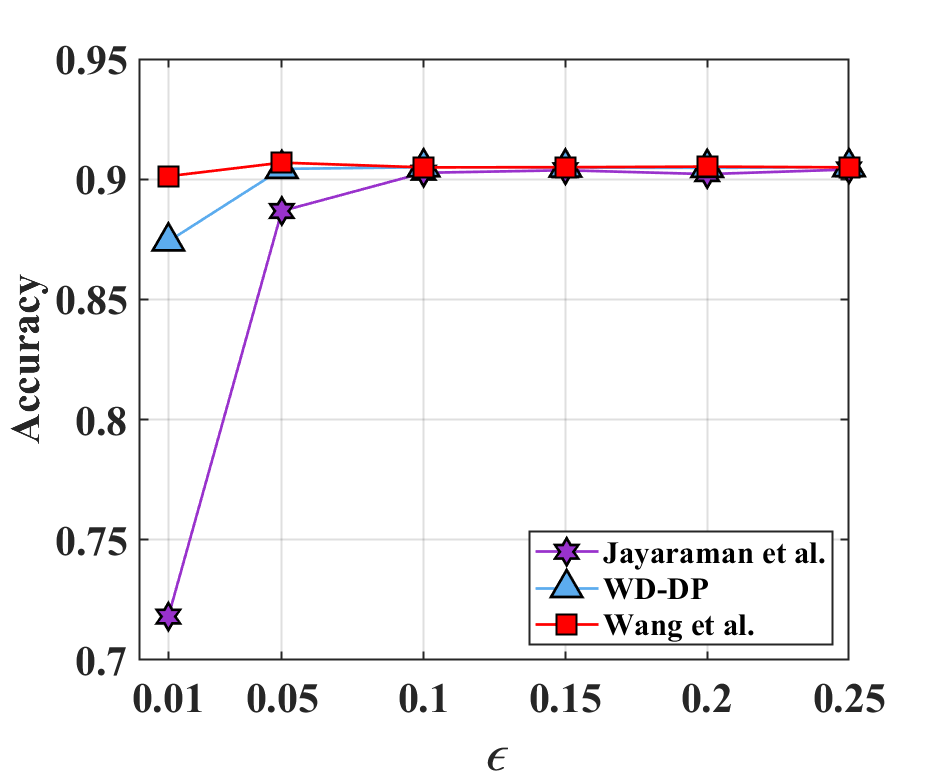}}
\subfigure[Breast Cancer]{\includegraphics[width=0.3\textwidth]{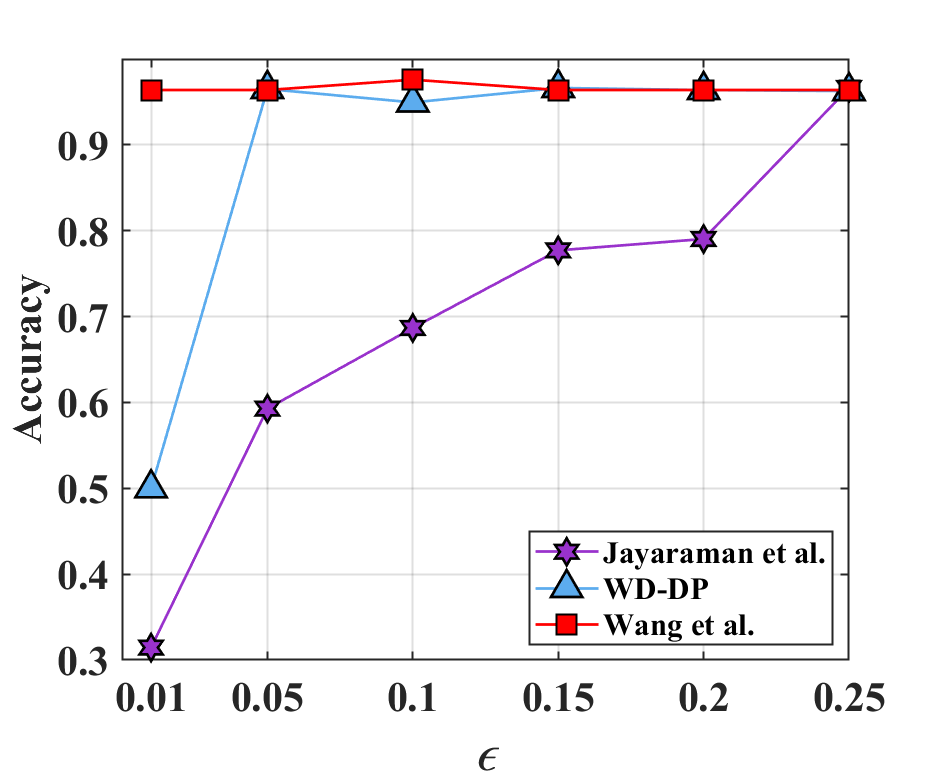}}
\subfigure[Credit Card Fraud]{\includegraphics[width=0.3\textwidth]{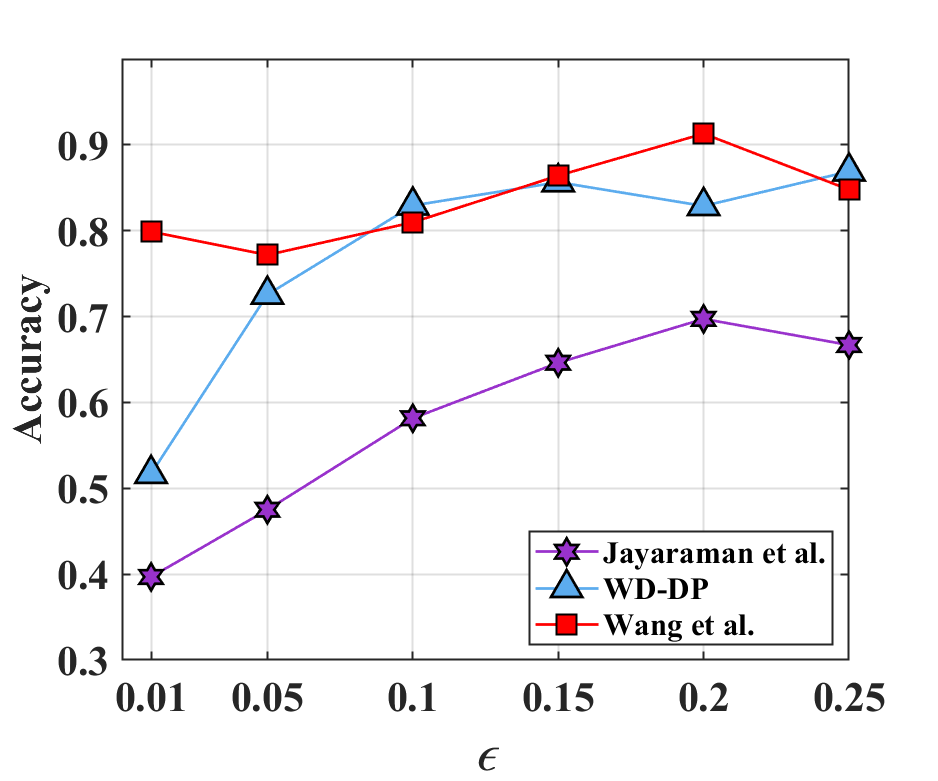}}
\caption{Accuracy on data sets over privacy budget $\epsilon$. $m=4$, data instances owned by each client is not the same.}
\end{figure*}

\begin{figure*}[htb]
\centering
\subfigure[KDDCup99]{\includegraphics[width=0.3\textwidth]{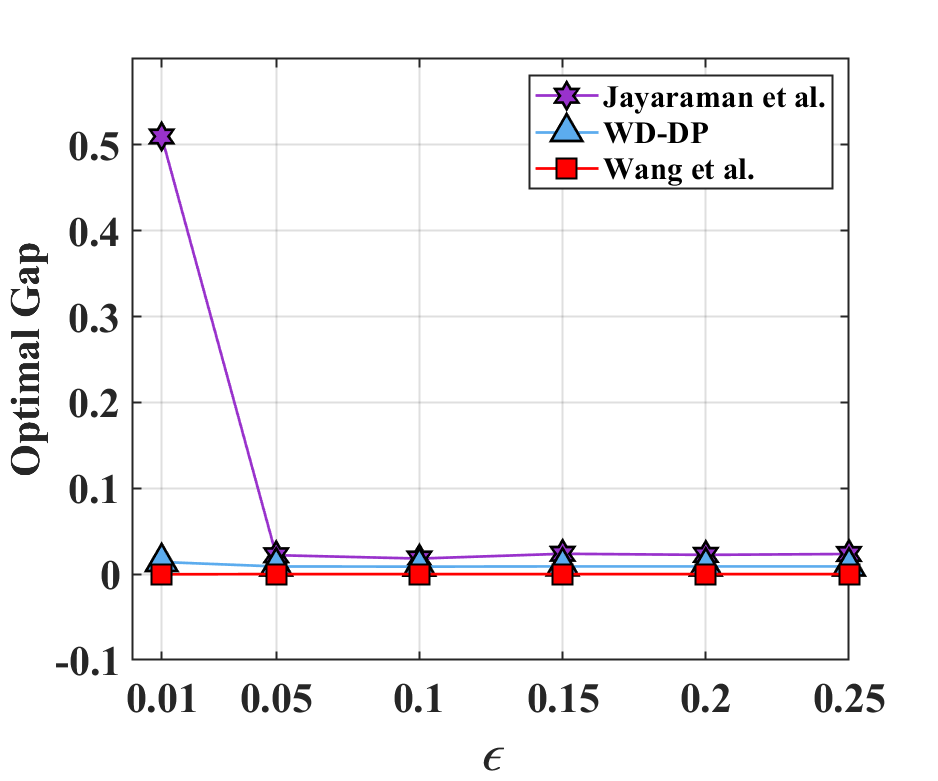}}
\subfigure[Adult]{\includegraphics[width=0.3\textwidth]{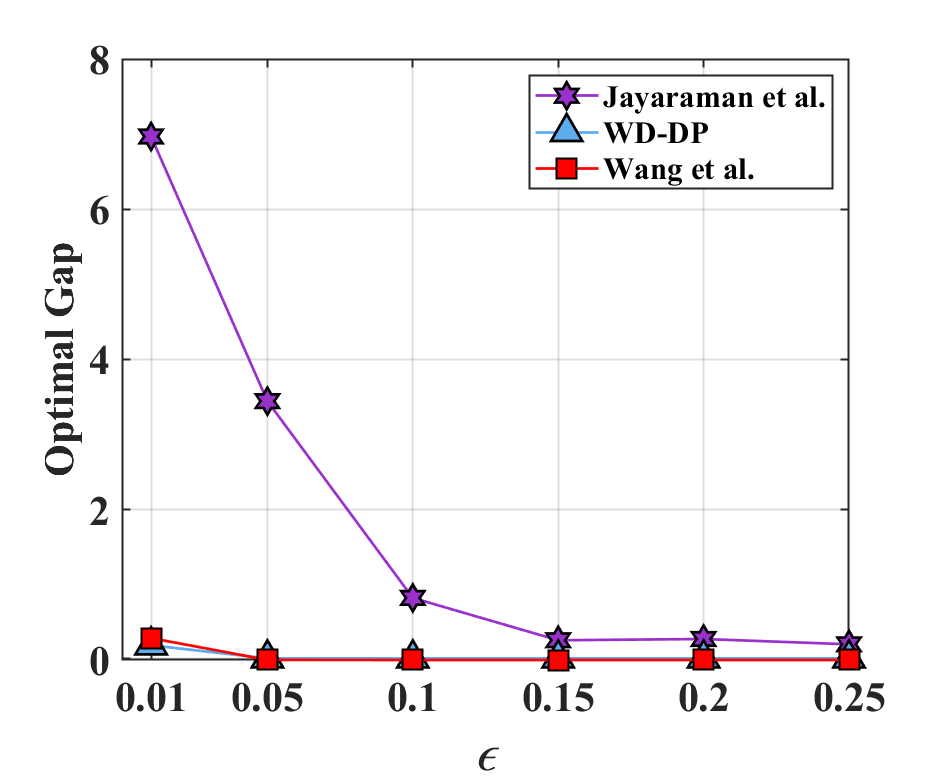}}
\subfigure[Bank]{\includegraphics[width=0.3\textwidth]{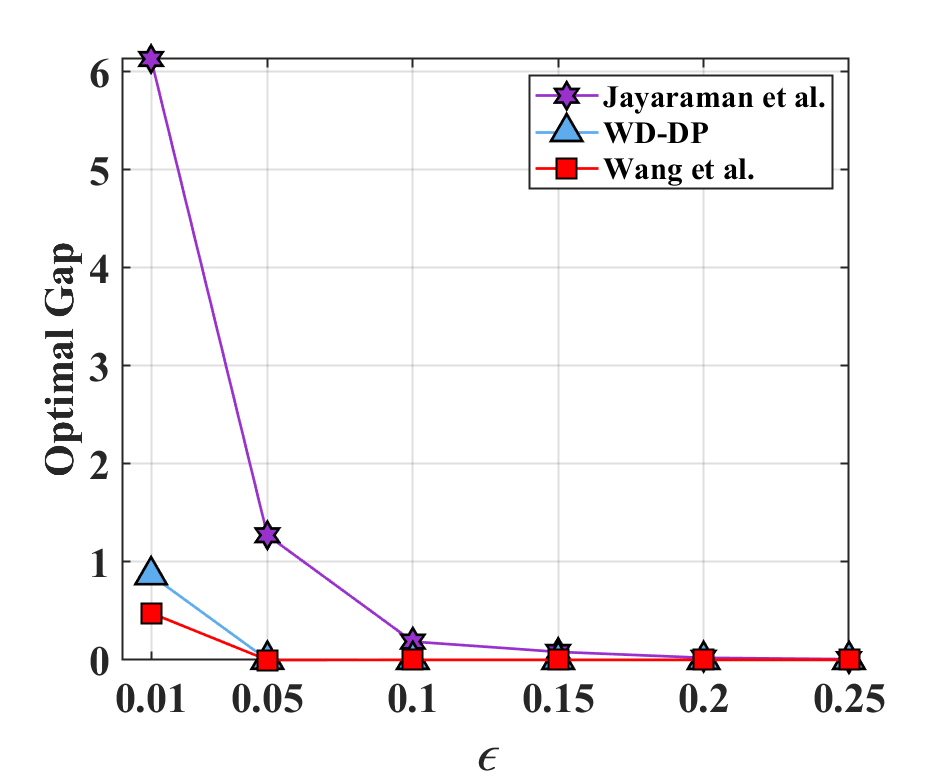}}
\subfigure[Breast Cancer]{\includegraphics[width=0.3\textwidth]{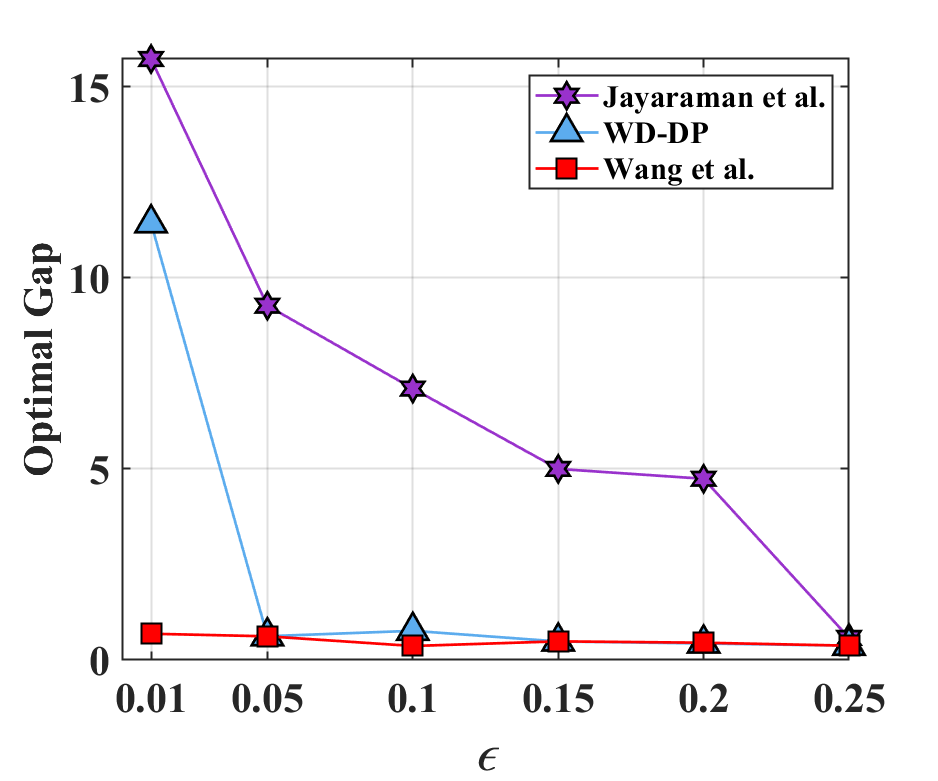}}
\subfigure[Credit Card Fraud]{\includegraphics[width=0.3\textwidth]{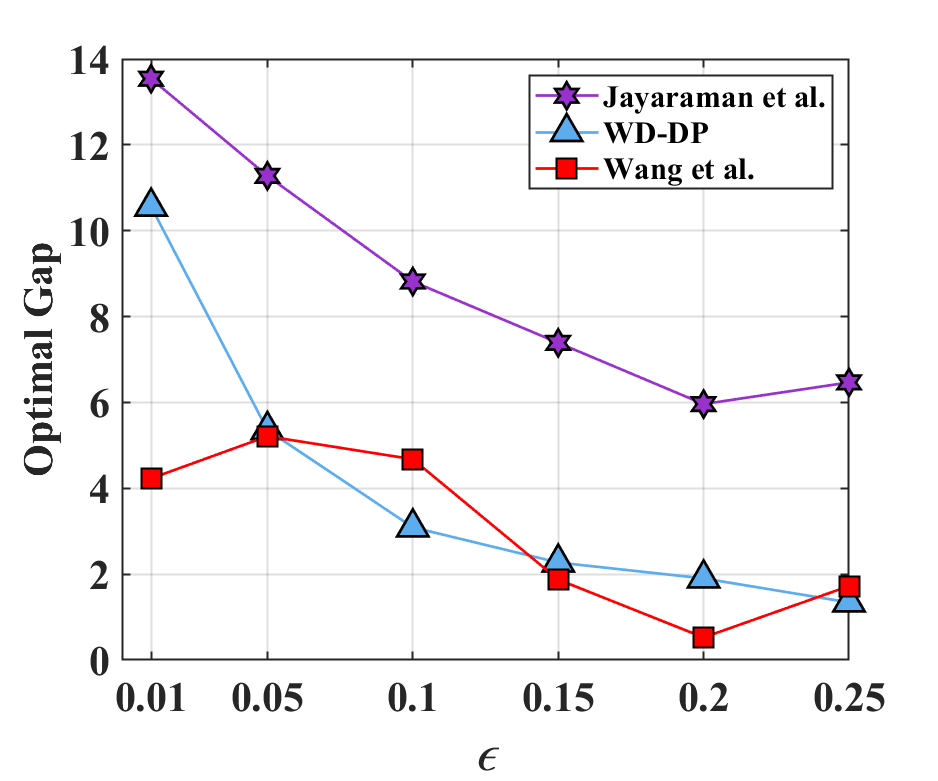}}
\caption{Optimal gap on data sets over privacy budget $\epsilon$. $m=4$, data instances owned by each client is not the same.}
\end{figure*}

\begin{figure*}[htb]
\centering
\subfigure[KDDCup99]{\includegraphics[width=0.3\textwidth]{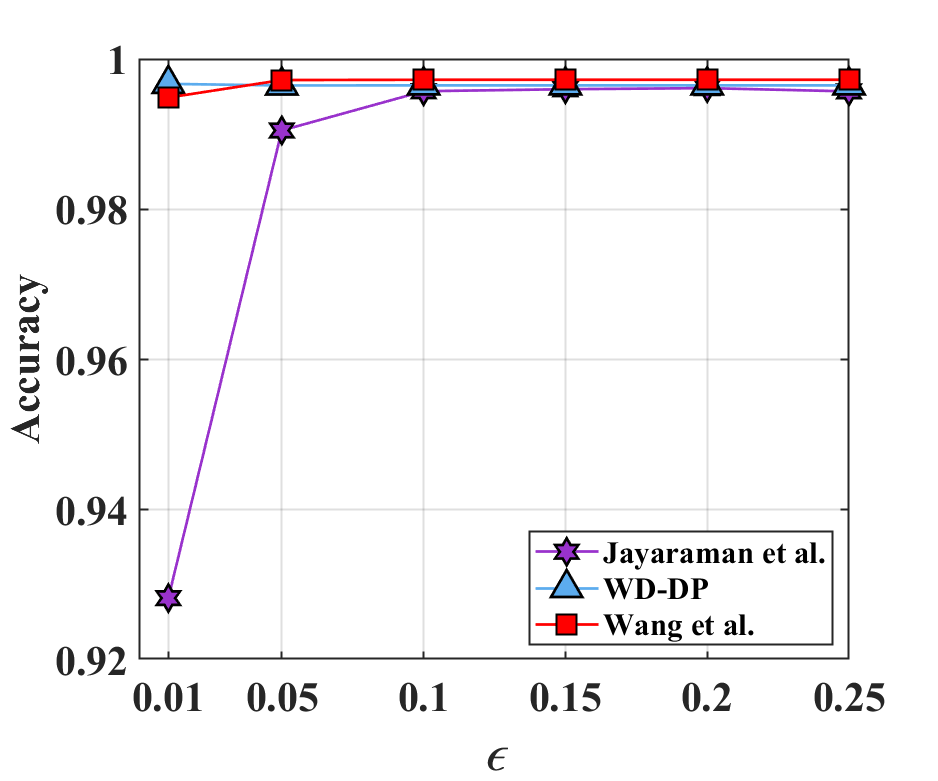}}
\subfigure[Adult]{\includegraphics[width=0.3\textwidth]{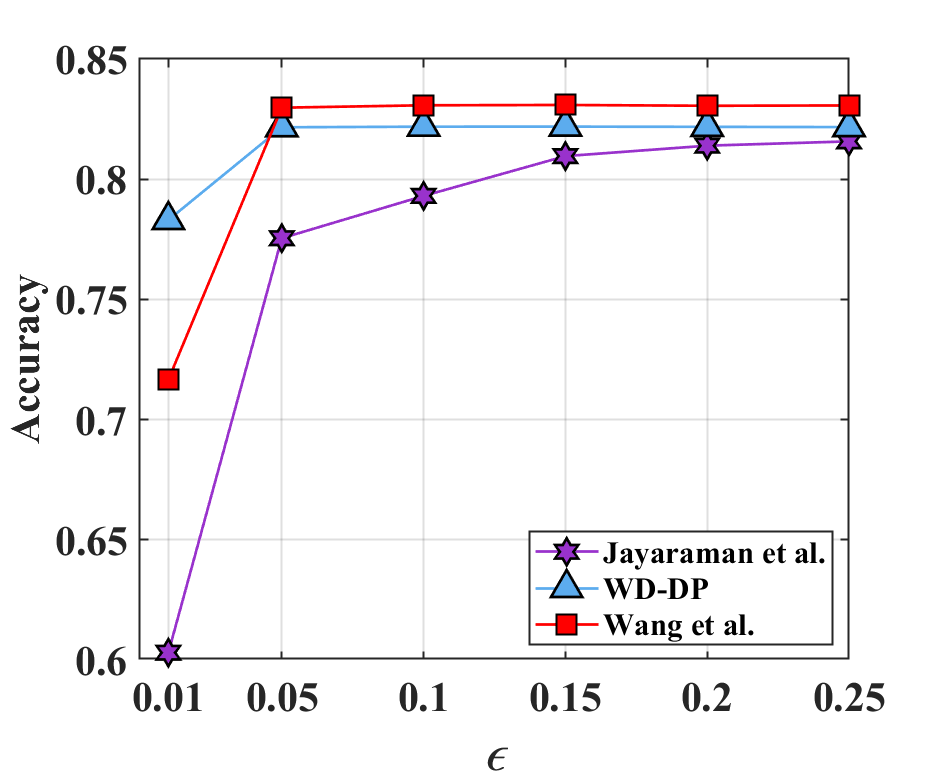}}
\subfigure[Bank]{\includegraphics[width=0.3\textwidth]{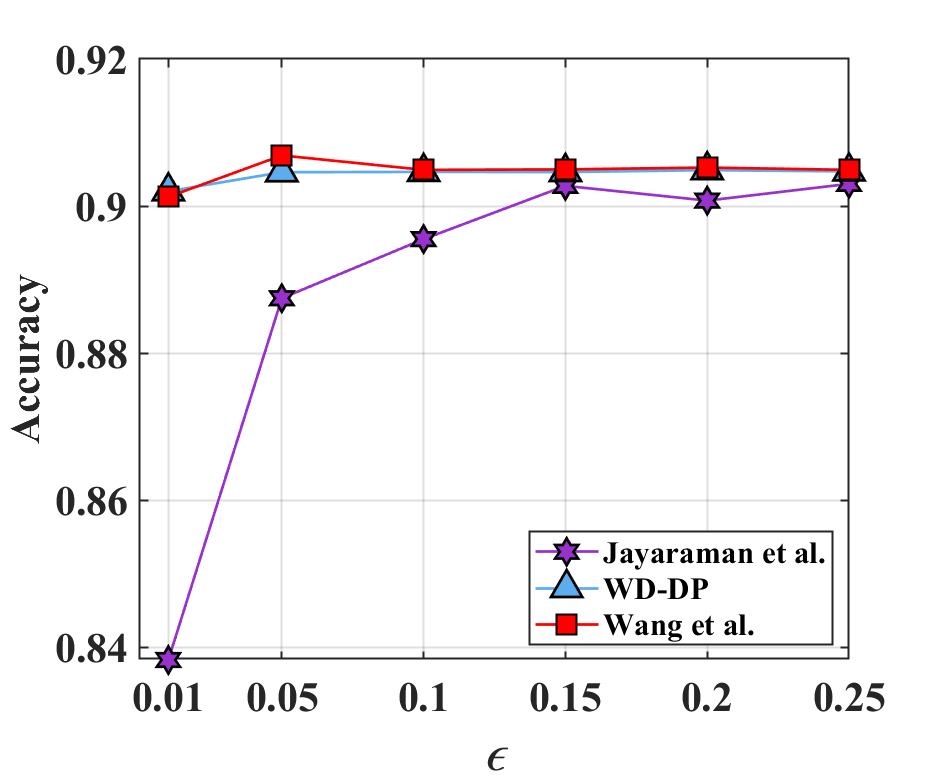}}
\subfigure[Breast Cancer]{\includegraphics[width=0.3\textwidth]{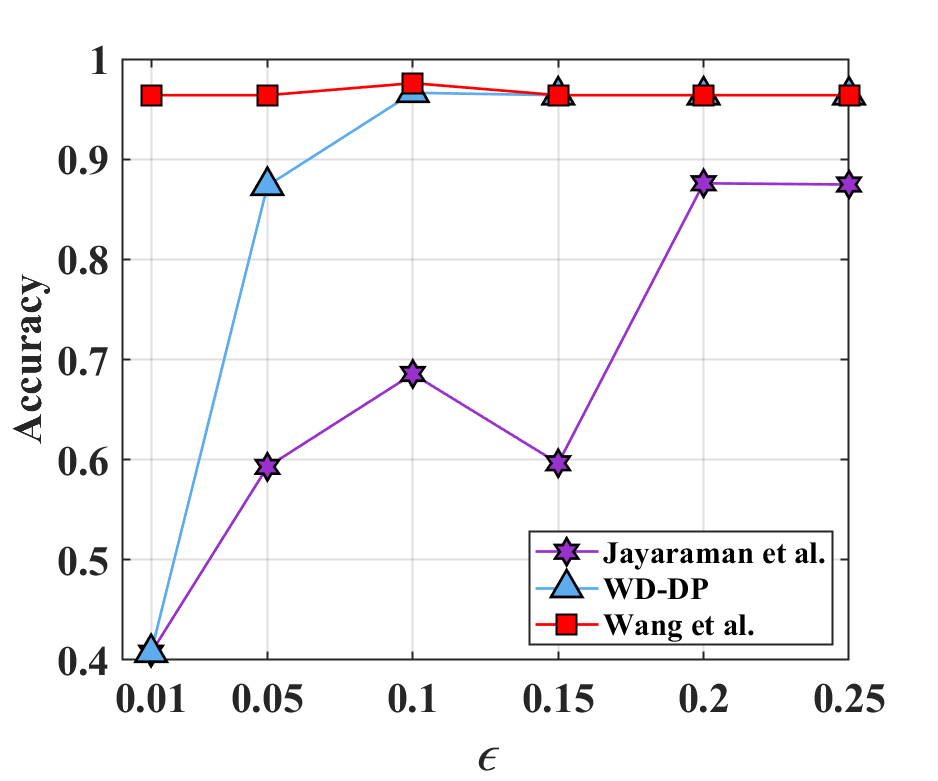}}
\subfigure[Credit Card Fraud]{\includegraphics[width=0.3\textwidth]{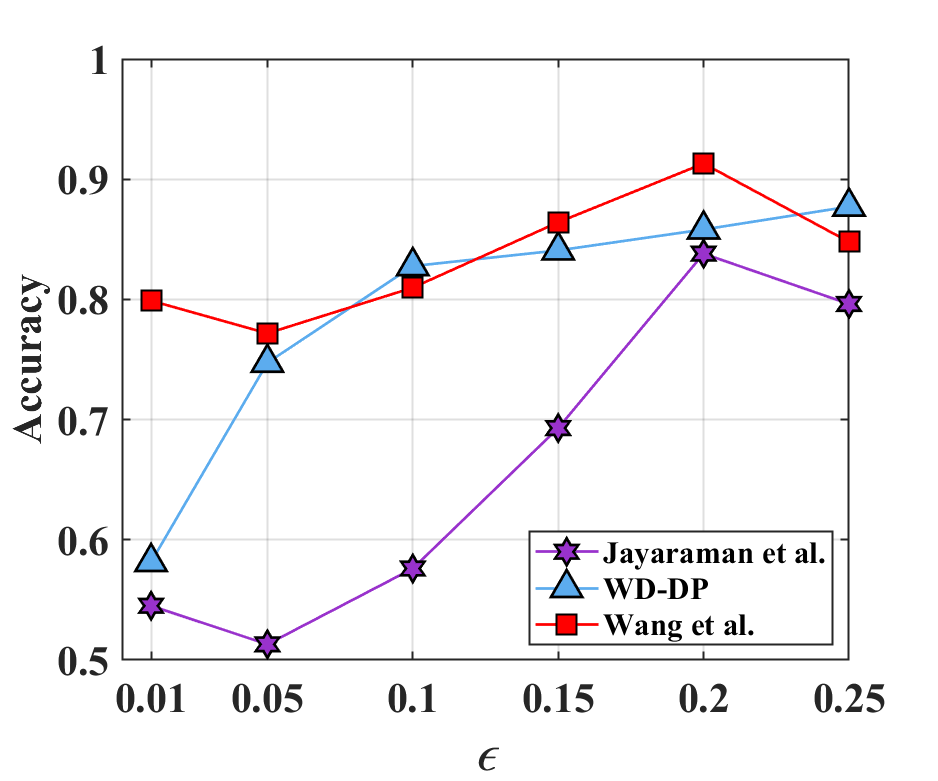}}
\caption{Accuracy on data sets over privacy budget $\epsilon$. $m=8$, data instances owned by each client is not the same.}
\end{figure*}

\begin{figure*}[htb]
\centering
\subfigure[KDDCup99]{\includegraphics[width=0.3\textwidth]{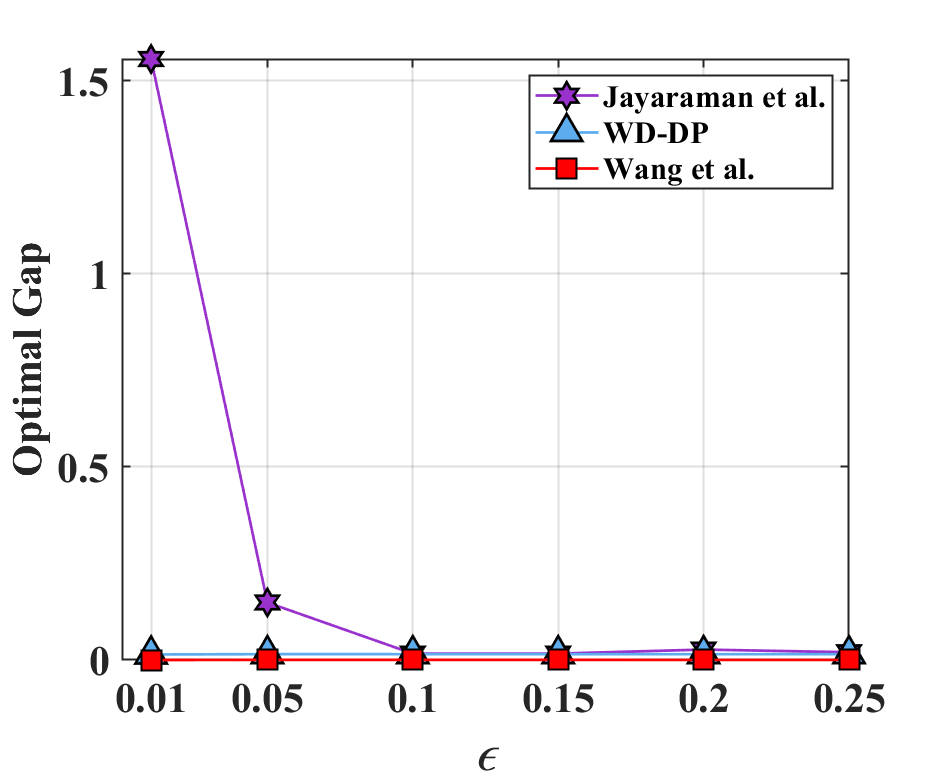}}
\subfigure[Adult]{\includegraphics[width=0.3\textwidth]{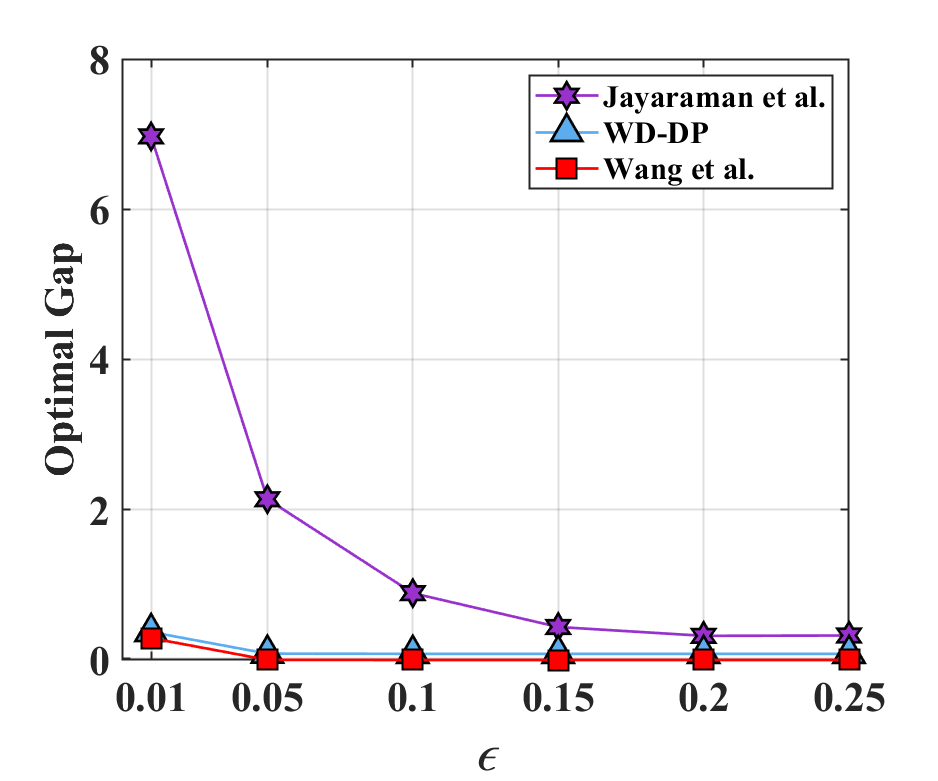}}
\subfigure[Bank]{\includegraphics[width=0.3\textwidth]{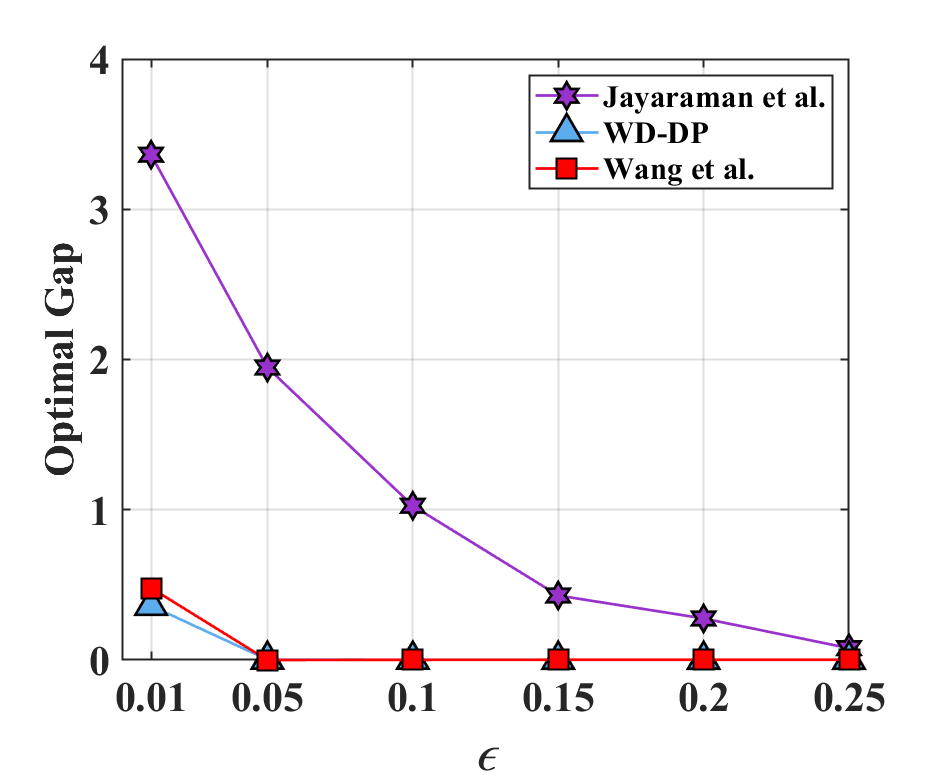}}
\subfigure[Breast Cancer]{\includegraphics[width=0.3\textwidth]{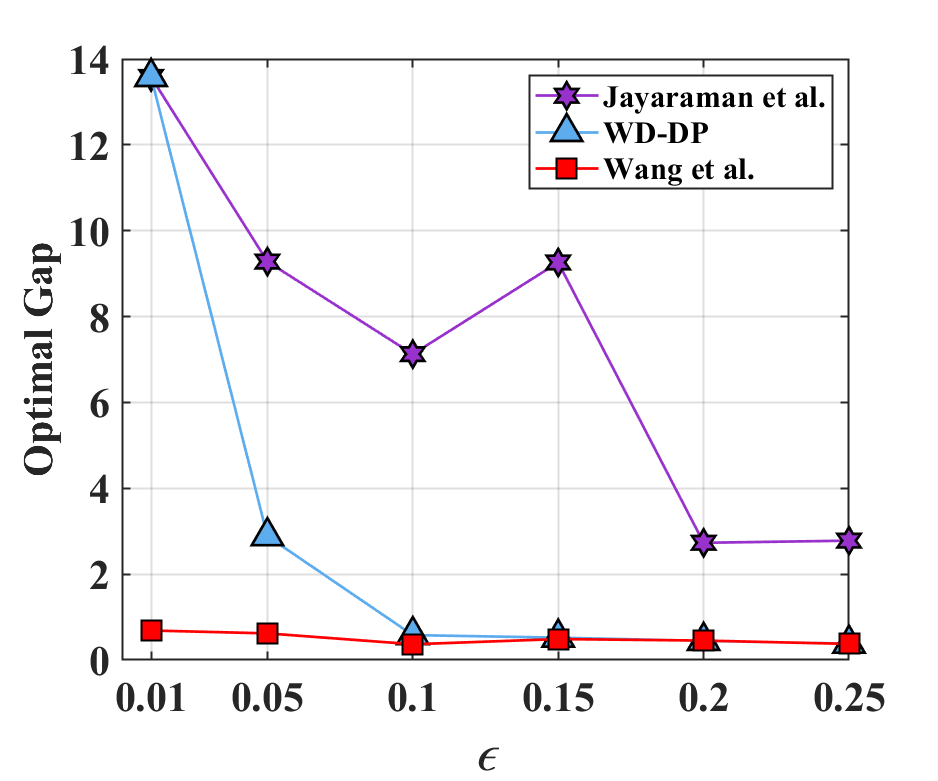}}
\subfigure[Credit Card Fraud]{\includegraphics[width=0.3\textwidth]{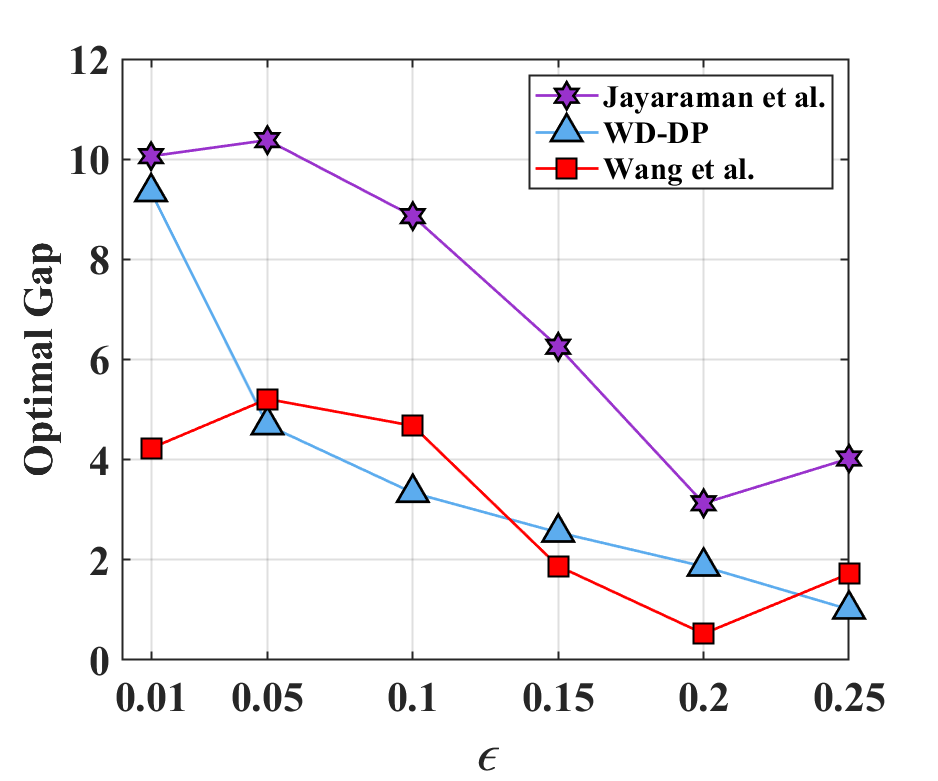}}
\caption{Optimal gap on data sets over privacy budget $\epsilon$. $m=8$, data instances owned by each client is not the same.}
\end{figure*}

\begin{figure*}[htb]
\centering
\subfigure[KDDCup99]{\includegraphics[width=0.3\textwidth]{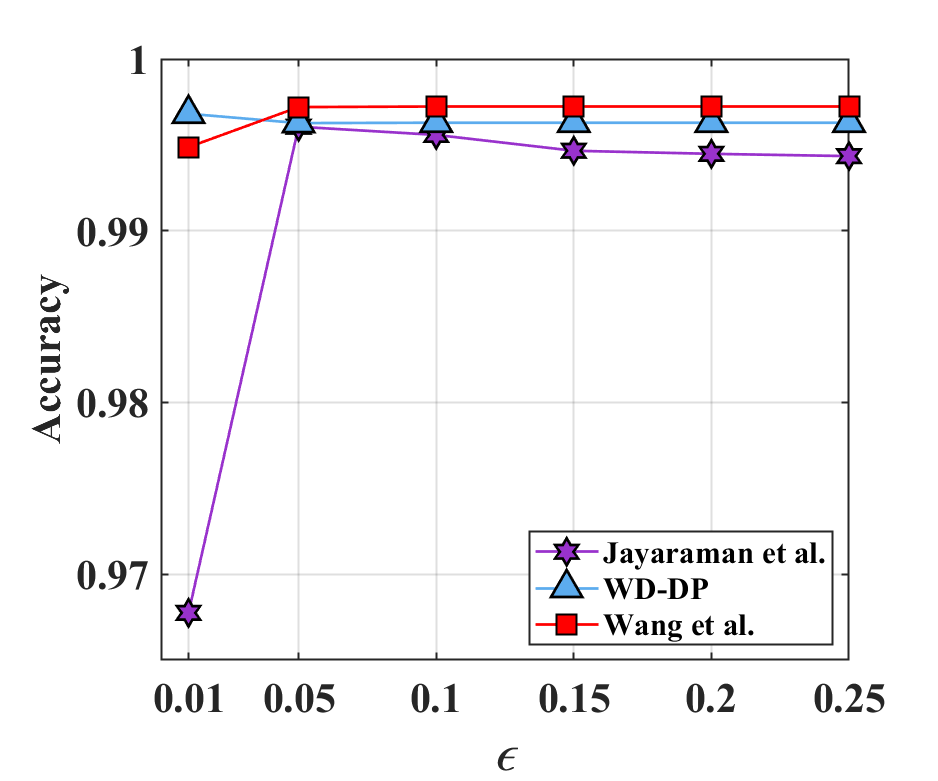}}
\subfigure[Adult]{\includegraphics[width=0.3\textwidth]{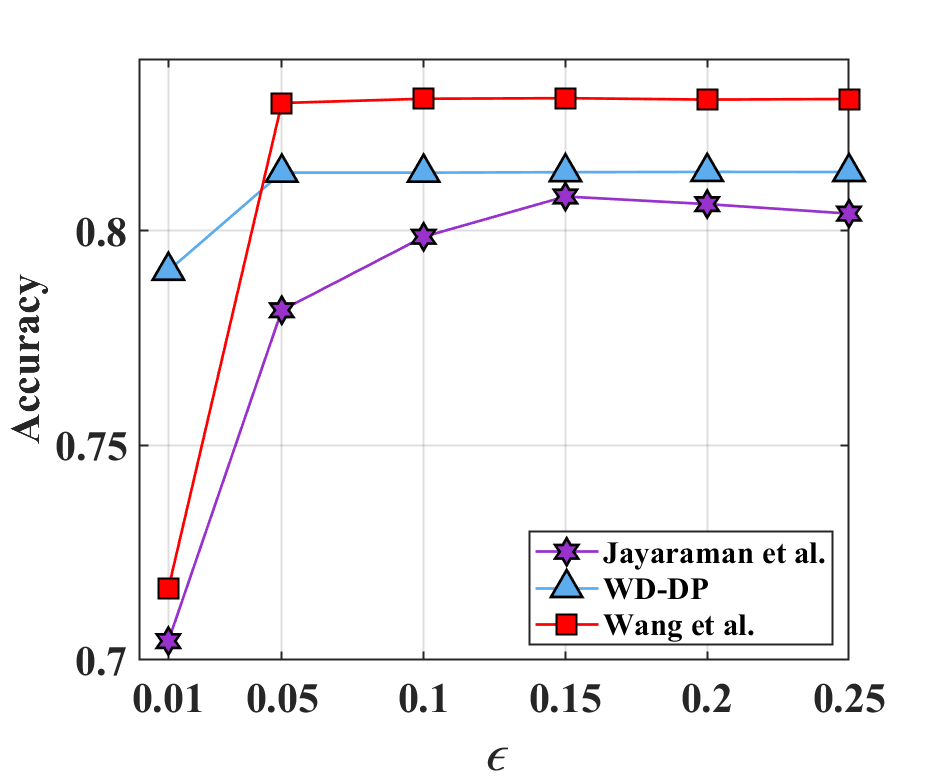}}
\subfigure[Bank]{\includegraphics[width=0.3\textwidth]{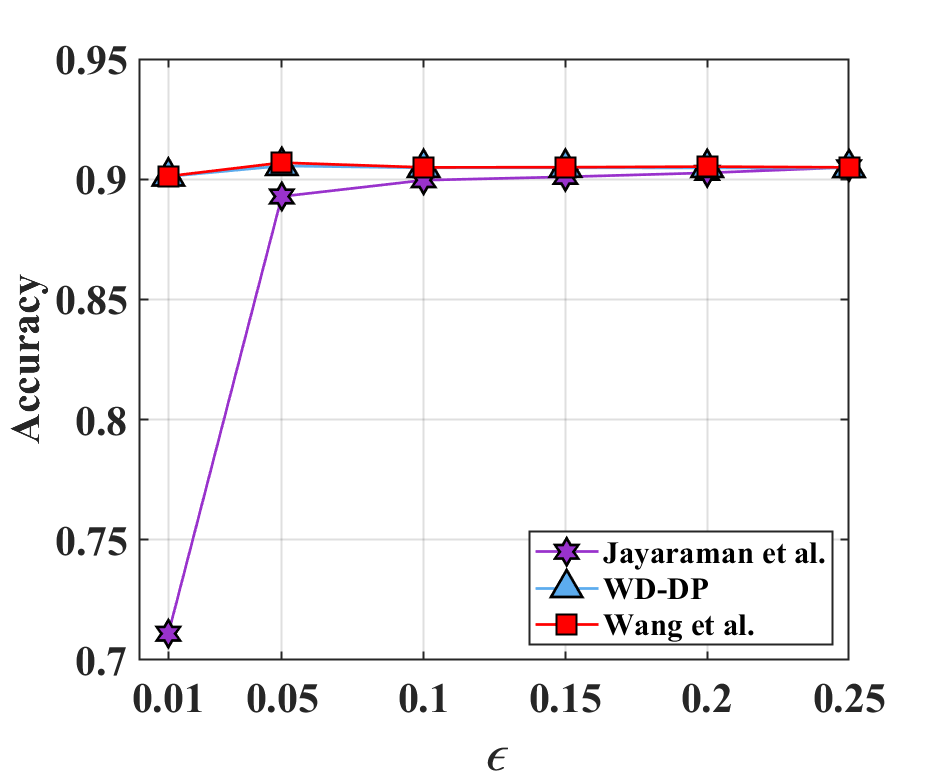}}
\subfigure[Breast Cancer]{\includegraphics[width=0.3\textwidth]{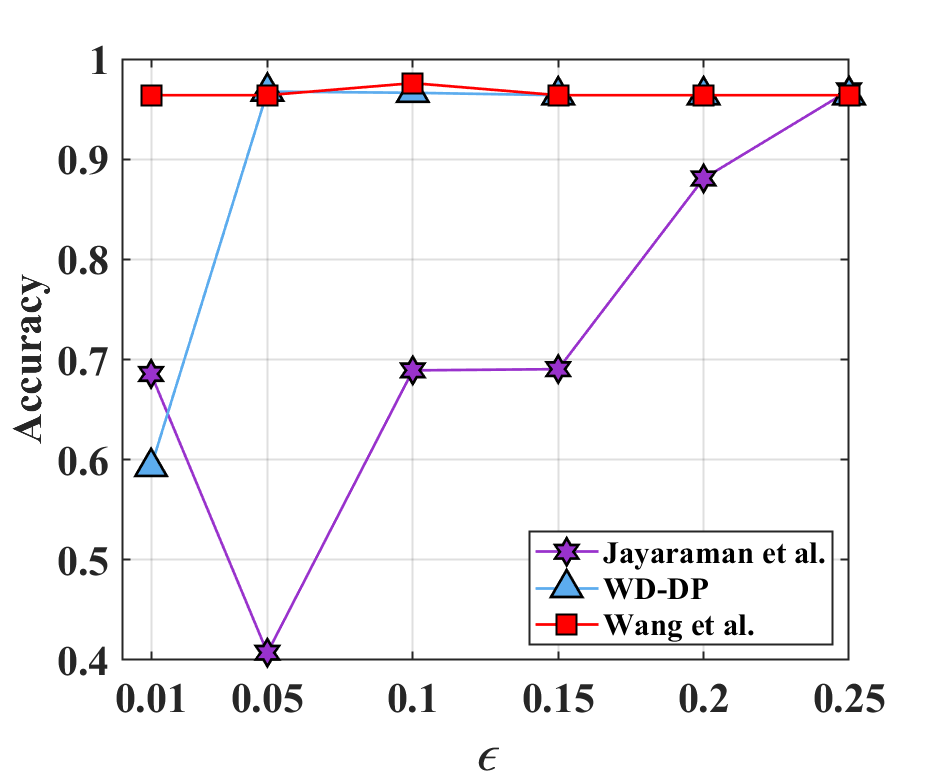}}
\subfigure[Credit Card Fraud]{\includegraphics[width=0.3\textwidth]{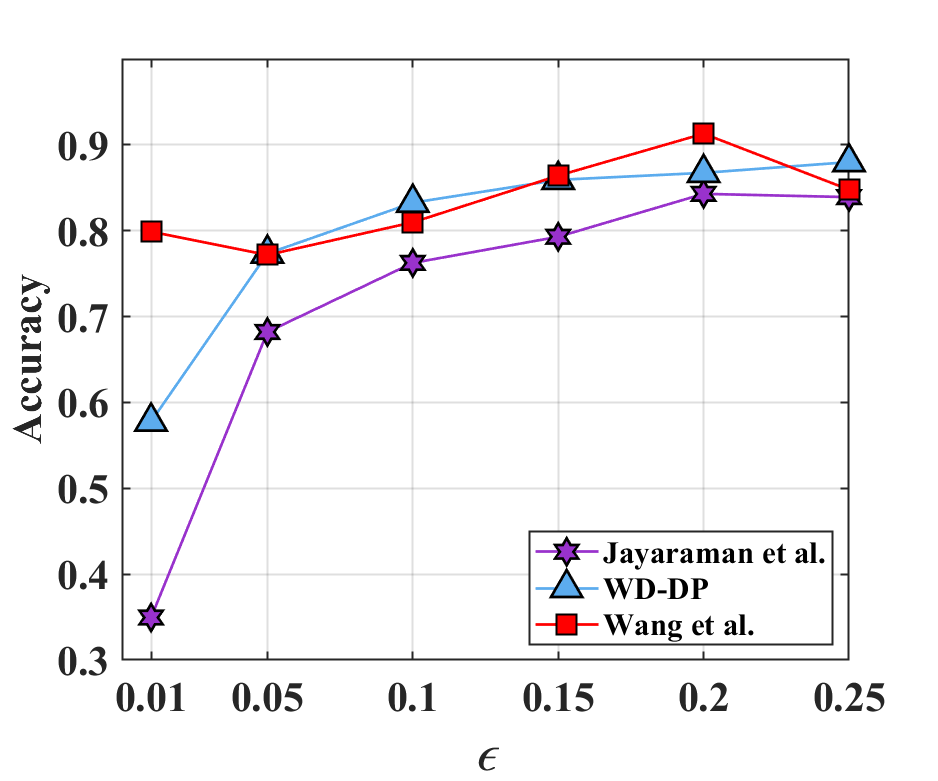}}
\caption{Accuracy on data sets over privacy budget $\epsilon$. $m=16$, data instances owned by each client is not the same.}
\end{figure*}

\begin{figure*}[htb]
\centering
\subfigure[KDDCup99]{\includegraphics[width=0.3\textwidth]{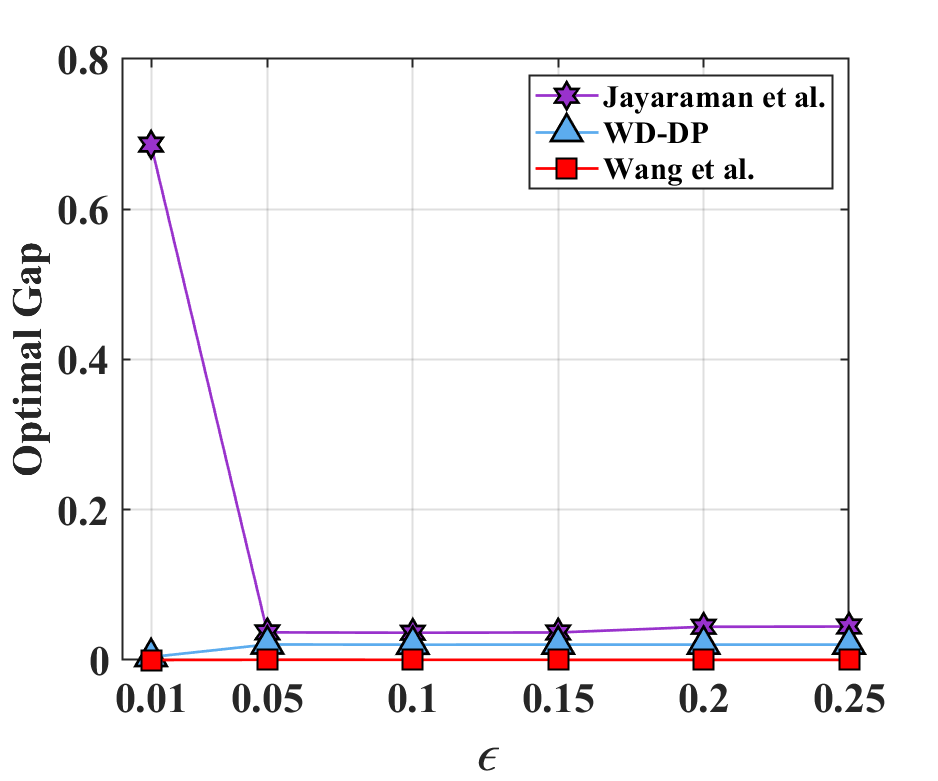}}
\subfigure[Adult]{\includegraphics[width=0.3\textwidth]{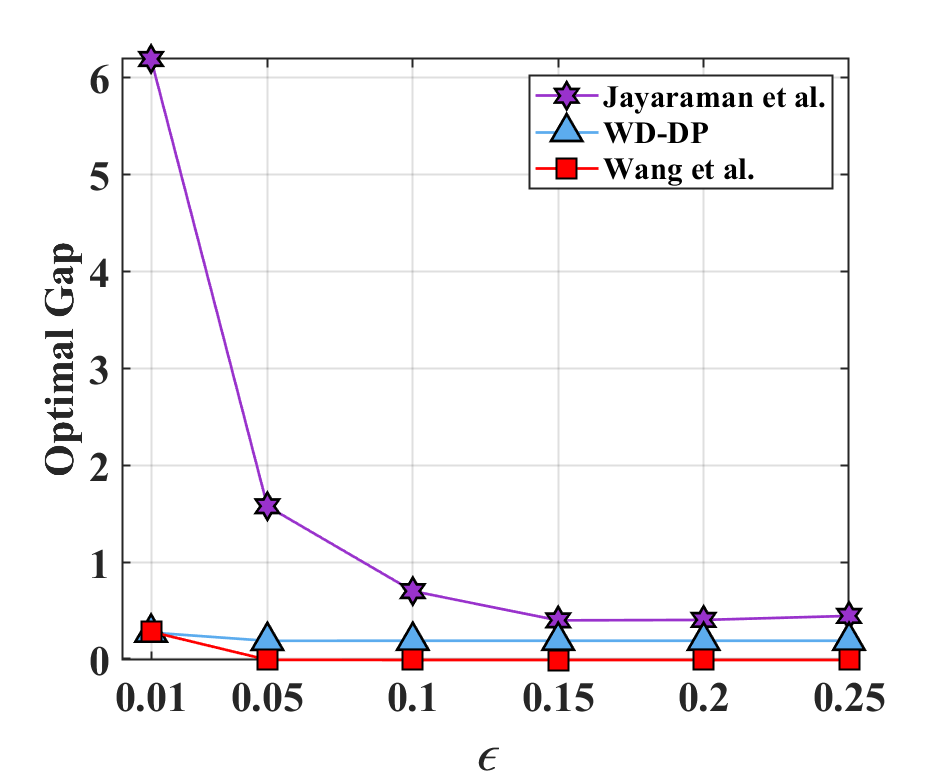}}
\subfigure[Bank]{\includegraphics[width=0.3\textwidth]{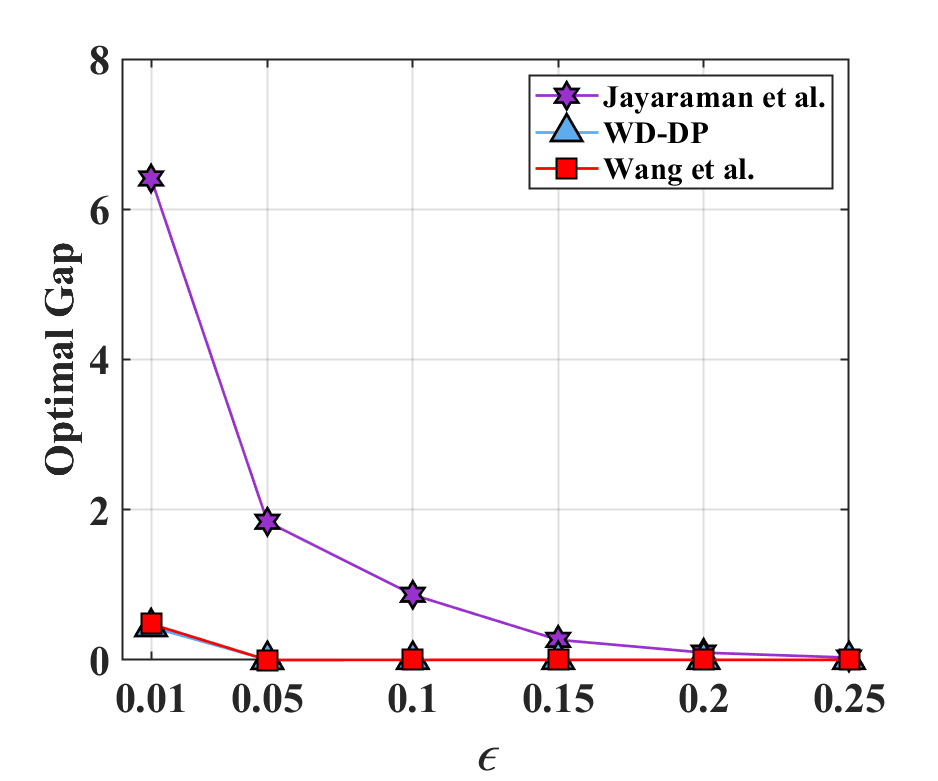}}
\subfigure[Breast Cancer]{\includegraphics[width=0.3\textwidth]{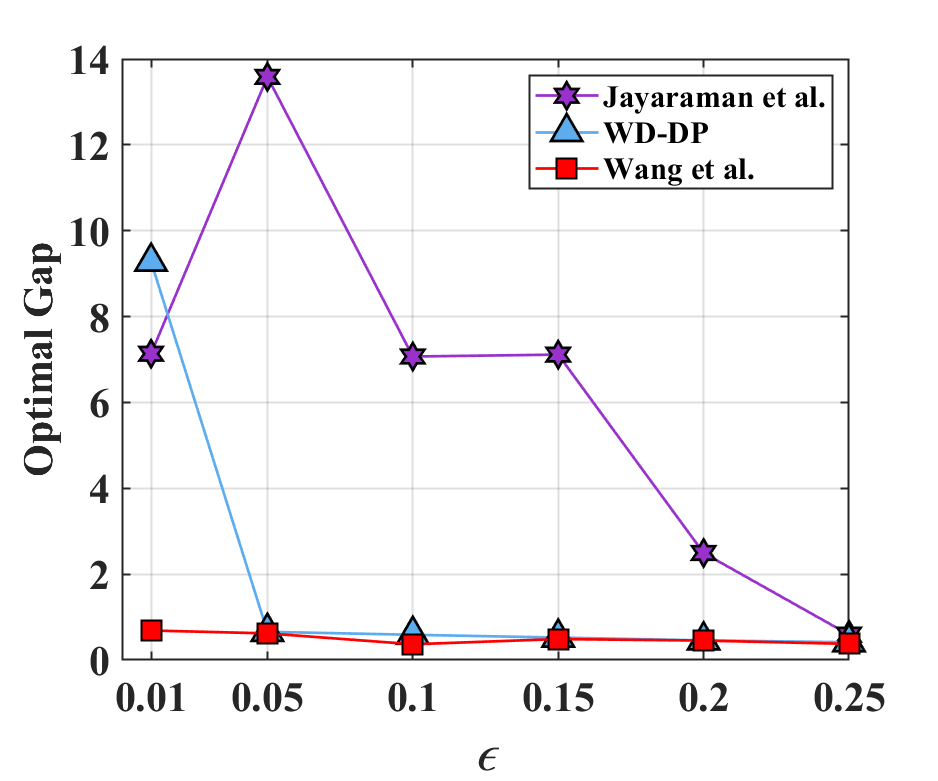}}
\subfigure[Credit Card Fraud]{\includegraphics[width=0.3\textwidth]{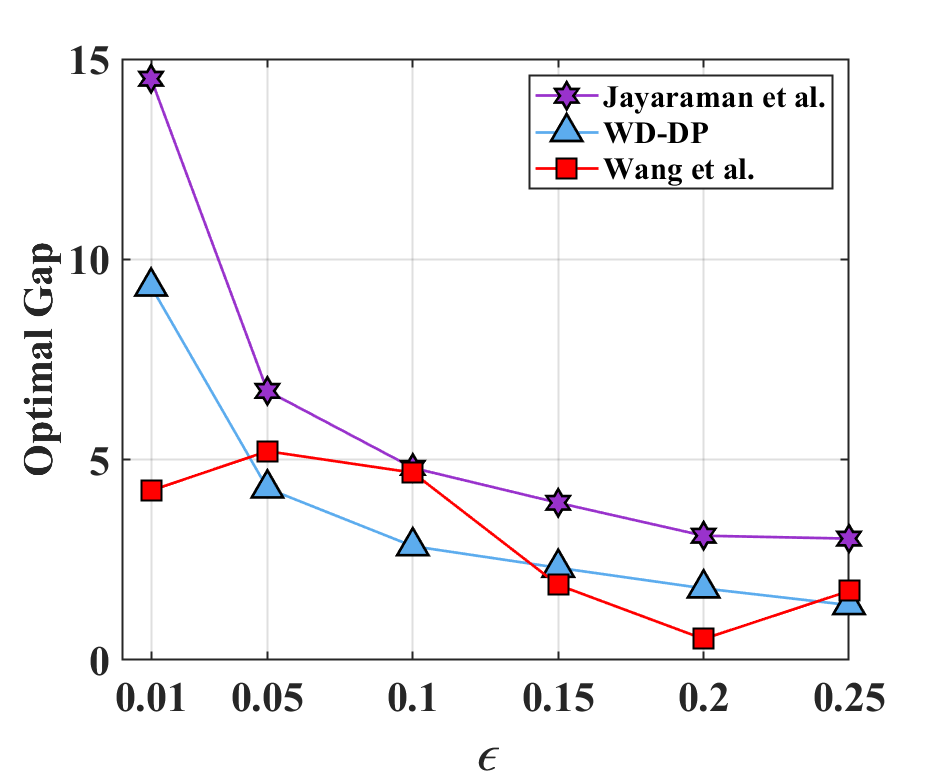}}
\caption{Optimal gap on data sets over privacy budget $\epsilon$. $m=16$, data instances owned by each client is not the same.}
\end{figure*}

\begin{figure*}[htb]
\centering
\subfigure[KDDCup99]{\includegraphics[width=0.3\textwidth]{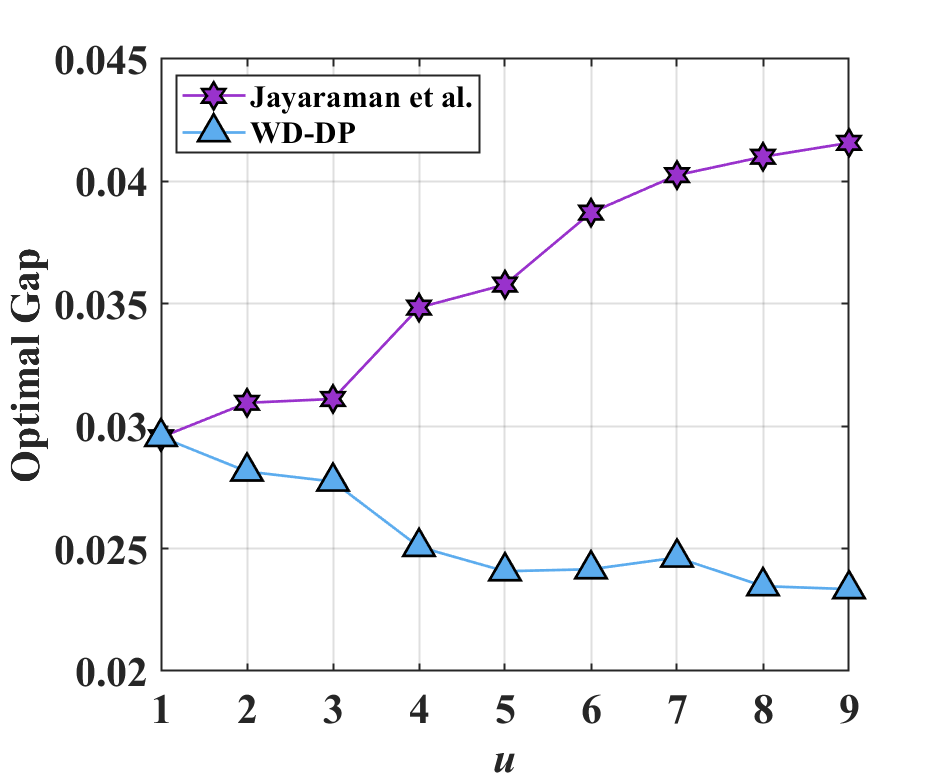}}
\subfigure[Adult]{\includegraphics[width=0.3\textwidth]{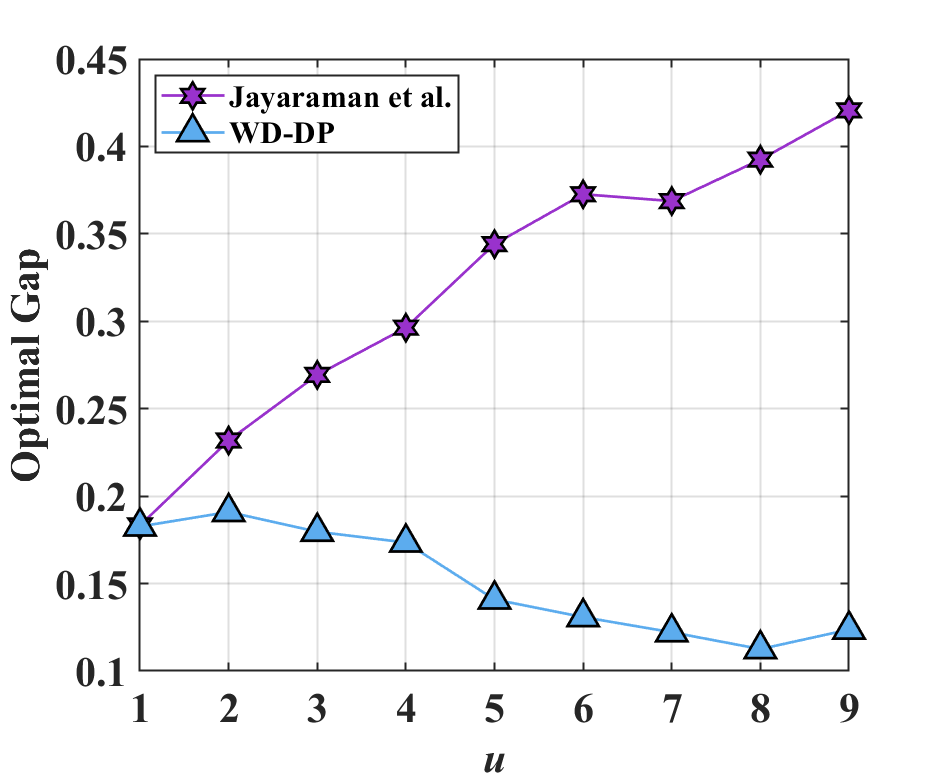}}
\subfigure[Bank]{\includegraphics[width=0.3\textwidth]{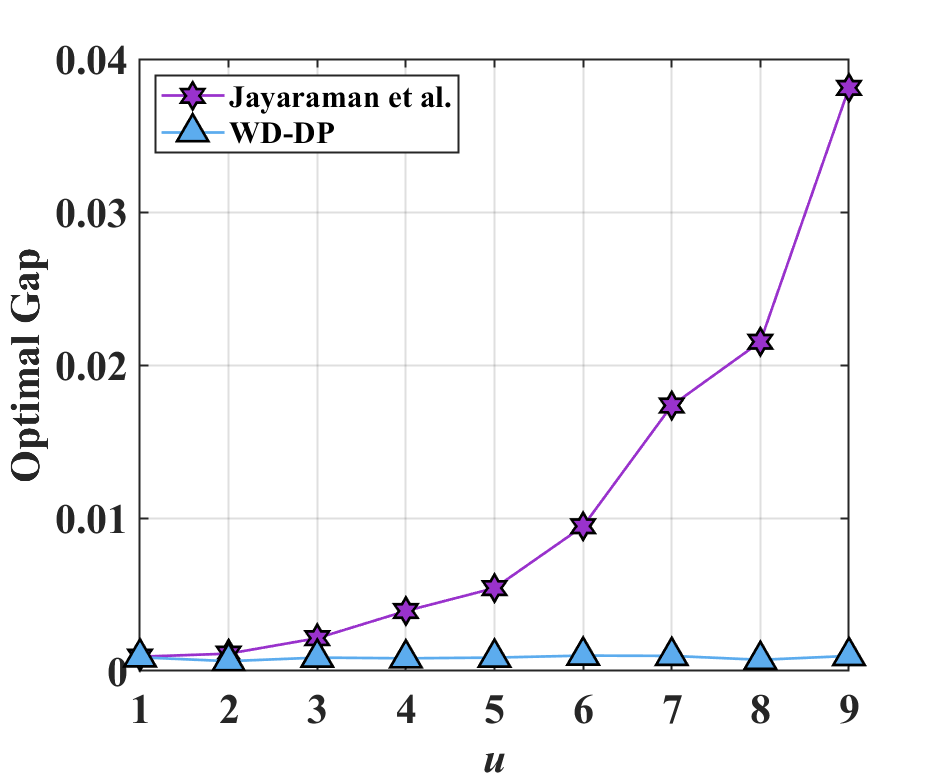}}
\subfigure[Breast Cancer]{\includegraphics[width=0.3\textwidth]{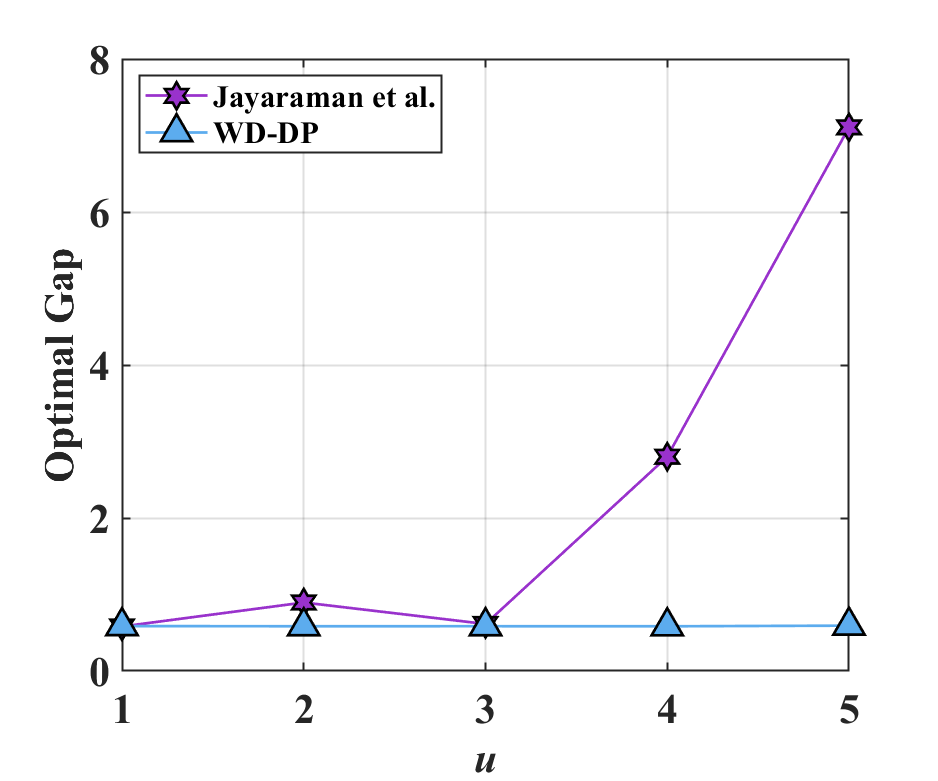}}
\subfigure[Credit Card Fraud]{\includegraphics[width=0.3\textwidth]{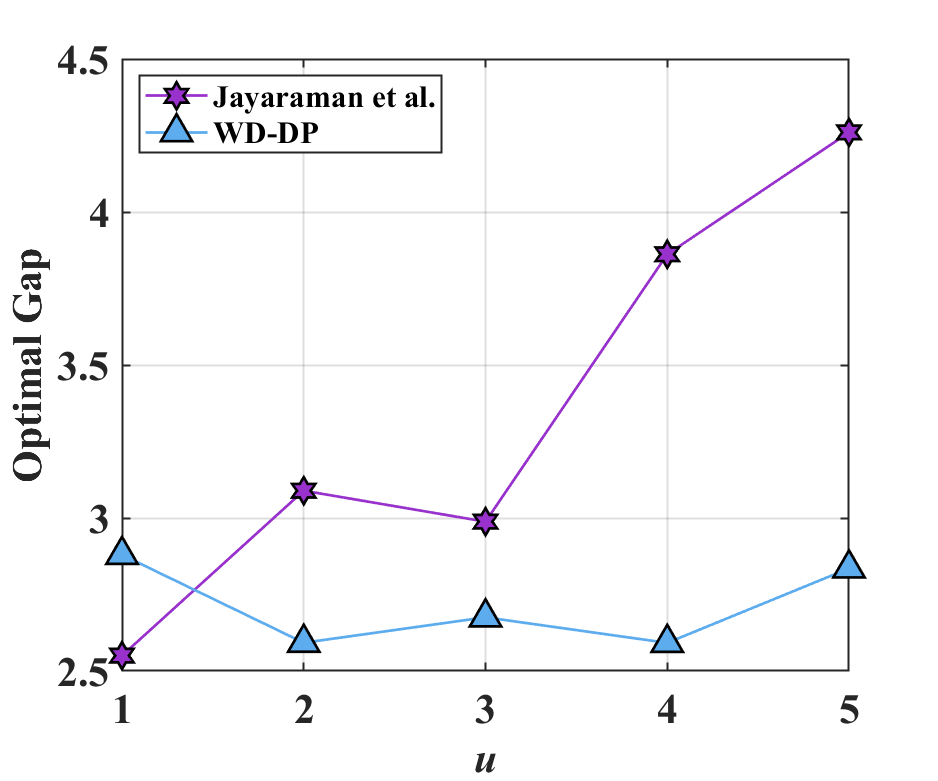}}
\caption{Optimal gap on data sets over the level of non-average $u$, with $m=16$, $\epsilon=0.05$.}
\end{figure*}

\begin{figure*}[htb]
\centering
\subfigure[KDDCup99]{\includegraphics[width=0.3\textwidth]{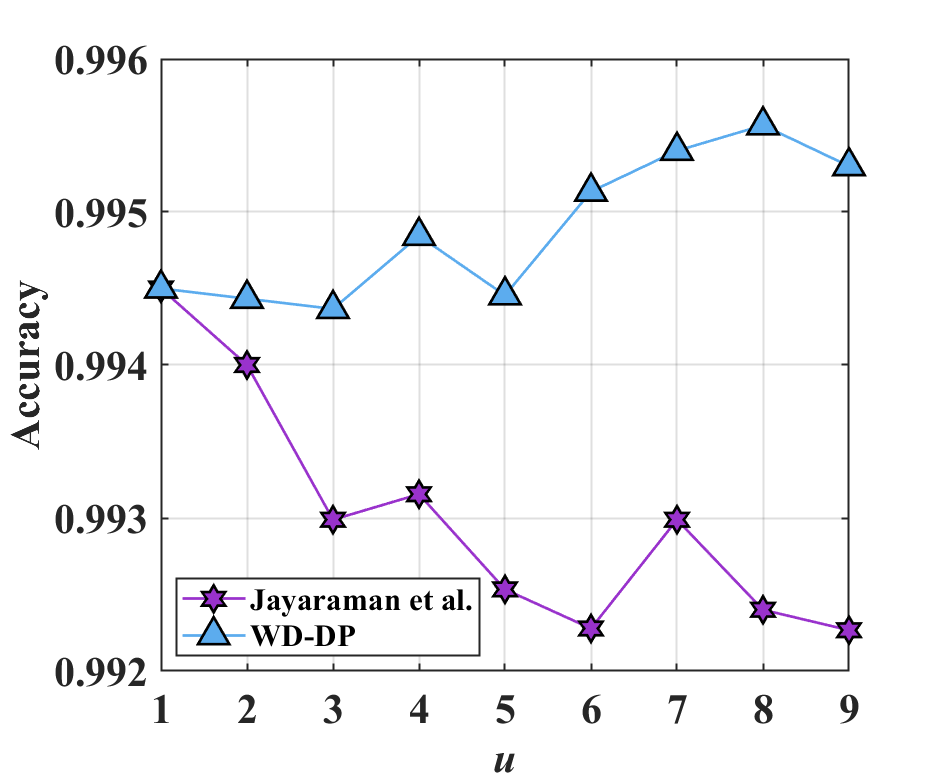}}
\subfigure[Adult]{\includegraphics[width=0.3\textwidth]{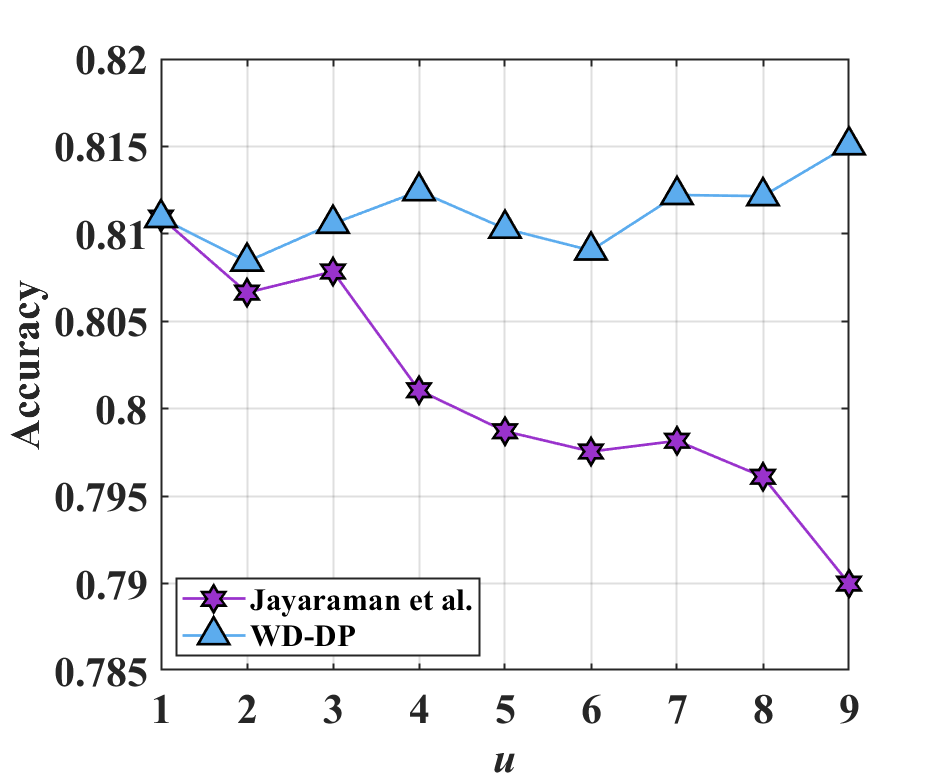}}
\subfigure[Bank]{\includegraphics[width=0.3\textwidth]{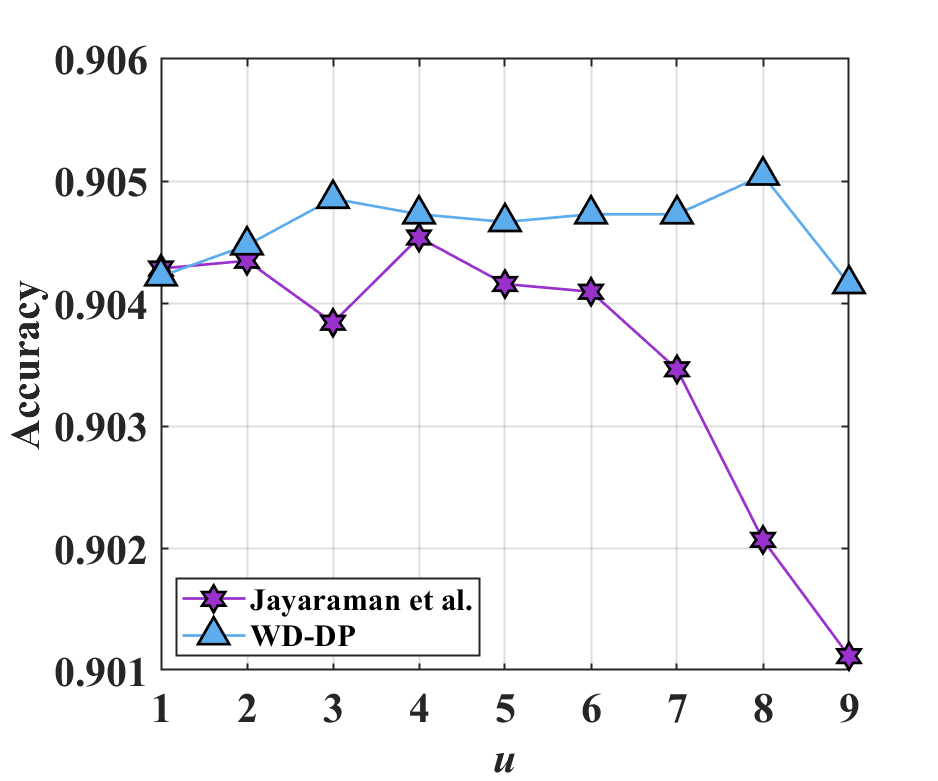}}
\subfigure[Breast Cancer]{\includegraphics[width=0.3\textwidth]{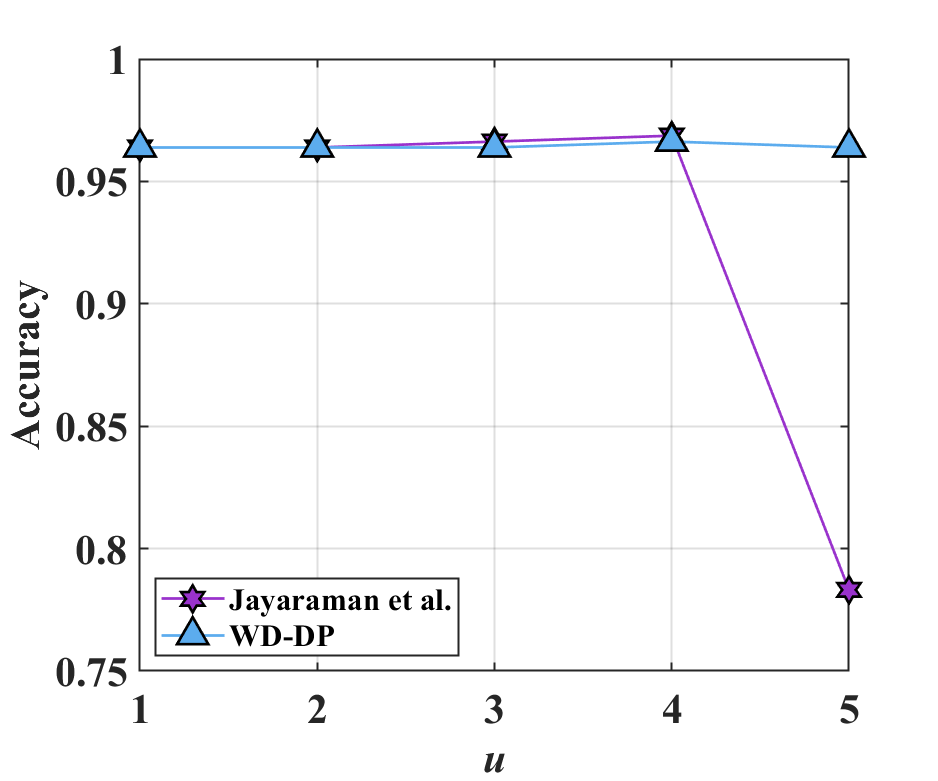}}
\subfigure[Credit Card Fraud]{\includegraphics[width=0.3\textwidth]{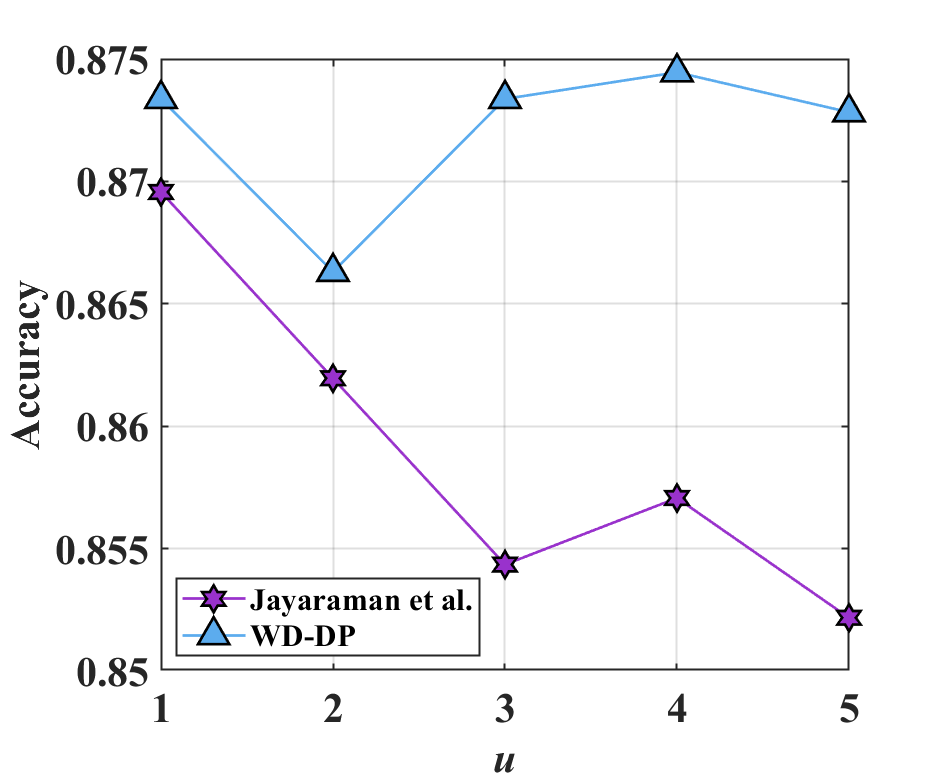}}
\caption{Accuracy on data sets over the level of non-average $u$, with $m=32$, $\epsilon=0.1$.}
\end{figure*}

\begin{figure*}[htb]
\centering
\subfigure[KDDCup99]{\includegraphics[width=0.3\textwidth]{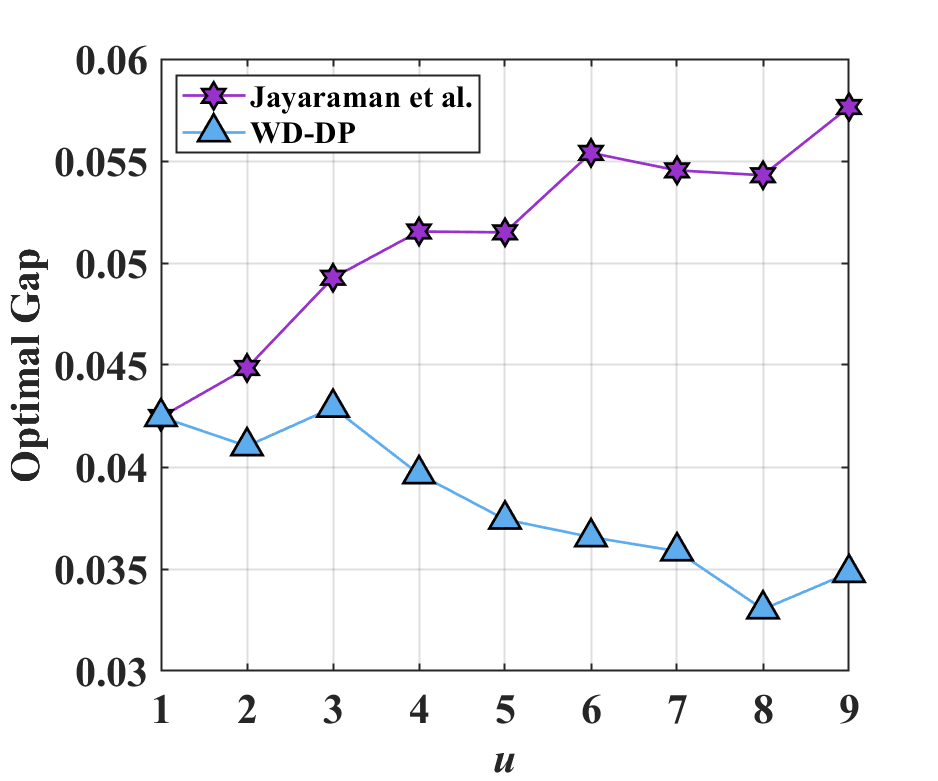}}
\subfigure[Adult]{\includegraphics[width=0.3\textwidth]{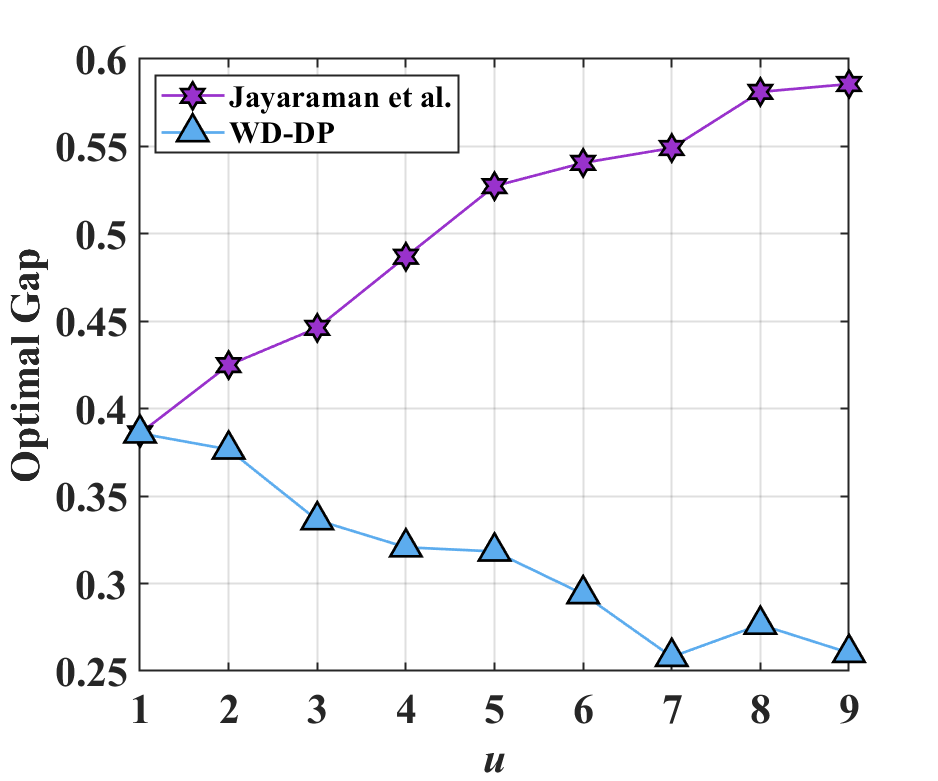}}
\subfigure[Bank]{\includegraphics[width=0.3\textwidth]{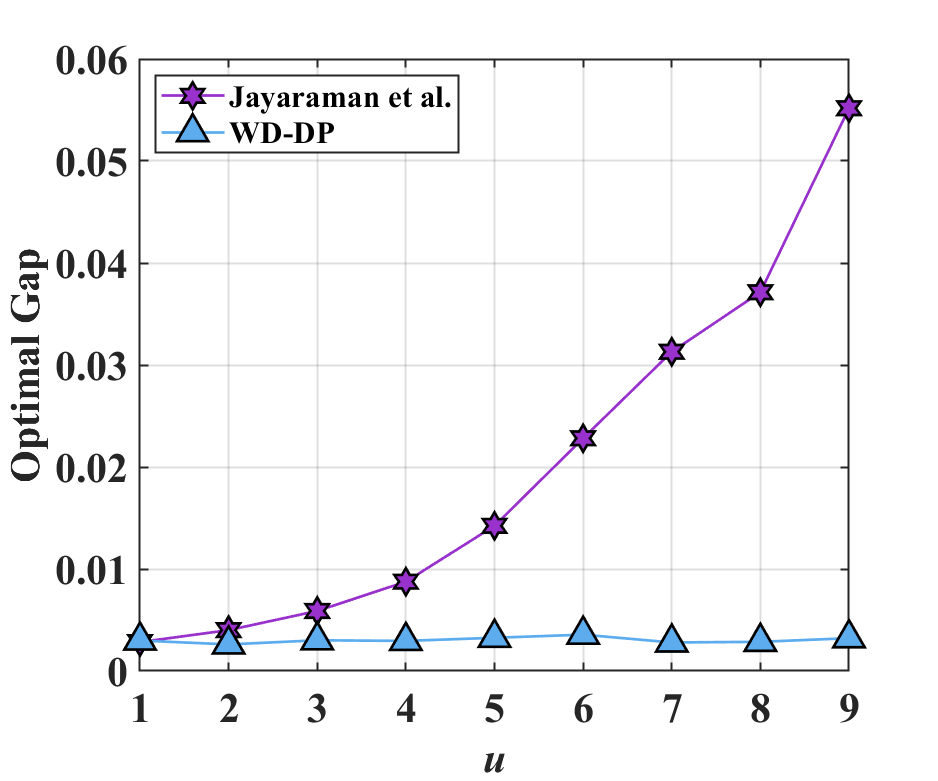}}
\subfigure[Breast Cancer]{\includegraphics[width=0.3\textwidth]{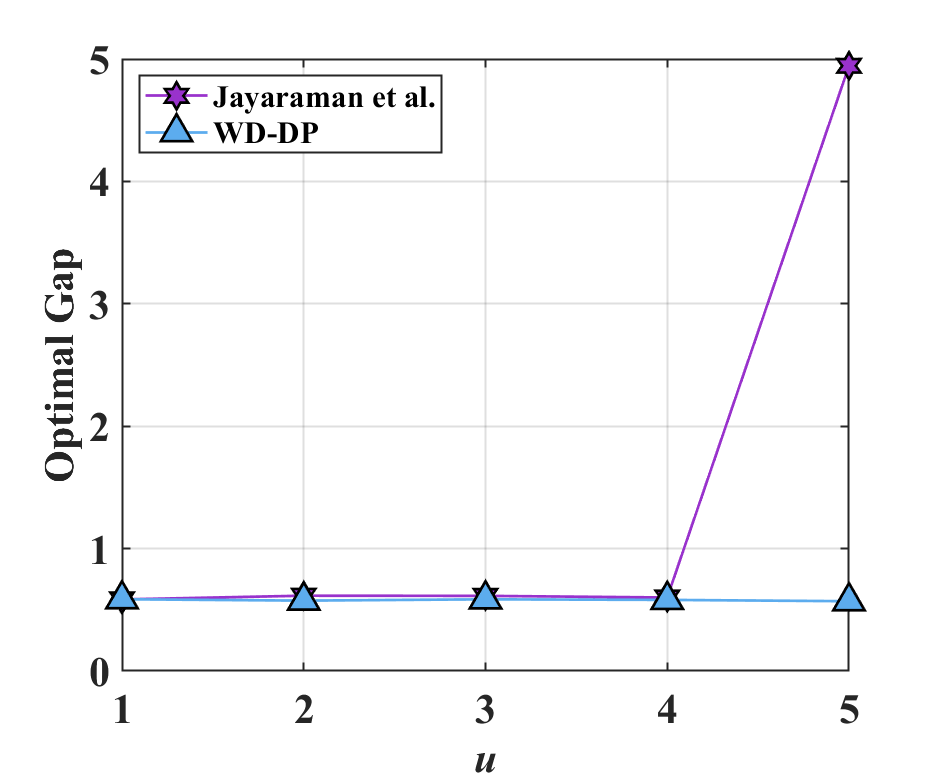}}
\subfigure[Credit Card Fraud]{\includegraphics[width=0.3\textwidth]{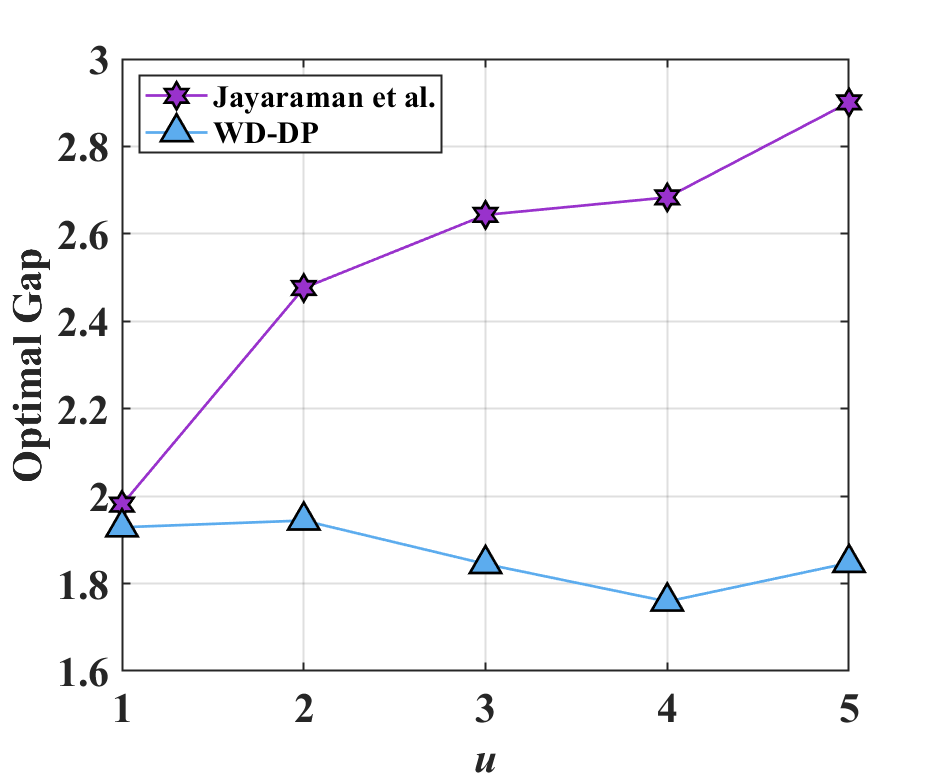}}
\caption{Optimal gap on data sets over the level of non-average $u$, with $m=32$, $\epsilon=0.1$.}
\end{figure*}

\begin{figure*}[htb]
\centering
\subfigure[KDDCup99]{\includegraphics[width=0.3\textwidth]{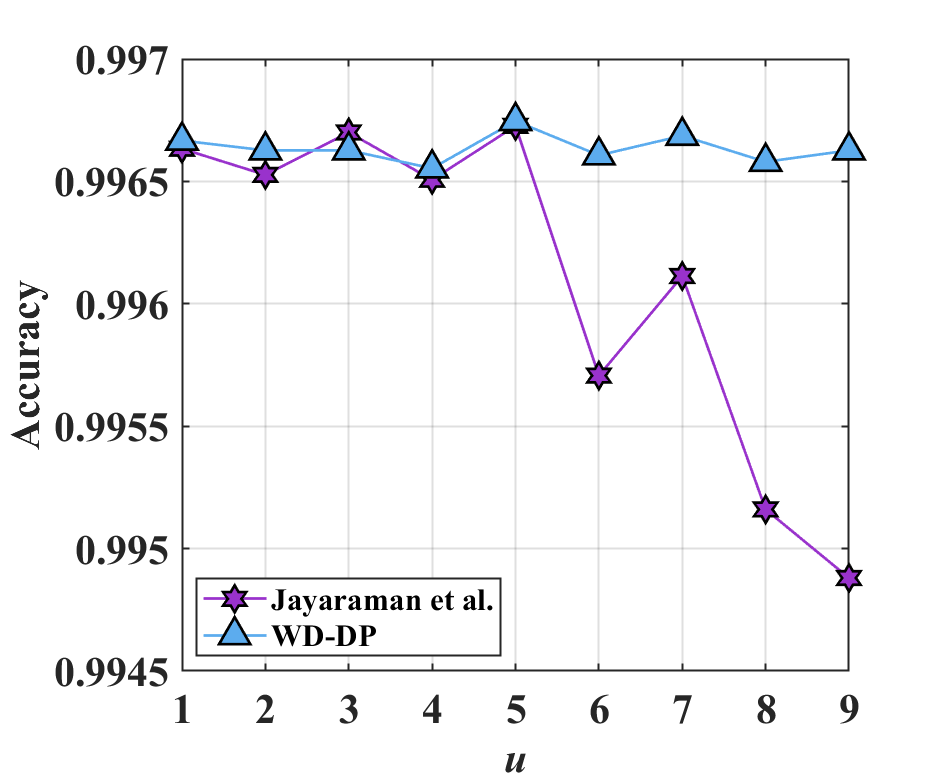}}
\subfigure[Adult]{\includegraphics[width=0.3\textwidth]{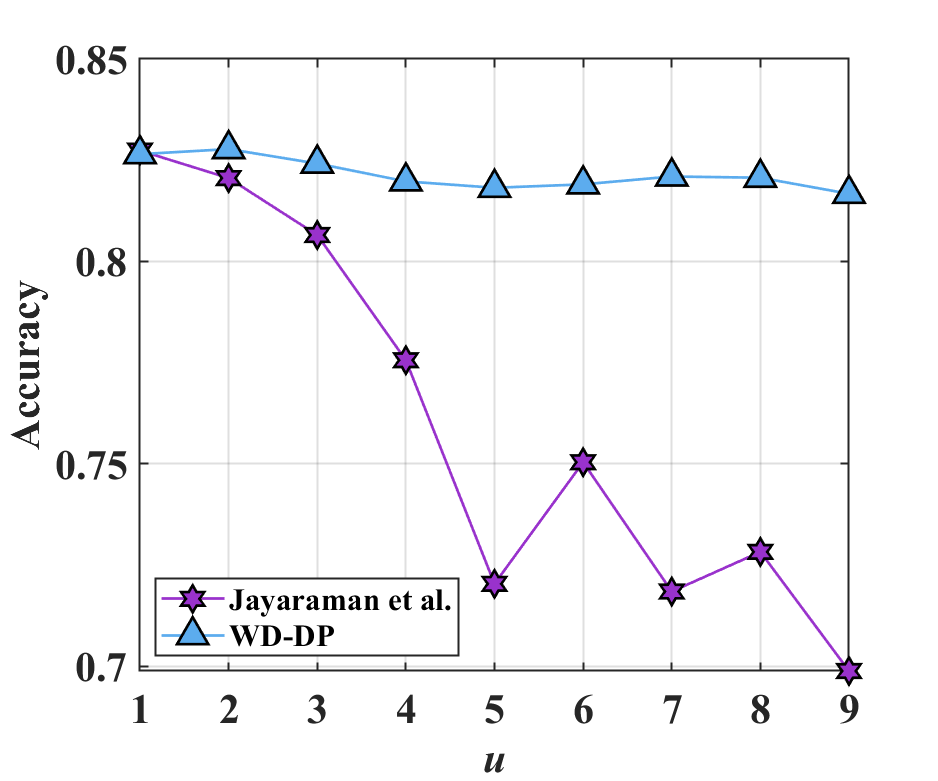}}
\subfigure[Bank]{\includegraphics[width=0.3\textwidth]{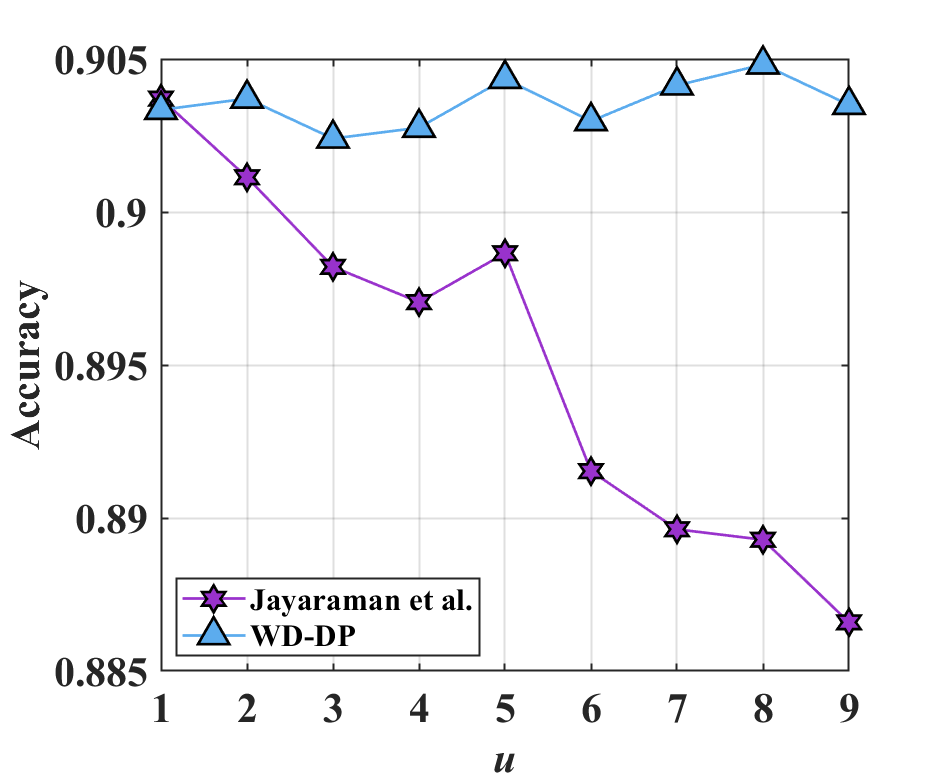}}
\subfigure[Breast Cancer]{\includegraphics[width=0.3\textwidth]{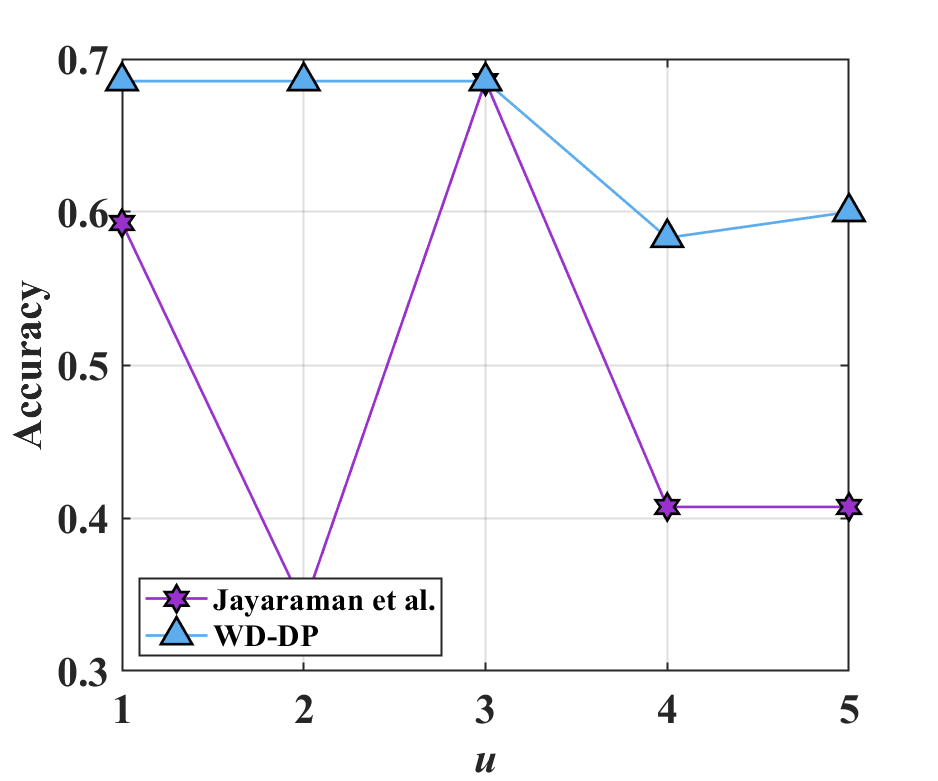}}
\subfigure[Credit Card Fraud]{\includegraphics[width=0.3\textwidth]{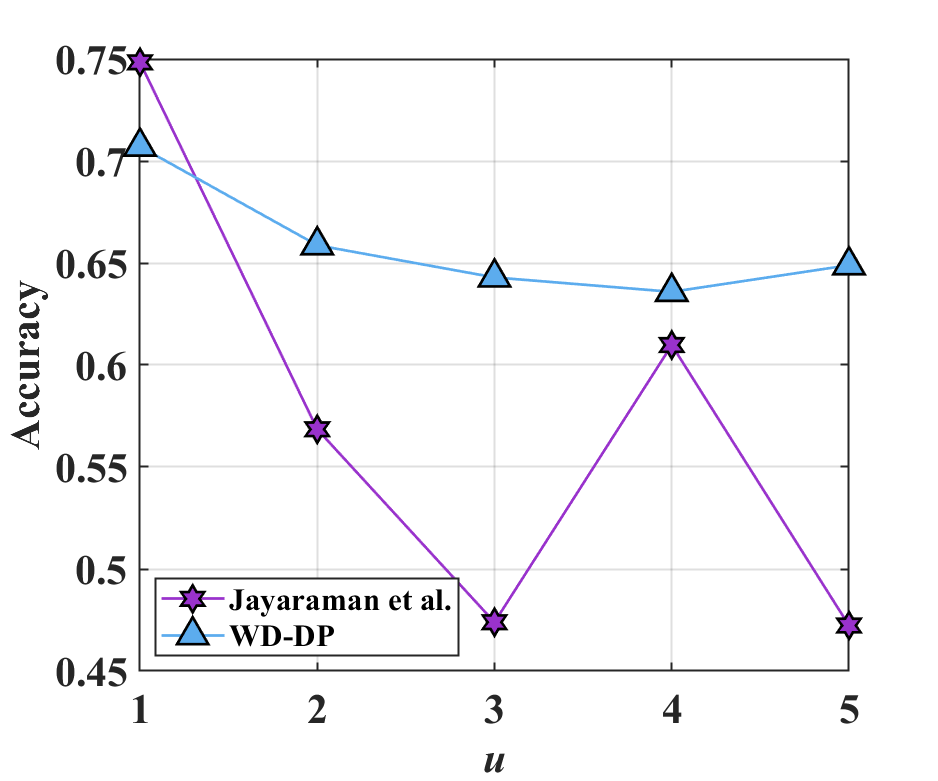}}
\caption{Accuracy on data sets over the level of non-average $u$, with $m=8$, $\epsilon=0.01$.}
\end{figure*}

\begin{figure*}[htb]
\centering
\subfigure[KDDCup99]{\includegraphics[width=0.3\textwidth]{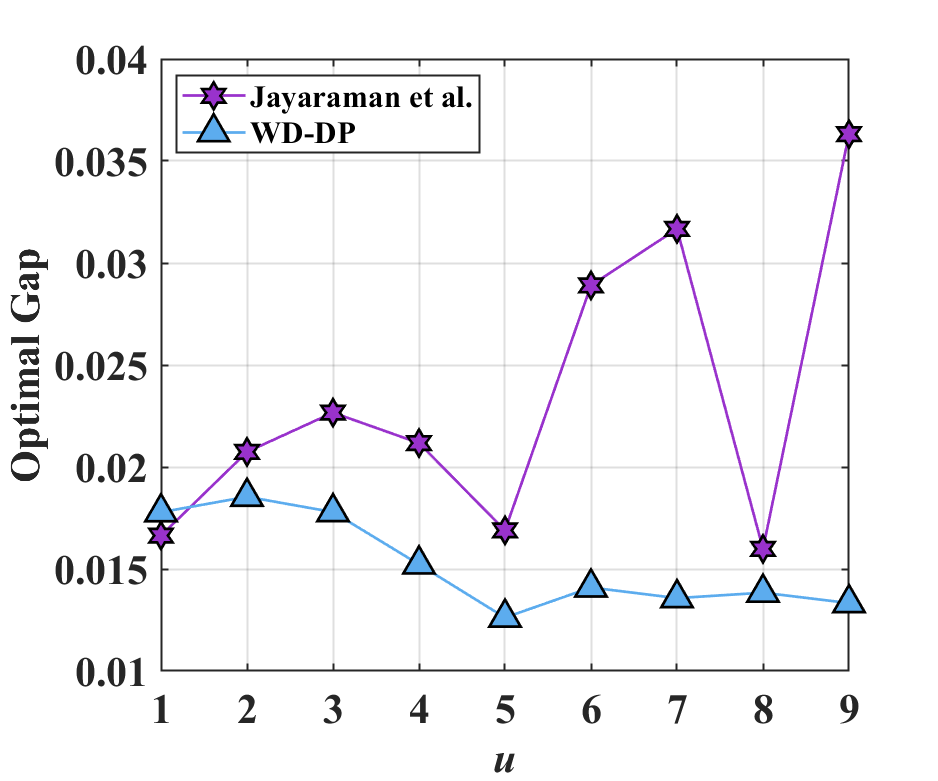}}
\subfigure[Adult]{\includegraphics[width=0.3\textwidth]{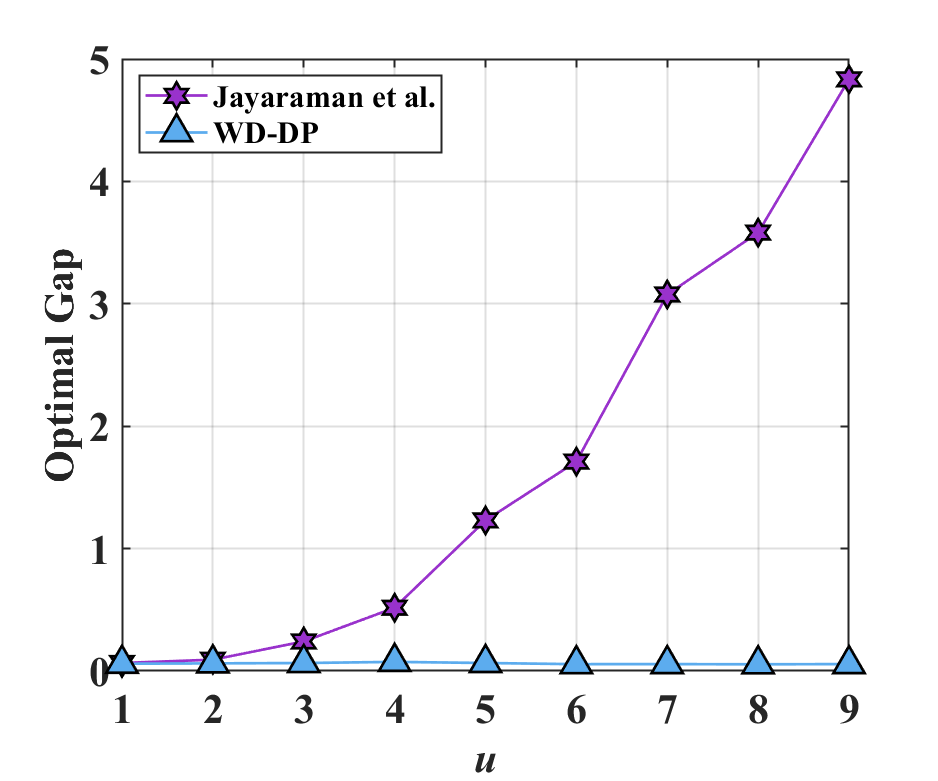}}
\subfigure[Bank]{\includegraphics[width=0.3\textwidth]{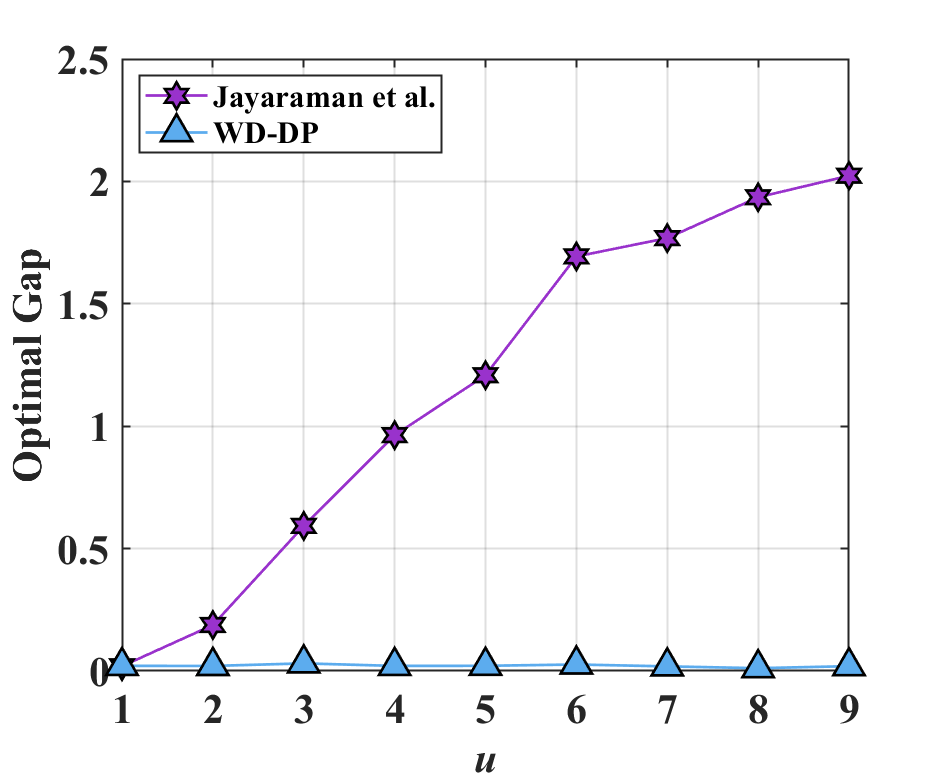}}
\subfigure[Breast Cancer]{\includegraphics[width=0.3\textwidth]{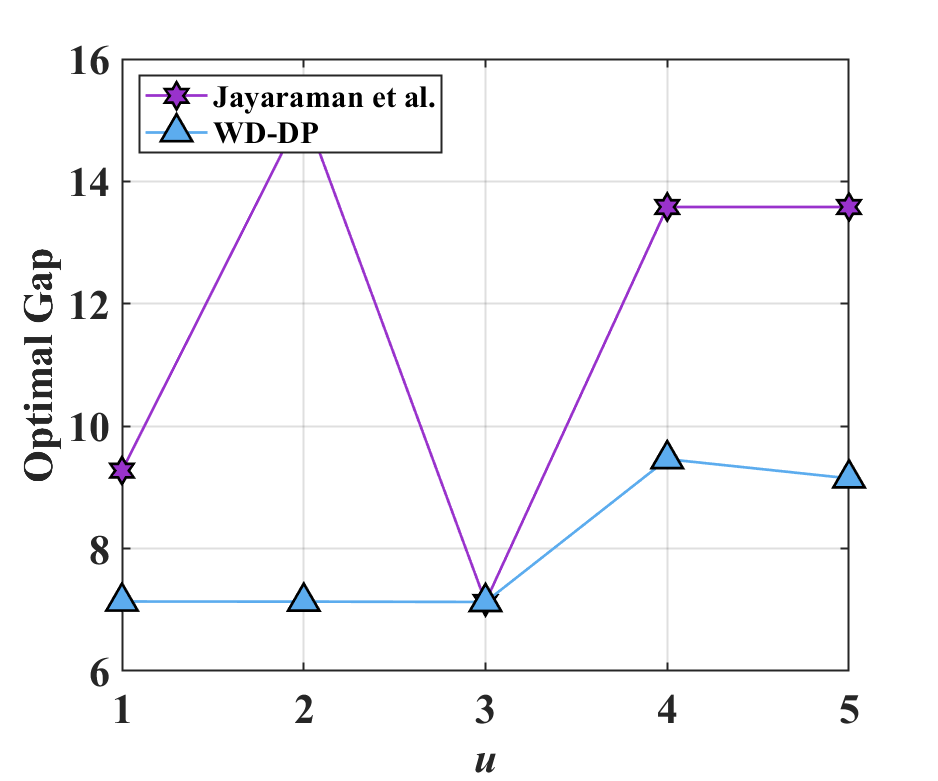}}
\subfigure[Credit Card Fraud]{\includegraphics[width=0.3\textwidth]{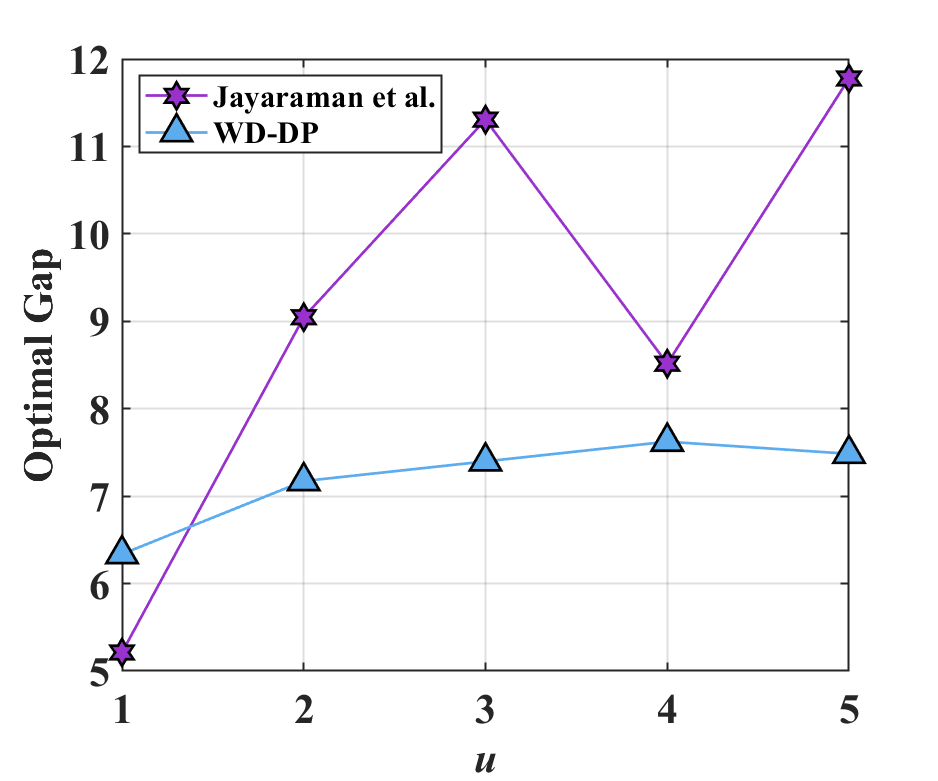}}
\caption{Optimal gap over the level of non-average $u$, with $m=8$, $\epsilon=0.01$.}
\end{figure*}

\end{appendix}

\end{document}